\theoremstyle{plain}
\newtheorem{theorem}{Theorem}[section]
\newtheorem{lemma}[theorem]{Lemma}
\newtheorem{corollary}[theorem]{Corollary}
\theoremstyle{definition}
\newtheorem{definition}[theorem]{Definition}
\theoremstyle{remark}
\newtheorem{remark}[theorem]{Remark}
\newcommand{\wh}{\widehat}
\newcommand{\G}{\mathcal{G}}
\newcommand{\I}{\mathcal{I}}
\newcommand{\X}{\mathcal{X}}
\newcommand{\Y}{\mathcal{Y}}
\newcommand{\Z}{\mathcal{Z}}
\newcommand{\A}{\mathcal{A}}
\newcommand{\V}{\mathcal{V}}
\newcommand{\W}{\mathcal{W}}
\newcommand{\e}{\epsilon}
\newcommand{\PD}{\rm{PD}}
\newcommand{\PM}{\rm{PM}}
\newcommand{\TE}{\rm{TE}}
\newcommand{\E}{\mathcal{E}}
\newcommand{\R}{\mathbb{R}}
\title{TopER: Topological Embeddings in\\ Graph Representation Learning}
\author{%
  Astrit Tola
  \\
  Department of Mathematics\\
  Florida State University\\
  Tallahassee, FL 32306 \\
  \texttt{atola@fsu.edu} \\
     \And
   Funmilola Mary Taiwo \\
   Department of Statistics \\
   University of Manitoba\\
   Winnipeg, Manitoba, Canada \\
   \texttt{taiwom1@myumanitoba.ca} \\
   \AND
   Cuneyt Gurcan Akcora \\
   AI Institute \\
   University of Central Florida\\
   Orlando, FL, 32816 \\
   \texttt{cuneyt.akcora@ucf.edu} \\
   \And
   Baris Coskunuzer \\
   Department of Mathematical Sciences \\
   University of Texas at Dallas\\
   Richardson, TX 75080 \\
   \texttt{coskunuz@utdallas.edu} \\
}
\begin{document}

\maketitle

\begin{abstract}
Graph embeddings play a critical role in graph representation learning, allowing machine learning models to explore and interpret graph-structured data. However, existing methods often rely on opaque, high-dimensional embeddings, limiting interpretability and practical visualization. 

In this work, we introduce Topological Evolution Rate (TopER), a novel, low-dimensional embedding approach grounded in topological data analysis. TopER simplifies a key topological approach, Persistent Homology, by calculating the evolution rate of graph substructures, resulting in intuitive and interpretable visualizations of graph data. This approach not only enhances the exploration of graph datasets but also delivers competitive performance in graph clustering and classification tasks. Our TopER-based models achieve or surpass state-of-the-art results across molecular, biological, and social network datasets in tasks such as classification, clustering, and visualization.
\end{abstract}

\section{Introduction} \label{sec:intro}
Graphs are a fundamental data structure utilized extensively to model complex interactions within various domains, such as social networks~\citep{leskovec2008microscopic}, molecular structures~\citep{you2018graph}, and transportation systems~\citep{duan2022fdsa}. Their inherent flexibility, however, introduces significant challenges when applied to machine learning (ML) tasks, primarily due to their irregular and high-dimensional nature. Graph data lacks inherent ordering and consistent dimensionality, making it challenging for traditional ML methods designed for vector spaces.

Graph Neural Networks (GNNs) have emerged as the state-of-the-art models for tackling graph machine learning tasks due to their ability to learn effectively from graph-structured data~\citep{kipf2016semi}. In the predominant paradigm of message-passing GNNs, the process begins by generating node embeddings~\citep{balcilar2021breaking}. These embeddings can then be used in tasks such as node classification or link prediction. However, for graph-related tasks, such as molecular property prediction, the embeddings must be aggregated through a pooling layer to form graph-level representations~\citep{wieder2020compact}. This method is computationally intensive, mainly because generating and managing node embeddings as intermediate steps substantially increase the overall computational burden. Ideally, an approach would allow for the direct creation of graph embeddings, circumventing the need to generate node-level representations first. Furthermore, these graph embeddings must be both low-dimensional and interpretable to maximize their practical utility and efficiency in various applications.

Topological Data Analysis (TDA) is well-suited for directly constructing graph representations without costly node embeddings. Topology studies the shape of data, and TDA primarily focuses on the qualitative properties of space, such as continuity and connectivity~\citep{coskunuzer2024topological}. A particularly effective technique in TDA is \textit{Persistent Homology (PH)}, which tracks topological features, like connected components and cycles, across various scales via a process known as filtration. Filtration is adept at revealing both local and global structures within graphs. It proves exceptionally useful for comparing graphs of different sizes that maintain the same inherent structure, which may suggest similar properties in graph datasets. For instance, similar substructures in protein interaction networks across different species may indicate comparable biological functions. By focusing on data shape, PH proves invaluable in graph tasks that benefit from a graph-centric approach, offering insights that might not be as apparent when focusing on individual node analysis. This shift from a node-centric to a graph-centric perspective can dramatically improve the understanding and application of graph data in fields like bioinformatics and network analysis. However, the utility of Persistent Homology is limited by the high computational demands involved in extracting topological features during the filtration process, mainly due to its cubic time complexity~\citep{otter2017roadmap}. This constraint reduces its practicality for large-scale graphs and has restricted the broader integration of PH in graph representation learning.

With this work, we take a significant step forward in addressing the challenges of topological graph representation learning and introduce \textit{Topological Evolution Rate (TopER)}. This novel approach refines the Persistent Homology process to efficiently capture graph substructures, thereby mitigating the significant computational demands of calculating complex topological features. As graph representation learning aligns naturally with Topological Data Analysis, \textit{TopER} excels in graph clustering and classification tasks, where it achieves the best rank in experiments. Furthermore, simplifying graph data into a low-dimensional space, \textit{TopER} creates intuitive visualizations that reveal clusters, outliers, and other essential topological features, as demonstrated in~\Cref{fig:2Dintro}. As a result, \textit{TopER} merges interpretability with efficiency in graph representation learning, providing an ideal balance that can scale to large graphs.
To our knowledge, \textit{TopER} is the first topology-based graph representation method that can create low-dimensional, efficient, and scalable graph representations. 

Our contributions can be summarized as follows:

\begin{figure}[t]
    \centering
    \subfigure[\footnotesize 3 DATASETS]{\includegraphics[width=0.33\linewidth]{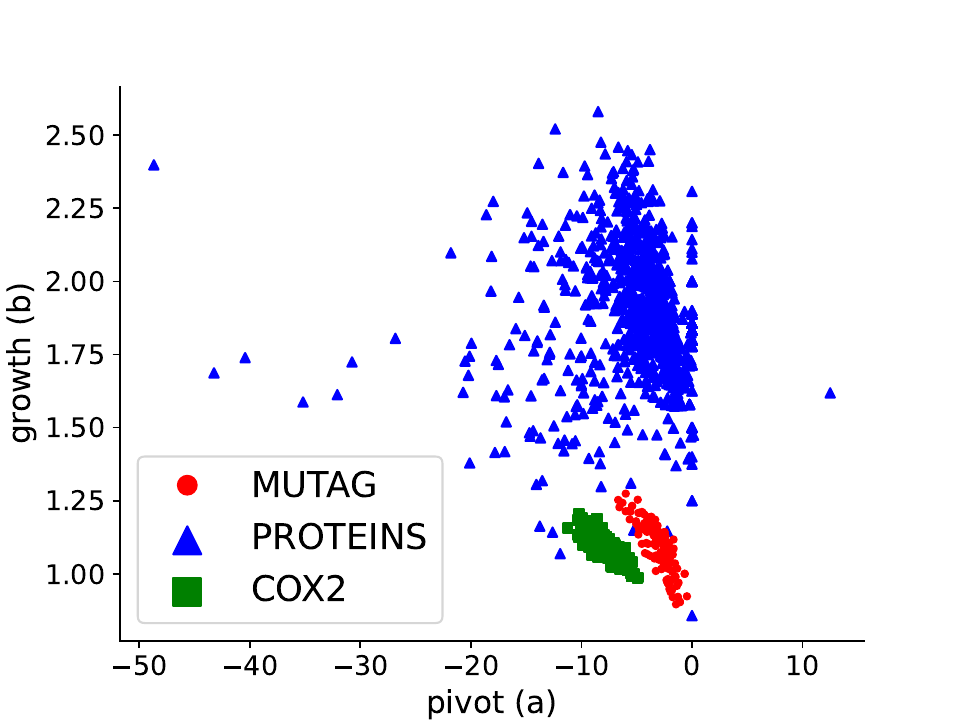}\label{fig:3datasets}}%
    \subfigure[\footnotesize MUTAG]{\includegraphics[width=0.33\linewidth]{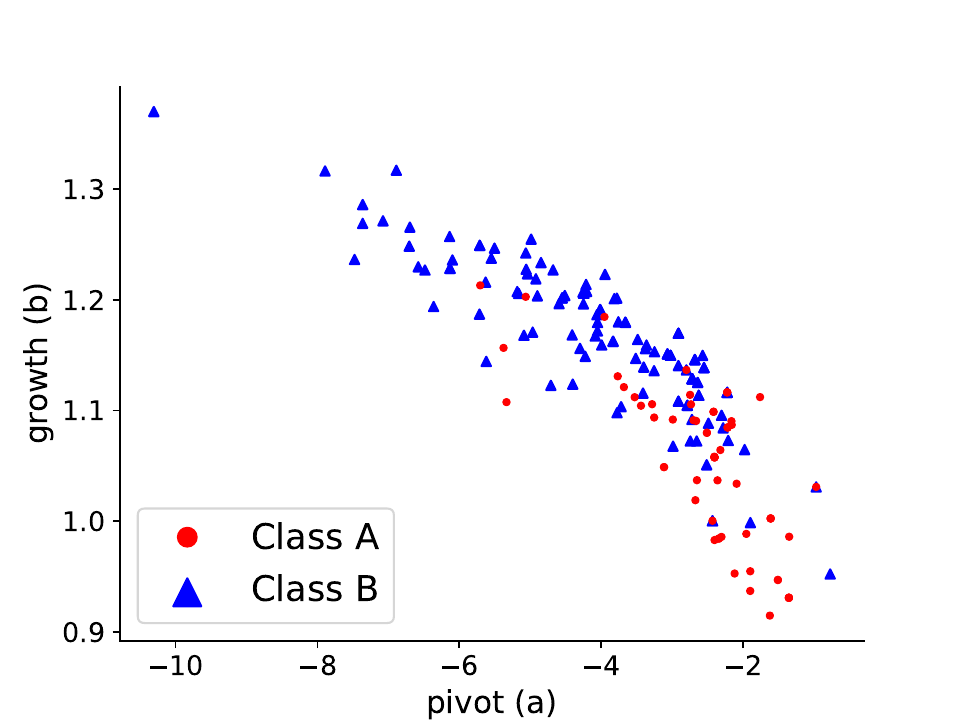}\label{fig:2DMutag}}%
    \subfigure[\footnotesize IMDB-B]{\includegraphics[width=0.33\linewidth]{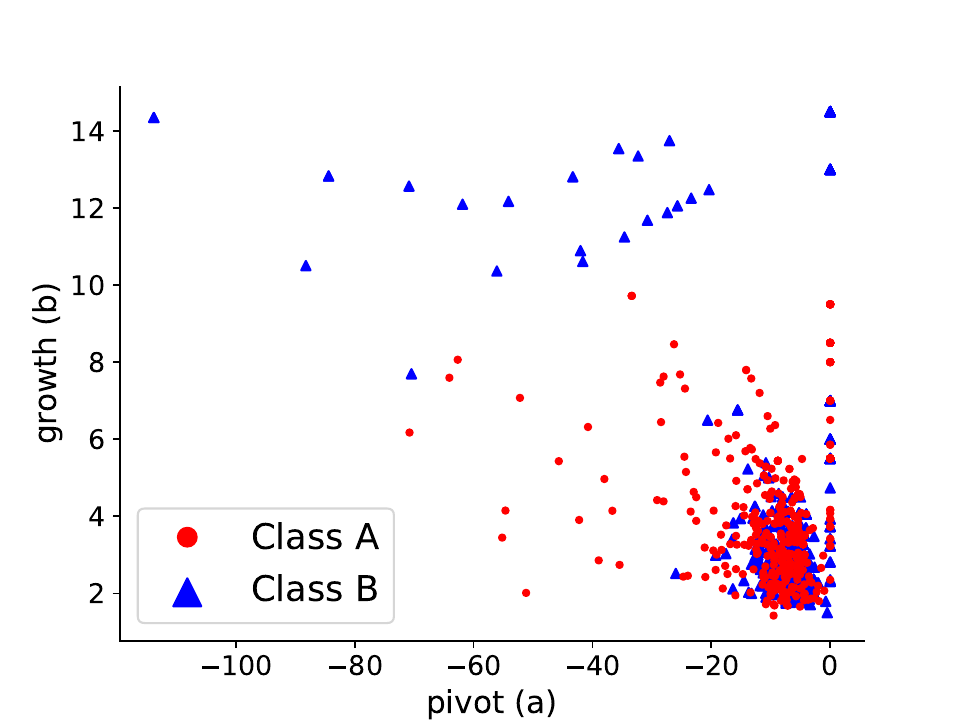}\label{fig:2DIMDBBin}}%
    \caption{\footnotesize {\bf TopER Visualizations.} Each data point represents an individual graph. On the left, TopER is applied to three benchmark compound datasets using closeness sublevel filtration. The middle panel zooms in on the red point cloud from the left, demonstrating TopER's effectiveness in distinguishing between classes within the MUTAG dataset. On the right, a TopER visualization for the IMDB-B dataset is displayed.}
    \label{fig:2Dintro}
    \vspace{-.2in}
\end{figure}

\begin{itemize}[left=0pt]
\item \textbf{New Graph Representation:} We introduce \textit{TopER}, a principled and computationally feasible graph representation designed to capture the structural evolution of a graph.
\item \textbf{Topology-Inspired Design:} Rather than computing persistent diagrams or Betti numbers, \textit{TopER} directly leverages the filtration process to provide efficient, low-dimensional summaries rooted in topological intuition.

\item \textbf{Competitive Performance:} Experiments on benchmark graph classification and clustering tasks demonstrate that \textit{TopER} achieves competitive results compared to more complex, state-of-the-art models, while offering superior interpretability.

\item \textbf{Interpretable Visualizations:} \textit{TopER} produces low-dimensional outputs that support intuitive visualization of clusters and structural outliers within and across graph datasets, enabling a form of visual model comparison often lacking in traditional embeddings.

\item \textbf{Robustness Under Perturbations:} We provide a stability analysis showing that \textit{TopER}'s representations are consistent under small changes to the filtration function, reinforcing its suitability for practical and comparative applications.
 \end{itemize}

\section{Background} \label{sec:background}

\subsection{Related Work} \label{sec:related}
\noindent {\bf Graph-level Embedding Methods.}   
Graph representation learning, including GNNs and Graph Pooling techniques, is a dynamic subfield of machine learning, focusing on transforming graph data into efficient, low-dimensional vector representations that encapsulate essential features of the data~\citep{hamilton2020graph,gao2019optimized}. These representations facilitate a deeper analytical understanding of graphs, which is critical for various applications such as molecular graph property prediction~\citep{dong2019learning}.
GNNs have revolutionized the analysis of graph data, drawing parallels with the success of Convolutional Neural Networks in image processing~\citep{errica2020fair}. GNNs utilize spectral and spatial approaches to graph convolutions based on the graph Laplacian and direct graph convolutions, respectively~\citep{bruna2013spectral,defferrard2016convolutional,kipf2016semi}. Despite their success, GNNs often suffer from issues like over-smoothing and a lack of transparency, making them less ideal for applications requiring interpretability~\citep{gunnemann2022graph}.

Graph Pooling emerged as a key component in GNN architectures, drawing parallels to the role of pooling in Convolutional Neural Networks ~\citep{errica2020fair}. Pooling aims to deduce into meaningful graph embeddings through node aggregation (mean, max, and add pooling ~\citep{xu2018powerful}) or hierarchical pooling (node selection: Top-k ~\citep{topk} and SAGPool ~\citep{sagpool}; node clustering: DiffPool ~\citep{diffpool} and MinCutPool ~\citep{mincut}). Both pooling groups have their challenges. Node aggregation methods may lose the structural information by treating each node equally, while the hierarchical pooling methods can be computationally heavy, and loss of information can occur if important nodes are discarded.

\noindent {\bf Topology-inspired Representations.} TDA provides a robust and computationally efficient framework to address the interpretability and over-smoothing issues present in GNNs~\citep{aktas2019persistence}. Persistent Homology, a key technique in TDA, has been applied successfully to graph data, demonstrating potential to match or even exceed the performance of traditional methods in classification and clustering tasks~\citep{hensel2021survey,demir2022todd,hiraoka2024topological,immonen2024going,chen2024emp,loiseaux2024stable}. However, the computational intensity of PH limits its scalability~\citep{hofer2019learning,zhao2020persistence,akcorareduction}.

\noindent {\bf Visualization Techniques.} Graph embedding techniques, including spectral methods, random walk-based approaches, and deep learning-based models, transform graph data into vector representations to support tasks like visualization and machine learning~\citep{cai2018comprehensive,goyal2018graph,xu2021understanding}. Approaches such as Laplacian Eigenmaps and DeepWalk have been particularly effective in revealing clusters within graphs~\citep{belkin2001laplacian,perozzi2014deepwalk}. However, these methods are predominantly applied to visualize a \textit{single graph} in node classification tasks, focusing on cluster identification~\citep{wang2016structural,mavromatis2020graph,tsitsulin2023graph}. Furthermore, they often overlook domain-specific information, which can limit their effectiveness in more specialized applications~\citep{jin2020spectral}.

\textit{TopER} addresses these challenges by combining the interpretative benefits of TDA with the analytical strength of modern graph machine learning. Distinct from current approaches, TopER employs a simplified filtration process to create embeddings that are both interpretable and computationally efficient. By extending the filtration to multiple functions, TopER stands out as one of the first methods to offer effective and interpretable visualizations of graph datasets, while also achieving superior performance in clustering and classification tasks.

\subsection{Persistent Homology for Graphs} \label{sec:PH}

Topological Data Analysis (TDA) offers a powerful framework for graph representation learning~\citep{aktas2019persistence}, with persistent homology (PH) being especially effective at capturing multi-scale topological features~\citep{coskunuzer2024topological}. While PH typically involves filtrations, persistence diagrams, and vectorization, our model focuses on the filtration step, reformulating the evolution of topological features in a novel and efficient way.

In the crucial filtration step, PH decomposes a graph $\mathcal{G}$ into a nested sequence of subgraphs $\mathcal{G}_1 \subseteq \mathcal{G}_2 \subseteq \ldots \subseteq \mathcal{G}_n=\mathcal{G}$. For each $\mathcal{G}_i$, an abstract simplicial complex $\widehat{\mathcal{G}}_i$ is defined, forming a filtration of simplicial complexes. Clique complexes are typical choices, where each $(k+1)$-complete subgraph in $\mathcal{G}$ corresponds to a $k$-simplex~\citep{aktas2019persistence}.

\noindent\textbf{Filtration.} Utilizing relevant filtration functions is essential to obtain effective filtrations. For a given graph \(\mathcal{G} = (\mathcal{V}, \mathcal{E})\), a common approach is to define a node filtration function \(f: \mathcal{V} \to \mathbb{R}\), which establishes a hierarchy among the nodes. By selecting a monotone increasing set of thresholds \(\mathcal{I} = \{\e_i\}_{i=1}^n\), this method generates subgraphs \(\mathcal{G}_i=(\V_i,\mathcal{E}_i)\) where $\V_i=\{v\in\V\mid f(v)\leq \e_i\}$ and $\mathcal{E}_i$ is the set of edges in $\mathcal{E}$ with endpoints in $\V_i$. This is called a sublevel filtration induced by \(f\) (See \Cref{fig:filtration}). Also, superlevel filtrations can be constructed by defining $\V_i=\{v\in\V\mid f(v)\geq \e_i\}$ for decreasing thresholds~\citep{aktas2019persistence}. 

Similarly, one can use edge filtration functions $g:\E\to \R$ to define such a filtration. We define $\mathcal{E}_i=\{e_{jk}\in\mathcal{E}\mid g(e_{jk})\leq \e_i\}$, and $\V_i$ as all the endpoints of $\mathcal{E}_i$ to create a nested sequence $\{\G_i\}_{i=1}^n$. Especially for weighted graphs, this method is highly preferable as weights naturally define an edge filtration function. The common node filtration functions are degree, betweenness, centrality, heat kernel signatures~\citep{bronstein2010scale}, and node functions coming from the domain of the datasets (e.g., atomic number for molecular graphs). Common edge filtration functions are Ollivier and Forman Ricci curvatures~\citep{lin2011ricci} and edge weights (e.g., transaction amounts for financial networks).

\begin{wrapfigure}{r}{3in}
\vspace{-.15in}
\centering
\includegraphics[width=.9\linewidth]{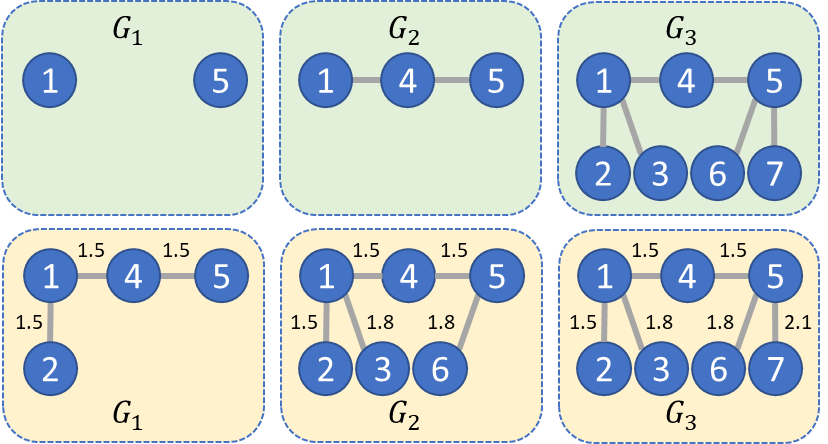}
\caption{\footnotesize \textbf{Filtration.} For $\mathcal{G}=\mathcal{G}_3$ in both examples, the top figure illustrates superlevel filtration with node degree function for thresholds $\{1,2,3\}$. Similarly, the bottom figure illustrates sublevel filtration for edge weights with thresholds $\{1.5,1.8,2.1\}$. \label{fig:filtration}} 
\vspace{-.3in}
\end{wrapfigure}
In addition to existing approaches, we introduce a new filtration function, \textit{Popularity}, which extends the idea of degree-based ranking~\citep{newman2003structure}. While the degree function measures the number of direct neighbors a node has, \textit{Popularity} captures the average degree of those neighbors. The underlying intuition is that, whereas degree reflects how many connections a node has, \textit{Popularity} emphasizes node influence through association with high-degree nodes.

Formally, for each node \( v \), we define popularity as:  
$\mathcal{P}(v) = \deg(v) + \frac{\sum_{u \in \mathcal{N}(v)} \deg(u)}{|\mathcal{N}(v)|}$ where \( \deg(v) \) is the node’s degree and \( \mathcal{N}(v) \) is its set of neighbors. This function incorporates 2-neighborhood information by weighting high-degree neighbors more heavily, making it a refined version of the degree function.

\vspace{-.1in}

\section{TopER: Topological Evolution Rate} \label{sec:main}
TopER is inspired by the foundational idea that topology can capture the shape of graph data, and that this shape can be observed through the evolution of a graph during the filtration process. In this sense, TopER is topology-inspired. However, TopER diverges from traditional PH in how it tracks the shape. We first achieve a computationally efficient alternative to persistent homology by simplifying its filtration-based perspective, and second, we develop a low-dimensional, interpretable representation of graphs that enables both effective classification and intuitive visualization across graph datasets.

Our reformulation reduces the computational overhead typically required for topological feature extraction. Unlike Persistent Homology, which extracts costly topological features, TopER summarizes the filtration process through two key parameters derived via regression: \textit{filtration sequences} and \textit{evolution}.

\noindent\textbf{Filtration sequences.} We first decompose a graph $\mathcal{G}$ into a nested sequence of subgraphs (filtration graphs) $\mathcal{G}_1 \subseteq \G_2 \ldots \subseteq \mathcal{G}_n=\mathcal{G}$ by using a filtration function, such as node degree or closeness. Let $\mathcal{G}_i \subset \G$, $\mathcal{V}_i$ represent nodes in $\mathcal{G}_i$ and $\mathcal{E}_i$ represent the edges. Next, we compute $x_i=|\mathcal{V}_i|$ as the count of nodes, and $y_i=|\mathcal{E}_i|$ as the count of edges. Then, for each filtration graph $\G_i$, we obtain the pair $(x_i,y_i)\in \R^2$, which creates two monotone sequences, $x_1\leq x_2\leq \dots \leq x_n$ and $y_1\leq y_2\leq \dots \leq y_n$.  Hence, TopER yields two ordered sets $\X,\Y$ describing the evolution of the filtration graphs $\mathcal{G}_1 \subseteq \ldots \subseteq \mathcal{G}_n=\mathcal{G}$, $\X=(x_1, x_2, \dots, x_n)$ and $\Y=(y_1, y_2, \dots,y_n)$.  Here, $n$ corresponds to number of thresholds $\{\e_i\}_{i=1}^n$ used in the filtration step. 
Consider the top row of~\Cref{fig:filtration} where we have three filtration graphs (i.e., $n$=3); we have $\X=(2,3,7)$ for node counts and $\Y=(0,2,6)$ for edge counts.  

\begin{wrapfigure}{r}{3in}
\vspace{-.3in}
    \centering
    \includegraphics[width=\linewidth]{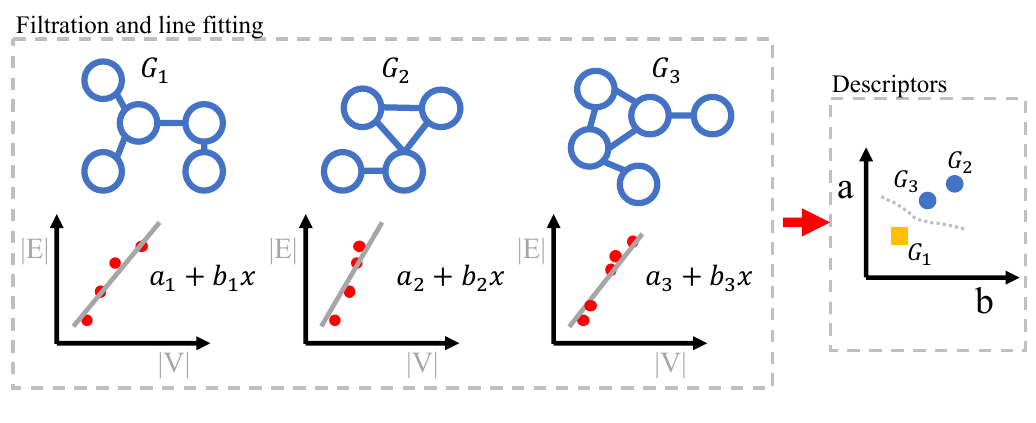}
    \caption{\footnotesize \textit{TopER steps.} The filtration process on three different graphs using node or edge filtration. The graphs undergo filtration, and for each graph, a best-fit line is determined through the filtration data. The coefficients of these best-fit lines are then used as descriptors for the graphs.   \label{fig:toperprocess} }
  \vspace{-.15in}
\end{wrapfigure}

\noindent\textbf{Evolution.}  In the next step, PH would typically compute topological features on each filtration and create a persistence diagram to summarize the features. Not only is it costly, but the approach would require efforts to vectorize the persistence diagrams. We circumvent this computationally costly step and analyze how the number of edges $\{y_i\}$ relates to the number of nodes $\{x_i\}$ throughout the filtration sequence. We use line fitting to characterize this relationship as follows.
Simple linear regression, often applied through the least squares method~\citep{isl}, is a standard approach in regression analysis for fitting a linear equation to a set of data points $\{(x_i, y_i)\} \subset \R^2$. This method calculates the line $L(x) = a + bx$ that best fits the data by minimizing the loss function $\mathrm{E} = \sum^{N}_{i=1}[L(x_i) - y_i]^2$. The regression coefficients $(a, b)$ capture the graph’s structural evolution through filtration (see descriptor step in \Cref{fig:toperprocess}). The full TopER method is outlined in \Cref{alg:toper} in the Appendix.

With evolution on filtration sequences, we define the topological evolution rate of a graph as follows:

\begin{definition} [Topological Evolution Rate (TopER)] \label{def:relgrowth} Let $f:\mathcal{V} \to \R$ be a filtration function on graph $\mathcal{G}$ and $\mathcal{I}=\{\e_i\}_{i=1}^n$ be the threshold set. Let $\mathcal{G}_i=(\mathcal{V}_i,\mathcal{E}_i)$ be the induced filtration. Let $x_i=|\mathcal{V}_i|$ and $y_i=|\mathcal{E}_i|$. Let $L(x)=a+bx$ be the best fitting line to $\{(x_i,y_i)\}_{i=1}^n$. 
Then, we define the \textit{TopER vector} of $\mathcal{G}$ wrt.  $f$  as $\TE_f(\mathcal{G},\mathcal{I})=( a , b)$.  We call  $a$ the {\em pivot} and $b$ the {\em growth} of $\mathcal{G}$.
\end{definition}

\begin{remark}[Why a Linear Fit?] Although one could consider higher-order polynomials, our experiments show that coefficients beyond the first degree are negligible~(\Cref{tab:coefficients}), revealing an essentially linear relationship. Due to space constraints, we defer a more detailed comparison of linear, polynomial, and higher-order fits to \Cref{sec:refine}, where we also provide visual examples of evolution rates (\Cref{fig:linearfit}) and fundamental pattern types (\Cref{fig:pivottypes}).    
\end{remark}

\begin{remark}[On the Name \textit{TopER}]
Although \textit{TopER} does not compute classical topological invariants such as persistence diagrams or Betti numbers, its name is grounded in two topology-inspired principles. First, it mirrors the TDA notion of a \emph{filtration}, a nested sequence of subgraphs \(\{\G_i\}\) that progressively unveils structural features across scales. Second, it is directly tied to the evolution of the Euler characteristic $\chi(\G_i) \;=\; |\V_i| - |\E_i| \;=\; \beta_0(\G_i) - \beta_1(\G_i),$ where $\beta_i$ denotes the $i$th Betti number.  
Equivalently, one could track \(\{(x_i,y_i - x_i)\} = \{(|\V_i|,\chi(\G_i))\}\), but we use \(\{(x_i,y_i)\}\) for notational simplicity. The resulting slope thus quantifies how topological complexity changes over the filtration, justifying the term \emph{Topological Evolution Rate}.
\end{remark}

We emphasize that the term “topological” reflects the conceptual roots of TopER in topological data analysis: the method uses a filtration to reveal structural patterns in the graph, analogous to the filtration process in persistent homology. While we do not compute full persistence diagrams or Betti numbers, the slope of the Euler characteristic across the filtration captures essential topological information, justifying the terminology.

\subsection{Computational Complexity}

The primary computational steps in TopER include constructing filtration graphs and performing regression on node and edge counts, which incur the following costs. 

Analyzing each node and edge across $n$ filtration thresholds typically requires $O(n\times(|\V| + |\mathcal{E}|))$ operations, where $\V$ and $\mathcal{E}$ denote the numbers of vertices and edges, respectively. The regression step involves fitting a line to the pairs $(x_i, y_i)$ using the least squares method. The complexity of calculating the necessary sums for this regression is $O(n)$, and solving for the regression coefficients (slope and intercept) from these sums involves a constant amount of additional computation.

Thus, the overall complexity of TopER predominantly hinges on the graph filtration process, summing up to $O(n\times(|\V| + |\mathcal{E}|))$ where $|\V| \gg n$.  As we will show in the next section, the runtime costs of TopER are notably low, making it practical and efficient for large-scale applications. 

\subsection{Stability Results} \label{sec:stability}

This section states our theorems on the stability of TopER. In the following,  $\W_p(.,.)$ represents $p$-Wasserstein distance, and $\PD_k(\X,f)$ represents $k^{th}$ persistence diagram of $\X$ with sublevel filtration with respect to $f$. Similarly, $\|.\|_p$ represents $L^p$-norm and $d_p(.,.)$ represents $l_p$-distance in $\R^m$. We fix a threshold set $\I=\{\e_i\}_{i=1}^n$ for both functions to keep the exposition simple. Further, to keep the setting general, we use the pairs $\{(\beta_0(\e_i),\beta_1(\e_i))\}_{i=1}^n$ in $\R^2$ to fit the least squares line $y=a+bx$ defining $\TE(\X)=(a,b)$. 

\begin{theorem} \label{thm} Let $\X$ be a compact metric space, and $f,g:\X\to\R$ be two filtration functions. Then, for some $C>0$,
$$\|\TE_f(\X)-\TE_g(\X)\|_1\leq C\cdot \W_1(\PD_k(\X,f),\PD_k(\X,g)).$$
\end{theorem}

By combining the above result with the stability result for sublevel filtrations, we obtain the stability with respect to filtration functions as follows.

\begin{corollary} \label{cor} Let $\X$ be a compact metric space, and $f,g:\X\to\R$ be two filtration functions. Then, for some $C>0$,
$$\|\TE_f(\X)-\TE_g(\X)\|_1\leq C\cdot \|f-g\|_1$$
\end{corollary}

The following lemmas are essential for proving the theorem and corollary above.
\begin{lemma} \citep{skraba2020wasserstein} \label{lem:filtration} Let $\X$ be a compact metric space, and $f,g:\X\to\R$ be two filtration functions. Then, for any $p\geq 1$, we have 
$\W_p(\PD_k(\X,f),\PD_k(\X,g))\leq \|f-g\|_p$
\end{lemma}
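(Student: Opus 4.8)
The plan is to recognize this as the $p$-Wasserstein stability theorem for persistence diagrams of \cite{skraba2020wasserstein}, and to recall its proof in the form most convenient for our applications, where $\X$ is the geometric realization of a finite simplicial complex — in our case the clique complex of a finite graph — so that every persistence module in sight is finite and no $q$-tameness subtleties intervene. First I would reduce to the combinatorial picture: a filtration function $f$ restricts to a monotone function on the simplices of a fixed complex $K$ with $\PD_k(\X,f)=\PD_k(K,f)$ (and likewise for $g$), and, under the paper's normalization, $\|f-g\|_p$ controls the simplexwise quantity $\big(\sum_{\sigma\in K}|f(\sigma)-g(\sigma)|^p\big)^{1/p}$; so it suffices to bound $\W_p(\PD_k(K,f),\PD_k(K,g))$ by the latter. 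For a genuinely infinite compact metric space one then passes to the limit along finer triangulations (or Rips complexes), using continuity of both sides in $f$; I would only sketch this, as it is routine and is carried out in \cite{skraba2020wasserstein}.

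Next I would set up the interpolation (``vineyard'') argument. Put $f_t=(1-t)f+tg$ for $t\in[0,1]$; each $f_t$ is monotone on $K$, and the order of the simplex-values $\{f_t(\sigma)\}_{\sigma\in K}$ changes only at finitely many event times $0=t_0<t_1<\dots<t_N=1$. On each open interval $(t_{j-1},t_j)$ the birth–death pairing of simplices produced by the reduction algorithm is constant, so every off-diagonal point of $\PD_k(K,f_t)$ has the form $(f_t(\sigma),f_t(\tau))$ for a fixed pair $(\sigma,\tau)$ and hence traces an affine path (a ``vine''), while points are created on, or absorbed into, the diagonal only at the event times. Concatenating the ``follow-the-point'' correspondences over the intervals yields a partial matching $\mu$ between $\PD_k(K,f)$ and $\PD_k(K,g)$.

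To estimate $\operatorname{cost}_p(\mu)$ I would bound, for each vine, the displacement of its birth- and death-coordinates by the total variation of $t\mapsto f_t(\sigma)$, which equals $|g(\sigma)-f(\sigma)|$, and then sum the $p$-th powers; since at any time each simplex is the birth- or death-simplex of at most one vine, each $|f(\sigma)-g(\sigma)|^p$ is charged at most once, giving $\operatorname{cost}_p(\mu)^p\le\sum_{\sigma\in K}|f(\sigma)-g(\sigma)|^p\le\|f-g\|_p^p$, after which $\W_p\le\operatorname{cost}_p(\mu)$ finishes the argument. The genuinely delicate point — the one that distinguishes this from the classical bottleneck ($\W_\infty$) stability of Cohen-Steiner--Edelsbrunner--Harer, where the change of pairing is invisible — is the bookkeeping at the event times: one must verify that when two vines swap pairing, or one vine hits the diagonal and another leaves it, the spliced matching does not overcharge any simplex. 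I expect this to be the main obstacle, and would handle it either by quoting the corresponding transposition lemma of \cite{skraba2020wasserstein} directly, or, in the finite case relevant here, by an induction on the number of adjacent transpositions needed to pass from the $f$-order to the $g$-order, checking that each transposition perturbs the diagram by a $\W_p$-amount whose $p$-th power telescopes into $\sum_{\sigma}|f(\sigma)-g(\sigma)|^p$.
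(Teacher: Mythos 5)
The paper does not prove this lemma at all: it is quoted verbatim from \cite{skraba2020wasserstein} and used as a black box in the proof of the corollary. Your reconstruction follows the actual argument of that reference faithfully --- reduce to a finite (clique) complex, interpolate $f_t=(1-t)f+tg$, track the vines between finitely many event times, and charge each simplex's affine displacement so that the $p$-th powers telescope --- and you correctly isolate the one genuinely delicate step, the transposition bookkeeping when pairings swap, which is exactly the content of the transposition lemma in the cited paper; deferring to it is appropriate for a quoted result. The only point worth tightening is your reduction of $\|f-g\|_p$ to the simplexwise quantity $\bigl(\sum_{\sigma}|f(\sigma)-g(\sigma)|^p\bigr)^{1/p}$: the cited theorem is stated with the $\ell^p$ norm over cells, and for a general compact metric space with a non-atomic reference measure the inequality in the direction you need would fail, so the lemma as written in the paper only makes literal sense under that cellular normalization (a looseness in the paper's statement, not in your argument, but it should be said explicitly rather than attributed to ``the paper's normalization'').
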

The next lemma is on the stability of Betti curves by~\citep{dlotko2023euler} [Proposition 1].   
\begin{lemma} \citep{dlotko2023euler} \label{lem:betti} Let $\beta_k(\X)$ is the $k^{th}$ Betti function obtained from the persistence module $\PM_k(\X)$.
$$\|\beta_k(\X)-\beta_k(\Y)\|_1\leq 2\W_1(\PD_k(\X),\PD_k(\Y))$$
\end{lemma}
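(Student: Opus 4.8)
The plan is to exploit the fact that the Betti function is \emph{additive} over the persistence diagram and then to push an optimal partial matching realizing $\W_1$ through that additive structure. Write $I(b,d)=[b,d)$ for the bar associated to a point $(b,d)$ of a diagram. Since $\beta_k(\X)(t)$ counts exactly the bars alive at parameter $t$, we have the pointwise identity $\beta_k(\X)=\sum_{(b,d)\in\PD_k(\X)}\mathbf{1}_{I(b,d)}$ as functions on $\R$, and likewise for $\Y$; in particular $\|\mathbf{1}_{I(b,d)}\|_1=d-b$ is the persistence of $(b,d)$. We assume, as holds for finite complexes and tame filtrations of compact spaces, that the diagrams are finite with all bars of finite length, so that $\beta_k$ is integrable and an optimal matching exists; otherwise both sides of the inequality are generically infinite.

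First I would fix an optimal partial matching $\gamma$ between $\PD_k(\X)$ and $\PD_k(\Y)$ — a bijection of multisets between $\PD_k(\X)\cup\Delta$ and $\PD_k(\Y)\cup\Delta$, where $\Delta$ denotes the diagonal taken with infinite multiplicity — realizing $\sum_x\|x-\gamma(x)\|_\infty=\W_1(\PD_k(\X),\PD_k(\Y))$, with the $\ell^\infty$ ground metric and the convention that the distance from $(b,d)$ to $\Delta$ is $(d-b)/2$. Splitting $\PD_k(\X)$ into points matched to $\PD_k(\Y)$ and points matched to $\Delta$ (and symmetrically for $\PD_k(\Y)$), and setting $\mathbf{1}_{I(\delta)}:=0$ for a diagonal ``point'' $\delta$, I can rewrite
$$\beta_k(\X)-\beta_k(\Y)=\sum_x\bigl(\mathbf{1}_{I(x)}-\mathbf{1}_{I(\gamma(x))}\bigr),$$
the sum ranging over the matched pairs of $\gamma$.

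The next step is to estimate each summand in $L^1$. For a pair of off-diagonal points $x=(b_1,d_1)$, $y=(b_2,d_2)$, I would do the elementary case analysis on whether the half-open intervals $[b_1,d_1)$ and $[b_2,d_2)$ overlap: if they overlap, the Lebesgue measure of their symmetric difference is exactly $|b_1-b_2|+|d_1-d_2|$; if they are disjoint, it equals $(d_1-b_1)+(d_2-b_2)$, which is still at most $|b_1-b_2|+|d_1-d_2|$ since each bar length is dominated by the corresponding endpoint gap. Hence $\|\mathbf{1}_{I(x)}-\mathbf{1}_{I(y)}\|_1\le|b_1-b_2|+|d_1-d_2|\le 2\|x-y\|_\infty$. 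For a point $x=(b,d)$ matched to the diagonal, the summand is $\mathbf{1}_{I(x)}$ with $\|\mathbf{1}_{I(x)}\|_1=d-b=2\cdot\tfrac{d-b}{2}=2\|x-\gamma(x)\|_\infty$. Applying the triangle inequality for the $L^1$ norm to the displayed decomposition and summing these bounds yields
$$\|\beta_k(\X)-\beta_k(\Y)\|_1\le\sum_x 2\|x-\gamma(x)\|_\infty=2\,\W_1(\PD_k(\X),\PD_k(\Y)),$$
which is the claim.

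I do not expect a deep obstacle here; the result is a direct consequence of additivity together with an optimal matching. The two points demanding care are: (i) the disjoint-interval case in the symmetric-difference estimate, where one must observe that a short bar contributes less than the gap between its endpoints and the endpoints it is matched to; and (ii) the bookkeeping for points matched to the diagonal, so that the rewriting of $\beta_k(\X)-\beta_k(\Y)$ as a sum over matched pairs is exact and the Wasserstein cost of the diagonal matches is accounted with the factor $2$ coming from the $\ell^\infty$-distance-to-diagonal convention. It is worth flagging that the constant $2$ is tied to using the $\ell^\infty$ ground metric in the definition of $\W_1$; with the $\ell^1$ ground metric the same computation gives constant $1$.
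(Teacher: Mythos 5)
Your proof is correct, and there is no gap: the interval-indicator decomposition $\beta_k=\sum\mathbf{1}_{[b,d)}$, the push of an optimal matching through that sum, the symmetric-difference estimate $\|\mathbf{1}_{I(x)}-\mathbf{1}_{I(y)}\|_1\le|b_1-b_2|+|d_1-d_2|\le 2\|x-y\|_\infty$ (including the disjoint case), and the factor-$2$ accounting for points matched to the diagonal are all sound. Note that the paper does not prove this lemma at all --- it imports it as Proposition 1 of \cite{dlotko2023euler} --- and your argument is essentially the standard proof given in that reference, so you have correctly reconstructed the cited result rather than diverged from the paper.
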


By adapting the above results to the graph setting, when two metric graphs $\G_1,\G_2$ are close in the Gromov-Hausdorff sense, one can obtain a similar stability result for the filtrations of $\G_i$ induced by the same filtration function. Due to space limitations, details and the proofs are given in \Cref{sec:theorems}.

\section{Experiments} \label{sec:Exps}
We evaluate the performance of TopER in classification, clustering and visualization. Our Python implementation is available at 
\url{https://github.com/AstritTola/TopER}.

\subsection{Experimental Setup}

\noindent \textbf{Datasets.}  We conduct experiments on nine benchmark datasets for graph classification. These 
 \begin{wraptable}{r}{2.5in}
 \vspace{-.1in}
\caption{Characteristics of the benchmark graph classification datasets. \label{tab:dataStats}}
\centering
\resizebox{\linewidth}{!}{
\begin{tabular}{lcccc}
\toprule
{\bf Datasets}& {\bf \#Graphs}& {\bf $\left|\V\right|$}& {\bf $\left|\mathcal{E}\right|$}& {\bf Classes}\\ 
\midrule
 BZR & 405 & 35.75 & 38.36 & 2 \\
COX2 & 467 & 41.22 & 43.45  & 2 \\
MUTAG & 188 & 17.93 & 19.79 & 2 \\
PROTEINS & 1113 & 39.06 & 72.82 & 2 \\
IMDB-B & 1000 & 19.77 & 96.53 & 2 \\
IMDB-M & 1500 & 13.00 & 65.94 & 3 \\
REDDIT-B & 2000 & 429.63 & 497.75 &  2\\
REDDIT-5K & 4999 & 508.52 & 594.87 & 5 \\
OGBG-MOLHIV & 41127 & 243.4 & 2266.1 & 2 \\
 \bottomrule
\end{tabular}}
 \vspace{-.15in}
\end{wraptable}
are (i) the molecule graphs of BZR, and COX2~\citep{mahe2009graph}; (ii) the biological graphs of MUTAG and PROTEINS~\citep{kriege2012subgraph}; and (iii) the social graphs of IMDB-Binary (IMDB-B), IMDB-Multi (IMDB-M), REDDIT-Binary (REDDIT-B), and REDDIT-Multi-5K (REDDIT-5K)~\citep{yanardag2015deep}.  Finally, the OGBG-MOLHIV is a large molecular property prediction dataset, part of the open graph benchmark (OGB) datasets~\citep{hu2020open}. Data statistics are given in~\Cref{tab:dataStats}.

\noindent\textbf{Hardware.}  We ran experiments on a single machine with 12th Generation Intel Core i7-1270P vPro Processor (E-cores up to 3.50 GHz, P-cores up to 4.80 GHz), and 32GB of RAM (LPDDR5-6400MHz).

\noindent\textbf{Runtime.}   
TopER is highly scalable and can be applied to a 100K node graph in less than 2 minutes (see~\Cref{fig:scalability}). Our small network experiments took about two days in a shared resource setting, whereas the OGBG-MOLHIV experiments took 7.85 hours. One of the most demanding datasets, REDDIT-5K, requires $2.91$ hours to calculate all node and edge functions. The runtime of our methods is dominated by the computation of node functions such as closeness and Riccis blue(see~\Cref{sec:TopERruntimes}). Using approximate values for centrality metrics instead could greatly decrease computation time~\citep{brandes2007centrality}. Since this is not our current focus, we leave it as future work. 

\noindent\textbf{Model Setup and Metrics.}   
We employ a rigorous experimental setup to ensure a fair comparison and the selection of the best graph classification model. We begin by applying  BatchNormalization to the input features to maintain consistent scaling. We employ a 90/10 train-test split, adopt the StratifiedkFold strategy, and present the average accuracy from ten-fold cross-validation across all our models. We employ accuracy as the evaluation metric, a widely utilized performance measure within graph classification tasks~\citep{errica2020fair}.

\noindent\textbf{Filtration functions.}  In TopER, we use both node and edge filtrations~(\Cref{def:relgrowth}). Alongside popularity, we apply degree, closeness, and degree centrality~\citep{evans-2022} as node filtration functions and Forman- and Ollivier-Ricci functions~\citep{lin2011ricci} as edge filtration functions. We also use atomic weight as a node function for molecular and biological datasets (BZR, COX2, and MUTAG), and node attributes (PROTEINS). We utilized the t-test to assess the statistical significance of each function and applied the Lasso method for regularization. Functions were retained in the model only if they achieved p-values less than 0.05 in the t-test and had non-zero coefficients in the Lasso model~\citep{isl}. This approach ensures that the selected filtration functions contribute statistically significant and regularized features to the model. Incorporating additional filtration functions can enhance TopER's ability to analyze graphs from diverse perspectives. However, as we will next illustrate in Table~\ref{tab:ablation}, TopER demonstrates strong performance even in its most basic form using the simple and scalable node degree function. This balance of performance and simplicity suits our scalability philosophy; we avoid complex and costly schemes for learning dataset-specific activation functions and homogenize the filtration step in all datasets.

\noindent\textbf{Classifier.} We utilize a Multilayer Perceptron (MLP) in our graph classification task. The hyperparameters are detailed in \Cref{sec:hyperparameters}.

\subsection{Graph Classification Results}
\label{sec:classifResults}

\begin{table*}[t]
\caption{\footnotesize \textbf{Graph Classification.} Accuracy results on eight benchmark datasets. Best results are in \textcolor{blue}{\textbf{bold blue}}, second-best are \textcolor{blue}{\underline{underlined}}. The final column shows each model’s average deviation from the best per dataset.\label{tab:classResults}} 
\centering
\resizebox{\linewidth}{!}{
\setlength\tabcolsep{4pt}
\begin{tabular}{lcccccccc|r}
\toprule
\textbf{{Model}} &\textbf{{BZR}} & \textbf{{COX2}} & \textbf{{MUTAG}} & \textbf{{PROTEINS}} &\textbf{{IMDB-B}}&\textbf{{IMDB-M}}&\textbf{{REDDIT-B}}&\textbf{{REDDIT-5K}} & \textbf{Avg.$\downarrow$} \\
\midrule
DiffPool~\citep{diffpool}& 83.93{\scriptsize $\pm$4.41} & 79.66{\scriptsize $\pm$2.64}& 79.22{\scriptsize $\pm$1.02}& 73.63{\scriptsize $\pm$3.60}& 68.60{\scriptsize $\pm$3.10}& 45.70{\scriptsize $\pm$3.40}& 79.00{\scriptsize $\pm$1.10}& -- & 8.06\\

P-WL-C~\citep{rieck2019persistent} & -- & -- & 90.51{\scriptsize $\pm$1.34} & 75.27{\scriptsize $\pm$0.38} & -- & -- & -- & -- & 2.08\\

SAGPool~\citep{sagpool}& 82.95{\scriptsize $\pm$4.95}& 79.45{\scriptsize $\pm$2.98}& 76.78{\scriptsize $\pm$2.12}& 71.86{\scriptsize $\pm$0.97}& 74.87{\scriptsize $\pm$4.09}& 49.33{\scriptsize $\pm$4.90}& 84.70{\scriptsize $\pm$4.40}& --& 6.61\\

Top-k~\citep{topk}& 79.40{\scriptsize $\pm$1.20}& 80.30{\scriptsize $\pm$4.21}& 67.61{\scriptsize $\pm$3.36}& 69.60{\scriptsize $\pm$3.50}& 73.17{\scriptsize $\pm$4.84}& 48.80{\scriptsize $\pm$3.19}& 79.40{\scriptsize $\pm$7.40}& -- &9.70\\

1-GIN (GFL)~\citep{hofer2020graph} & -- & -- & -- & 74.10{\scriptsize $\pm$3.40} & 74.50{\scriptsize $\pm$4.60} & 49.70{\scriptsize $\pm$2.90} & 90.20{\scriptsize $\pm$2.8} & 55.70{\scriptsize $\pm$2.90} & 2.09\\

6 GNNs~\citep{errica2020fair}  & -- & --  & 80.42{\scriptsize $\pm$2.07} & 75.80{\scriptsize $\pm$3.70}&71.20{\scriptsize $\pm$3.90} & 49.10{\scriptsize $\pm$3.50} & 89.90{\scriptsize $\pm$1.90} & 56.10{\scriptsize $\pm$1.60}& 4.13\\
MinCutPool~\citep{mincut}& 82.64{\scriptsize $\pm$5.05}& 80.07{\scriptsize $\pm$3.85}& 79.17{\scriptsize $\pm$1.64}& \textcolor{blue}{\underline{76.62{\scriptsize $\pm$2.58}}}& 70.77{\scriptsize $\pm$4.89}& 49.00{\scriptsize $\pm$2.83}& 87.20{\scriptsize $\pm$5.00}& --& 5.82\\

DMP~\citep{bodnar2021deep} & -- & --  & 84.00{\scriptsize $\pm$8.60} & 75.30{\scriptsize $\pm$3.30} & 73.80{\scriptsize $\pm$4.50} & 50.90{\scriptsize $\pm$2.50} & 86.20{\scriptsize $\pm$6.80} & 51.90{\scriptsize $\pm$2.10} & 4.20 \\
FC-V~\citep{o2021filtration} & 85.61{\scriptsize $\pm$0.59} & 81.01{\scriptsize $\pm$0.88} & 87.31{\scriptsize $\pm$0.66} & 74.54{\scriptsize $\pm$0.48} & 73.84{\scriptsize $\pm$0.36} & 46.80{\scriptsize $\pm$0.37} & 89.41{\scriptsize $\pm$0.24} & 52.36{\scriptsize $\pm$0.37} & 4.01\\
SubMix~\citep{yoo2022model}	&	86.34\scriptsize{$\pm$2.00} &	\textcolor{blue}{\underline{84.68\scriptsize{$\pm$3.70}}} &	80.99\scriptsize{$\pm$0.60} &	67.80\scriptsize{$\pm$2.00} &	70.30\scriptsize{$\pm$1.40} &	46.47\scriptsize{$\pm$2.50} & -- & -- & 6.15\\		
G-Mix~\citep{han2022g}	&	84.15\scriptsize{$\pm$2.30} &	83.83\scriptsize{$\pm$2.10} &	81.96\scriptsize{$\pm$0.60} &	66.28\scriptsize{$\pm$1.10} &	69.40\scriptsize{$\pm$1.10} &	46.40\scriptsize{$\pm$2.70} & -- & --	& 6.91\\	
RGCL~\citep{li2022let} & 84.54{\scriptsize $\pm$1.67} & 79.31{\scriptsize $\pm$0.68} & 87.66{\scriptsize $\pm$1.01} & 75.03{\scriptsize $\pm$0.43} & 71.85{\scriptsize $\pm$0.84} & 49.31{\scriptsize $\pm$0.42} & 90.34{\scriptsize $\pm$0.58} & {56.38{\scriptsize $\pm$0.40}}  & 3.56\\
AutoGCL~\citep{yin2022autogcl} & 86.27{\scriptsize $\pm$0.71} &  79.31{\scriptsize $\pm$0.70} & 88.64{\scriptsize $\pm$1.08}  & 75.80{\scriptsize $\pm$0.36} & 72.32{\scriptsize $\pm$0.93} & 50.60{\scriptsize $\pm$0.80} & 88.58{\scriptsize $\pm$1.49} & \textcolor{blue}{\textbf{56.75{\scriptsize $\pm$0.18}}} & 3.08\\
FF-GCN~\citep{paliotta2023graph}  & \textcolor{blue}{\underline{89.00{\scriptsize $\pm$5.00}}} & 78.00{\scriptsize $\pm$8.00} & 71.00{\scriptsize $\pm$4.00} & 62.00{\scriptsize $\pm$1.00} & 63.00{\scriptsize $\pm$8.00} & -- & -- & -- & 11.53\\
WWLS~\citep{fang2023wasserstein} & 88.02{\scriptsize $\pm$0.61} & 81.58{\scriptsize $\pm$0.91} & 88.30{\scriptsize $\pm$1.23} & 75.35{\scriptsize $\pm$0.74} & \textcolor{blue}{\textbf{75.08{\scriptsize $\pm$0.31}}} & \textcolor{blue}{\underline{51.61{\scriptsize $\pm$0.62}}} & -- & -- & 2.26\\ 
EPIC~\citep{heo2023epic}	&	88.78\scriptsize{$\pm$2.30} &	\textcolor{blue}{\textbf{85.53\scriptsize{$\pm$1.60}}} &	82.44\scriptsize{$\pm$0.70} &	69.06\scriptsize{$\pm$1.00} &	71.70\scriptsize{$\pm$1.00} &	47.93\scriptsize{$\pm$1.30} &	-- & -- & 4.67\\
EMP~\citep{chen2024emp} & -- & -- & 88.79{\scriptsize $\pm$0.63} &72.78{\scriptsize $\pm$0.54} &74.44{\scriptsize $\pm$0.45} &48.01{\scriptsize $\pm$0.42} & \textcolor{blue}{\underline{91.03{\scriptsize $\pm$0.22}}} & 54.41{\scriptsize $\pm$0.32}&2.97\\
MP-HSM~\citep{loiseaux2024stable} & -- & 77.10{\scriptsize $\pm$3.00} & 85.60{\scriptsize $\pm$5.30} & 74.60{\scriptsize $\pm$2.10} & \textcolor{blue}{\underline{74.80{\scriptsize $\pm$2.50}}} & 47.90{\scriptsize $\pm$3.20} & -- & -- & 4.67 \\
TopoGCL~\citep{chen2024topogcl} & 87.17{\scriptsize $\pm$0.83} & 81.45{\scriptsize $\pm$0.55} & 90.09{\scriptsize $\pm$0.93} & \textcolor{blue}{\textbf{{77.30{\scriptsize $\pm$0.89}}}} & 74.67{\scriptsize $\pm$0.32} & \textcolor{blue}{\textbf{52.81{\scriptsize $\pm$0.31}}} & 90.40{\scriptsize $\pm$0.53} & -- & \textcolor{blue}{\underline{1.76}}\\
PGOT~\citep{qian2024reimagining} & 87.32{\scriptsize $\pm$3.90} & 82.98{\scriptsize $\pm$5.21} & \textcolor{blue}{\bf 92.63{\scriptsize $\pm$2.58}} & 73.21{\scriptsize $\pm$2.59} & 62.90{\scriptsize $\pm$3.05} & 51.33{\scriptsize $\pm$1.76} & -- & -- & 3.85\\
RePHINE~\citep{immonen2024going} & -- & -- & -- & 71.25{\scriptsize $\pm$1.60} & 69.40{\scriptsize $\pm$3.78} & -- & -- & -- & 5.86\\

GPSE~\citep{gpse}& 80.49{\scriptsize $\pm$4.18}& 78.37{\scriptsize $\pm$2.62}& 87.19{\scriptsize $\pm$8.66}& 72.15{\scriptsize $\pm$3.66}& 69.30{\scriptsize $\pm$3.61}& 47.40{\scriptsize $\pm$5.40}& 80.40{\scriptsize $\pm$3.40}& -- & 7.27\\

\midrule
\textbf{TopER}& \textcolor{blue}{\bf 90.13{\scriptsize $\pm$4.14}} & 82.01{\scriptsize $\pm$4.59} & \textcolor{blue}{\underline{90.99{\scriptsize $\pm$6.64}}} & 74.58{\scriptsize $\pm$3.92} & 
73.20{\scriptsize $\pm$3.43} & 50.00{\scriptsize $\pm$4.02} & \textcolor{blue}{\textbf{92.70{\scriptsize $\pm$2.38}}} & \textcolor{blue}{\underline{56.51{\scriptsize $\pm$2.22}}} & \textcolor{blue}{\textbf{1.60}}\\
\bottomrule
\end{tabular}}
\end{table*}

\noindent\textbf{Baselines.}   We compare our method with 22 state-of-the-art and recent models in graph classification, including variants of graph neural networks: six GNNs including GCN, DGCNN, Diffpool, ECC, GIN, GraphSAGE which are compared in~\citep{errica2020fair} (best results of these six GNNs are given in the \textit{6 GNNs row}), FF-GCN~\citep{paliotta2023graph}; topological methods: DMP~\citep{bodnar2021deep}, FC-V~\citep{o2021filtration}, WWLS~\citep{fang2023wasserstein}, MP-HSM~\citep{loiseaux2024stable} and EMP~\citep{chen2024emp}; GNNs enhanced with data augmentation methods: SubMix~\citep{yoo2022model}, G-Mix~\citep{han2022g}, and EPIC~\citep{heo2023epic};	GNNs enhanced with contrastive learning methods: RGCL~\citep{li2022let}, AutoGCL~\citep{yin2022autogcl}, TopoGCL~\citep{chen2024topogcl}; prototype-based methods: PGOT~\citep{qian2024reimagining};
pooling methods: Top-k ~\citep{topk}, SAGPool ~\citep{sagpool}, DiffPool ~\citep{diffpool}, MinCutPool ~\citep{mincut} and structural encoder: GPSE~\citep{gpse}.

\Cref{tab:classResults} shows the accuracy results for the given models. We use the reported results in the corresponding references for each model. ``$-$" entries in the table mean the reference did not report any result for that dataset. In \citep{errica2020fair}, the authors compare the six most common GNNs on the graph classification task (see the GNNs row). The last column summarizes each model's overall performance. We report the average of the differences between each model's performance and the best performance in the column across all datasets. If a model's performance is missing for a dataset, it is excluded from the average computation for the model.

Out of eight datasets, TopER achieves the best results in two and ranks second in two other datasets. For the remaining four datasets, TopER's performance is within 4\% of the SOTA results. \textit{For overall performance, TopER outperforms all other models with an average deviation of 1.60\% from the best performances.} The closest competitor is TopoGCL, which has an average deviation of 1.76\%.

\begin{wraptable}{r}{2.2in}
\vspace{-.15in}
  \centering
  \caption{ AUC results for OGBG-MOLHIV dataset. \label{tab:molhiv}}
  \resizebox{.9\linewidth}{!}{
  \begin{tabular}{lc}
    \toprule
    \textbf{Model} & \textbf{AUC} \\
    \midrule
    GIN-VN~\citep{xu2018powerful}            & 77.80{\scriptsize $\pm$1.82} \\
    HGK-WL~\citep{togninalli2019wasserstein} & 79.05{\scriptsize $\pm$1.30} \\
    WWL~\citep{borgwardt2020graph}                    & 75.58{\scriptsize $\pm$1.40} \\
        PNA~\citep{corso2020principal}               & 79.05{\scriptsize $\pm$1.32} \\
        DGN~\citep{beaini2021directional}              & 79.70{\scriptsize $\pm$0.97} \\
        GraphSNN~\citep{wijesinghe2021new} & 79.72{\scriptsize $\pm$1.83}\\
        GCN-GNorm~\citep{cai2021graphnorm}     & 78.83{\scriptsize $\pm$1.00} \\
        Graphormer~\citep{ying2021transformers}   & \textcolor{blue}{\bf 80.51{\scriptsize $\pm$0.53}} \\
        Cy2C-GCN~\citep{choi2022cycle} & 78.02{\scriptsize $\pm$0.60} \\
        GAWL~\citep{nikolentzos2023graph} &78.34{\scriptsize $\pm$0.39}\\
        LLM-GIN~\citep{zhong2024benchmarking} & 79.22{\scriptsize $\pm$NA}\\
        GMoE-GIN~\citep{wang2024graph}          & 76.90{\scriptsize $\pm$0.90} \\
        \midrule
        TopER & \textcolor{blue}{\underline{80.21{\scriptsize $\pm$0.15}}}\\
    \bottomrule
  \end{tabular}}
  \vspace{-.15in}
\end{wraptable}
 
\smallskip

\noindent {\bf OGBG-MOLHIV results.}   To evaluate our model's performance on large datasets, we compare it with recently published models on the OGBG-MOLHIV dataset, as shown in \Cref{tab:molhiv}. The performances of these models are listed in chronological order based on their publication dates, with baseline performances reported from~\citep{choi2022cycle,ying2021transformers} or the respective model's references. In \Cref{sec:molhiv}, we give further details for TopER performance and the contribution of each function on this dataset. TopER achieves the second-best result on the MOLHIV dataset, while the top-performing model requires learning a significantly larger model with 119.5 million parameters.

 \smallskip

\noindent \textbf{TopER vs. PH.} TopER consistently outperforms Persistent Homology methods in both accuracy and computational efficiency. As shown in \Cref{tab:PHcomparison}, we compare against the best PH results reported in~\citep{cai2020understanding}, which evaluates 16 combinations of four filtration functions and four vectorization techniques per dataset. \textit{TopER achieves higher accuracy on all six benchmarks}. In terms of runtime, \textit{TopER is over 10 times faster than PH} on large graphs like Reddit-5K, while maintaining high performance (see \Cref{tab:time} and \Cref{sec:time}).

\begin{table}[ht]
\caption{Accuracy results for TopER vs. Persistent Homology in graph classification tasks. \label{tab:PHcomparison}}
\centering
\resizebox{.8\linewidth}{!}{
\begin{tabular}{lcccccc}
\toprule
\textbf{} & \textbf{BZR} & \textbf{COX2} & \textbf{PROTEINS} & \textbf{IMDB-B} & \textbf{IMDB-M} & \textbf{RED-5K} \\
\midrule
\textbf{PH} & 88.4{\scriptsize $\pm$0.6} & \textcolor{blue}{\bf 82.0\scriptsize$\pm$0.6} & 74.0\scriptsize$\pm$0.4 & 69.5\scriptsize$\pm$0.5 & 46.5\scriptsize$\pm$0.3 & 54.1\scriptsize$\pm$0.1 \\
\textbf{TopER} & \textcolor{blue}{\bf 90.1{\scriptsize $\pm$4.1}} & \textcolor{blue}{\bf 82.0{\scriptsize $\pm$4.6}} & \textcolor{blue}{\bf 74.6{\scriptsize $\pm$3.9}} & \textcolor{blue}{\bf 73.2{\scriptsize $\pm$3.4}} & \textcolor{blue}{\bf 50.0{\scriptsize $\pm$4.0}} & \textcolor{blue}{\bf 56.5{\scriptsize $\pm$2.2}} \\
\bottomrule
\end{tabular}}
\vspace{-.1in}
\end{table}

\paragraph{Ablation Studies.} We have conducted \textbf{three ablation studies}. In the first one, we evaluated \textit{the individual performance of each function} as well as their combined effect on classification. As shown in \Cref{tab:ablation}, the common filtration functions we employ from TDA exhibit strong individual performance. Moreover, when combined, they synergistically enhance overall performance. This is not surprising, as different filtration functions, such as atomic weight or Ricci curvature, generate distinct hierarchies and node-edge distributions, resulting in diverse connectivity patterns throughout the filtration sequence. This diversity is analogous to viewing an object from multiple angles. Hence, integrating these complementary perspectives improves performance by offering a richer and more varied representation of the graph structure, allowing the model to capture more intricate features. 
The other two ablation studies are provided in the \Cref{sec:thresholds}, which examines \textit{the effect of the number of thresholds} on TopER's performance, while \Cref{sec:combining} analyzes \textit{the impact of the number of filtration functions} used.

\begin{table}[ht]
\vspace{-.1in}
\caption{\footnotesize {\bf Ablation Study.} Individual and altogether performances of filtration functions with TopER.\label{tab:ablation}}
\setlength\tabcolsep{4pt}
\centering
\resizebox{\linewidth}{!}{
\begin{tabular}{lccccccc|c}
\toprule
{\bf Datasets}& Degree-cent. & Popularity & Closeness & Degree & F. Ricci & O. Ricci & Atom weight & TopER \\ 
\midrule
BZR & 82.22{\scriptsize $\pm$2.13} & 82.20{\scriptsize $\pm$3.42} & 81.48{\scriptsize $\pm$1.99} & \textcolor{blue}{\underline{82.73{\scriptsize $\pm$2.12}}} & 80.75{\scriptsize $\pm$1.73} & 80.99{\scriptsize $\pm$1.48} & 82.23{\scriptsize $\pm$2.12}& \textcolor{blue}{\bf 90.13{\scriptsize $\pm$4.14}} 	\\
COX2 & \textcolor{blue}{\underline{75.38{\scriptsize $\pm$3.96}}}& 69.21{\scriptsize $\pm$8.19}&67.90{\scriptsize $\pm$7.96}& 73.88{\scriptsize $\pm$5.02}& 70.46{\scriptsize $\pm$7.28}& 73.03{\scriptsize $\pm$4.21} & 69.82{\scriptsize $\pm$8.27}&	 \textcolor{blue}{\bf 82.01{\scriptsize $\pm$4.59}} 	\\ 
MUTAG & 76.61{\scriptsize $\pm$7.87}& 77.66{\scriptsize $\pm$6.12}& 80.88{\scriptsize $\pm$4.79}& 74.97{\scriptsize $\pm$6.40} & 80.85{\scriptsize $\pm$9.25}& \textcolor{blue}{\underline{82.46{\scriptsize $\pm$7.84}}}& 73.45{\scriptsize $\pm$8.01} & \textcolor{blue}{\bf 90.99{\scriptsize $\pm$6.64}} 	\\
PROTEINS & 67.66{\scriptsize $\pm$3.16}& 70.71{\scriptsize $\pm$4.41}& 69.01{\scriptsize $\pm$4.24}& 69.01{\scriptsize $\pm$3.48}& 72.96{\scriptsize $\pm$3.47} & 71.25{\scriptsize $\pm$2.66}& \textcolor{blue}{\underline{73.59{\scriptsize $\pm$3.33}}} &	 \textcolor{blue}{\bf 74.58{\scriptsize $\pm$3.92}} 	\\ 
IMDB-B & 73.00{\scriptsize $\pm$4.49}& 71.90{\scriptsize $\pm$3.48}& 72.60{\scriptsize $\pm$4.20} & \textcolor{blue}{\underline {73.10{\scriptsize $\pm$4.18}}} & 69.80{\scriptsize $\pm$2.44} & 66.40{\scriptsize $\pm$3.35} & - &	 \textcolor{blue}{{7\bf 3.20{\scriptsize $\pm$3.43}}}	 \\
IMDB-M & \textcolor{blue}{\underline{48.47{\scriptsize $\pm$3.90}}}& 47.87{\scriptsize $\pm$3.07}& 48.33{\scriptsize $\pm$3.49}& 47.93{\scriptsize $\pm$2.88} & 48.13{\scriptsize $\pm$4.11} & 43.60{\scriptsize $\pm$3.17} & - & \textcolor{blue}{\bf 50.00{\scriptsize $\pm$4.02}}	 \\
REDDIT-B & 76.70{\scriptsize $\pm$3.69} & 79.35{\scriptsize $\pm$3.46} & 78.10{\scriptsize $\pm$3.23}& \textcolor{blue}{\underline{79.55{\scriptsize $\pm$2.20}}}& 72.35{\scriptsize $\pm$2.91} & 68.20{\scriptsize $\pm$2.28}& - & \textcolor{blue}{\bf 92.70{\scriptsize $\pm$2.38}}	\\
REDDIT-5K &	42.85{\scriptsize $\pm$1.74}  & \textcolor{blue}{\underline{50.87{\scriptsize $\pm$2.63}}} & 50.03{\scriptsize $\pm$1.49}& 47.01{\scriptsize $\pm$1.89}& 50.27{\scriptsize $\pm$1.92}& 45.81{\scriptsize $\pm$2.08}& - &\textcolor{blue}{\bf 56.51{\scriptsize $\pm$2.22}}	\\
\bottomrule
\end{tabular}}
\end{table}

 \subsection{Graph Clustering Results} \label{sec:cluster} 
 We employ cluster quality metrics to assess the embeddings of graphs sourced from all datasets in Table~\ref{tab:classResults}. The embeddings are labeled with their respective dataset memberships, and we assume that good embeddings will have graphs of the same dataset clustered together. We evaluate embeddings based on three widely used clustering metrics: Silhouette (SILH), Calinski-Harabasz (CH), and Davies-Bouldin (DB)~\citep{gagolewski2021cluster}. \Cref{tab:cluster_comparison} compares the clustering performance of TopER and Spectral Zoo~\citep{jin2020spectral}, which is, to our knowledge, the only model that allows low-dimensional graph embeddings. Detailed results are provided in \Cref{sec:cluster_appendix}.  The findings demonstrate that the embeddings generated by TopER outperform those created by Spectral Zoo. This is evident from the superior cluster quality metrics observed for five out of eight datasets in the case of Silhouette and CH, and for all eight datasets in the case of DB.

 \begin{table}[h!]
  \vspace{-.15in}
  \centering
  \caption{{\bf Clustering Performances.}  Comparison of Spectral Zoo vs. TopER. The detailed results are given in~\Cref{sec:cluster_appendix}. \label{tab:cluster_comparison}} 
\resizebox{.8\linewidth}{!}{
\setlength\tabcolsep{4pt}
  \begin{tabular}{l|lcccccccc}
    \toprule
  \textbf{Metric}  & \textbf{Method} & \textbf{BZR} & \textbf{COX2} & \textbf{MUTAG} & \textbf{PROT.} & \textbf{IMDB-B} & \textbf{IMDB-M} & \textbf{REDD-B} & \textbf{REDD-5K} \\
    \midrule
 \multirow{2}{*}{\textbf{Silh $\uparrow$}} &        Spec. Zoo & 0.050 & 0.049 & \textcolor{blue}{\bf 0.344} & 0.050 & \textcolor{blue}{\bf 0.097} & \textcolor{blue}{\bf -0.024} & 0.108 & -0.121 \\
  & TopER & \textcolor{blue}{\bf 0.249} & \textcolor{blue}{\bf 0.414} & 0.258 & \textcolor{blue}{\bf 0.086} & 0.064 & -0.032 & \textcolor{blue}{\bf 0.196} & \textcolor{blue}{\bf -0.067} \\
  \midrule
 \multirow{2}{*}{\textbf{CH $\uparrow$}} &        Spec. Zoo & 3.51 & 6.13 & \textcolor{blue}{\bf 120.73} & 38.77 & \textcolor{blue}{\bf 85.24} & \textcolor{blue}{\bf 30.98} & 269.94 & 119.81 \\
  & TopER & \textcolor{blue}{\bf 42.58} & \textcolor{blue}{\bf 26.00} & 72.52 & \textcolor{blue}{\bf 151.64} & 60.52 & 11.77 & \textcolor{blue}{\bf 446.12} & \textcolor{blue}{\bf 1209.95} \\
  \midrule
   \multirow{2}{*}{\textbf{DB $\downarrow$}} &        Spec. Zoo & 7.25 & 6.07 & 0.95 & 4.55 & 2.78 & 10.73 & 2.20 & 25.74 \\
  & TopER & \textcolor{blue}{\bf 1.93} & \textcolor{blue}{\bf 2.29} & \textcolor{blue}{\bf 0.88} & \textcolor{blue}{\bf 1.54} & \textcolor{blue}{\bf 2.19} & \textcolor{blue}{\bf 6.87} & \textcolor{blue}{\bf 1.32} & \textcolor{blue}{\bf 2.78} \\
    \bottomrule
  \end{tabular}}
  \vspace{-.1in}
\end{table}

\subsection{Graph Visualization} \label{sec:interpret}
   
In the case of a single filtration function, TopER creates $2D$ graph embeddings $(a, b)$ that can be visualized with ease (see \Cref{fig:2Dintro}).  Traditional dimensionality reduction techniques
\begin{wrapfigure}{r}{3.3in}  
\vspace{-.15in}
    \centering
    \subfigure[\footnotesize PROTEINS]{\includegraphics[width=0.49\linewidth]{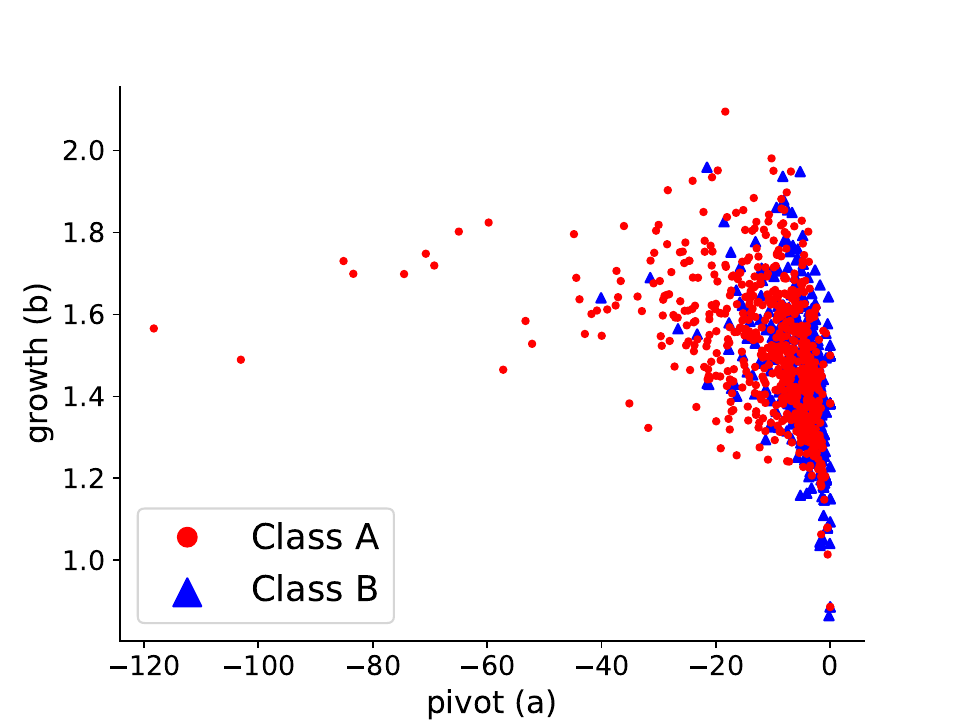}\label{fig:2DProteins}}%
    \subfigure[\footnotesize BZR]{\includegraphics[width=0.49\linewidth]{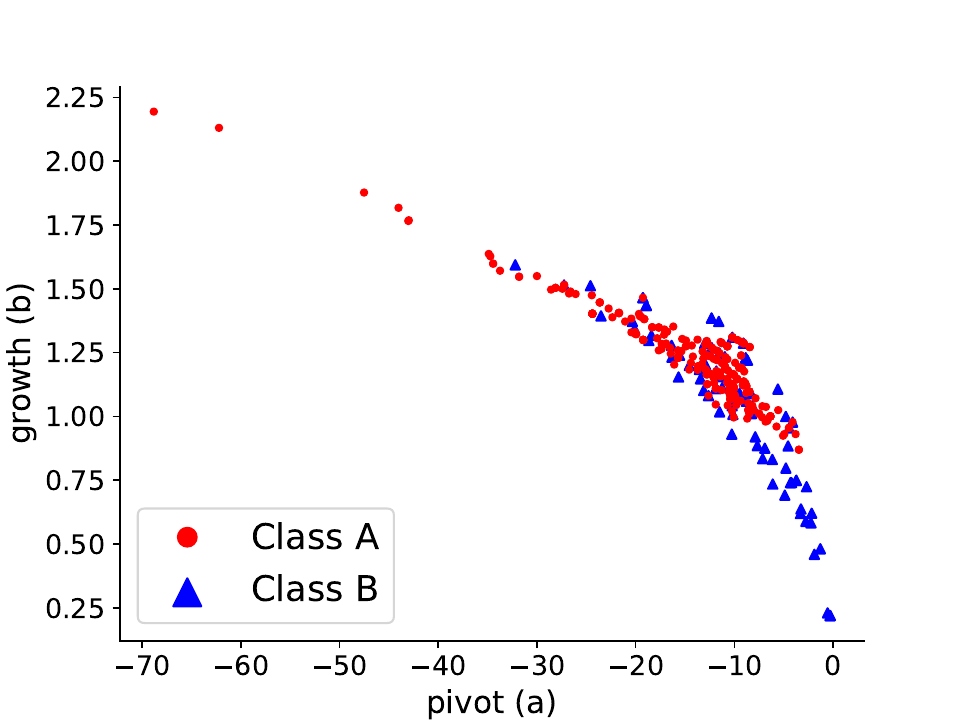}\label{fig:2DBZR}}%
    \caption{\footnotesize TopER visualizations of the PROTEINS dataset with O.Ricci edge filtration, and the BZR dataset with degree centrality node filtration. Each point corresponds to an individual graph.}
    \label{fig:visualized}
    \vspace{-.2in}
\end{wrapfigure}
  such as PCA can be used to visualize point cloud data, but accurately depicting graph data has historically been a significant challenge~\citep{giovannangeli2020toward}. To our knowledge, until TopER, the only model that allowed graph visualization was the GraphZoo~\citep{jin2020spectral}.

TopER creates highly interpretable graph visualizations. To recall, the pair $(a, b)$ represents the coefficients of the best-fitting function $L(x) = a + bx$, where $a$ is the pivot (y-intercept) and $b$ is the growth (the slope). Specifically, the pivot $a$ reflects graph connectivity, while $b$ reflects the growth rate of edges/nodes for the filtration function. In particular, a higher value of $a$ corresponds to a more interconnected graph.  As we demonstrate in Appendix \cref{fig:pivottypes}, graph connectivity and community structure can be analyzed using three types of pivot behavior. In the following, we illustrate how these quantities can be employed to interpret our two-dimensional representations of the graph datasets.

In \Cref{fig:2DMutag} of the MUTAG dataset, class B has a higher growth rate and smaller pivot than the red class. This shows that the class is growing faster than class A with respect to the closeness function in the MUTAG dataset, i.e., the graph has a low diameter.  Similarly,  in contrast, in the PROTEINS dataset (Fig. \ref{fig:2DProteins}), the growth rates are similar for both classes ($\sim 1.5-1.7$), but the pivot (initial graph size) is smaller in class A. This implies that class A has fewer edges in relation to the number of nodes. Such patterns, as described in Sec. \ref{sec:modelling}, can reveal key insights into graph topology. In a similar vein, TopER visualizations can be used for anomaly detection. For example, in Fig. \ref{fig:3datasets}, an outlier PROTEINS graph alone has a positive pivot and appears as the rightmost data point. 

More importantly, \textbf{TopER homogenizes graph representations, allowing us to compare graphs across datasets},  which may open new paths in training graph foundation models. To our knowledge, TopER is unique in directly producing interpretable 2D embeddings for cross-dataset visualization without relying on learned high-dimensional encodings or opaque projections, unlike GPSE~\citep{gpse} and GFSE~\citep{gfse}, which rely on learned high-dimensional embeddings.
For example, \Cref{fig:3datasets} visualizes graphs of three datasets on the same panel, where we see that Mutag and COX2 differ in their pivot only. The similarity is not surprising; MUTAG and COX2 are datasets of molecular graphs where nodes are atoms and edges are chemical bonds. As the molecules in both datasets have similar types of atoms and bond configurations (e.g., ring structures), TopER captures these similarities, leading to similar embeddings. 

These examples highlight that, in many practical settings, TopER’s interpretability outweighs modest performance differences compared to more expressive or data-driven models. As shown in \Cref{fig:visualized}, the pivot–growth representation captures fine-grained structural variations, such as changes in edge density, community organization, and filtration behavior, through simple geometric patterns that can be directly visualized and interpreted. This enables users to pinpoint the topological mechanisms responsible for observed differences between classes or datasets. Beyond classification, such interpretability makes TopER particularly well-suited for applications where structural understanding is critical. Its training-free and computationally efficient design further allows deployment in large-scale or data-scarce environments, as well as integration into hybrid pipelines where TopER embeddings serve as interpretable anchors for downstream learning models. Thus, while deep or spectral approaches may achieve marginally higher benchmark performance in some cases, TopER offers a complementary framework that prioritizes clarity, scalability, and theoretical grounding in topological structure.

\paragraph{Limitations.} While our approach is designed to be efficient and broadly applicable, its performance can vary depending on the choice of filtration function, which may require domain knowledge in certain applications. In practice, we found the method to be robust across a variety of datasets, and further refinements to filtration strategies could enhance adaptability to new domains.

\section{Conclusion} \label{sec:conclusion}

We have introduced a novel graph embedding method, \textit{TopER}, leveraging Persistent Homology from Topological Data Analysis. \textit{TopER} demonstrates strong performance in graph classification tasks, rivaling SOTA models. Furthermore, it naturally generates effective 2D visualizations of graph datasets, facilitating the identification of clusters and outliers. 
For future research, one promising direction is to extend \textit{TopER} to temporal graph learning tasks, enabling the capture of dynamic graph trajectories that reflect evolving user behaviors over time. Another avenue is the integration of \textit{TopER} embeddings into graph foundation models, where the homogenization of graph structures could enhance the learning of transferable representations across different domains.

\clearpage

\paragraph{Acknowledgments.} This work was partially supported by Canadian NSERC Discovery Grant RGPIN-2020-05665: Data Science on Blockchains, National Science Foundation under grants DMS-2220613, and DMS-2229417. The authors acknowledge the Texas Advanced Computing Center (TACC) at  UT Austin for providing computational resources that have contributed to the research results reported within this paper. \url{http://www.tacc.utexas.edu}.

\bibliographystyle{alpha}
\bibliography{NeurIPS-Camera_ready/references_new}
\clearpage
\newpage
\section*{NeurIPS Paper Checklist}

\begin{enumerate}

\item {\bf Claims}
    \item[] Question: Do the main claims made in the abstract and introduction accurately reflect the paper's contributions and scope?
    \item[] Answer: \answerYes{}
    \item[] Justification: The claims in the abstract and introduction accurately reflect the theoretical, methodological, and experimental contributions presented in the paper.

\item {\bf Limitations}
    \item[] Question: Does the paper discuss the limitations of the work performed by the authors?
    \item[] Answer: \answerYes{}
    \item[] Justification: The limitations are discussed in a dedicated paragraph, noting areas such as dependence on the filtration function and computational scalability to very large graphs.

\item {\bf Theory assumptions and proofs}
    \item[] Question: For each theoretical result, does the paper provide the full set of assumptions and a complete (and correct) proof?
    \item[] Answer: \answerYes{}
    \item[] Justification: Section 4 states the full set of assumptions for each theorem, and the complete proofs are provided in the Appendix.

\item {\bf Experimental result reproducibility}
    \item[] Question: Does the paper fully disclose all the information needed to reproduce the main experimental results of the paper to the extent that it affects the main claims and/or conclusions of the paper (regardless of whether the code and data are provided or not)?
    \item[] Answer: \answerYes{}
    \item[] Justification: The main paper and appendix include detailed descriptions of experimental setups, and an anonymous code repository is provided to ensure full reproducibility.

\item {\bf Open access to data and code}
    \item[] Question: Does the paper provide open access to the data and code, with sufficient instructions to faithfully reproduce the main experimental results, as described in supplemental material?
    \item[] Answer: \answerYes{}
    \item[] Justification: The code is provided via an anonymous link in the supplementary materials, along with detailed instructions for reproducing all key experiments.

\item {\bf Experimental setting/details}
    \item[] Question: Does the paper specify all the training and test details (e.g., data splits, hyperparameters, how they were chosen, type of optimizer, etc.) necessary to understand the results?
    \item[] Answer: \answerYes{}
    \item[] Justification: Training/test splits, model configurations, optimizer settings, and hyperparameter values are all specified either in the main text or in the appendix.

\item {\bf Experiment statistical significance}
    \item[] Question: Does the paper report error bars suitably and correctly defined or other appropriate information about the statistical significance of the experiments?
    \item[] Answer: \answerYes{}
    \item[] Justification: All reported results include standard deviations across multiple runs, with details on how error bars were computed described in the experimental section.

\item {\bf Experiments compute resources}
    \item[] Question: For each experiment, does the paper provide sufficient information on the computer resources (type of compute workers, memory, time of execution) needed to reproduce the experiments?
    \item[] Answer: \answerYes{}
    \item[] Justification: The appendix provides details about the computational environment used for all experiments, including GPU type and training time.

\item {\bf Code of ethics}
    \item[] Question: Does the research conducted in the paper conform, in every respect, with the NeurIPS Code of Ethics \url{https://neurips.cc/public/EthicsGuidelines}?
    \item[] Answer: \answerYes{}
    \item[] Justification: The research complies fully with the NeurIPS Code of Ethics and does not involve sensitive data, human subjects, or high-risk applications.

\item {\bf Broader impacts}
    \item[] Question: Does the paper discuss both potential positive societal impacts and negative societal impacts of the work performed?
    \item[] Answer: \answerYes{}
    \item[] Justification: The Broader Impact section discusses potential benefits of interpretability and efficiency in scientific applications, while noting ethical considerations related to graph data usage.

\item {\bf Safeguards}
    \item[] Question: Does the paper describe safeguards that have been put in place for responsible release of data or models that have a high risk for misuse (e.g., pretrained language models, image generators, or scraped datasets)?
    \item[] Answer: \answerNA{}
    \item[] Justification: The models and datasets used in the paper do not present high risk for misuse and thus do not require additional safeguards.

\item {\bf Licenses for existing assets}
    \item[] Question: Are the creators or original owners of assets (e.g., code, data, models), used in the paper, properly credited and are the license and terms of use explicitly mentioned and properly respected?
    \item[] Answer: \answerYes{}
    \item[] Justification: All datasets and external codebases are cited properly and used under their respective open-source licenses.

\item {\bf New assets}
    \item[] Question: Are new assets introduced in the paper well documented and is the documentation provided alongside the assets?
    \item[] Answer: \answerYes{}
    \item[] Justification: The paper introduces a new method and accompanying code, which are thoroughly documented and made available via anonymous link.

\item {\bf Crowdsourcing and research with human subjects}
    \item[] Question: For crowdsourcing experiments and research with human subjects, does the paper include the full text of instructions given to participants and screenshots, if applicable, as well as details about compensation (if any)? 
    \item[] Answer: \answerNA{}
    \item[] Justification: The paper does not involve crowdsourcing or any research involving human subjects.

\item {\bf Institutional review board (IRB) approvals or equivalent for research with human subjects}
    \item[] Question: Does the paper describe potential risks incurred by study participants, whether such risks were disclosed to the subjects, and whether Institutional Review Board (IRB) approvals (or an equivalent approval/review based on the requirements of your country or institution) were obtained?
    \item[] Answer: \answerNA{}
    \item[] Justification: The research does not involve human participants and does not require IRB approval.

\item {\bf Declaration of LLM usage}
    \item[] Question: Does the paper describe the usage of LLMs if it is an important, original, or non-standard component of the core methods in this research? Note that if the LLM is used only for writing, editing, or formatting purposes and does not impact the core methodology, scientific rigorousness, or originality of the research, declaration is not required.
    \item[] Answer: \answerNA{}
    \item[] Justification: No large language models were used in the methodology or experiments; any language assistance was limited to standard editing support.

\end{enumerate}

\clearpage
\appendix
\onecolumn
\centerline{\Large \bf Appendix}
 \section{Further Experimental Details} \label{sec:datasets}

\subsection{OGBG-MOLHIV Results} \label{sec:molhiv}

\begin{wraptable}{r}{3in}
\vspace{-.1in}
    \centering
    \caption{Results for OGBG-MOLHIV of each TopER$-i$.}
    \label{tab:molhiv2}
    \resizebox{\linewidth}{!}{
    \begin{tabular}{lccccc}
        \toprule
        \textbf{Method} & \textbf{Added Function}  & \textbf{Valid. AUC} & \textbf{Test AUC}\\
        \midrule
        TopER-1 & degree-centrality   & 72.76\scriptsize$\pm$0.23 & 74.44\scriptsize$\pm$0.20\\
        TopER-2 & atomic weight  & 71.89\scriptsize$\pm$0.12 & 74.25\scriptsize$\pm$0.16\\
        TopER-3 & O. Ricci   & 70.11\scriptsize$\pm$0.28 & 76.79\scriptsize$\pm$0.24\\
        TopER-4 & F. Ricci  & 71.76\scriptsize$\pm$0.18 & 78.15\scriptsize$\pm$0.15 \\
        TopER-5 & degree  & 71.79\scriptsize$\pm$0.35 & 79.26\scriptsize$\pm$0.14\\
        TopER-6 & popularity   & 72.27\scriptsize$\pm$0.29 & 79.88\scriptsize$\pm$0.24\\
        TopER-7 & closeness  & 71.30\scriptsize$\pm$0.18 & \textcolor{blue}{\bf 80.21\scriptsize$\pm$0.15} \\
        \bottomrule
    \end{tabular}}
    \vspace{-.2cm}
\end{wraptable}
For the OGBG-MOLHIV dataset, we further evaluated the improvements of TopER with the addition of new filtration functions.
Table \ref{tab:molhiv2} provides the performance of each TopER$-i$, where $i$ represents number of filtration functions used in the model, i.e., TopER-$i$ uses $\{(a_1,b_1,\dots,a_i,b_i)\}$ as graph embedding where $(a_i,b_i)$ is the pivot and growth for function $f_i$. We used XGBoost to rank the importance of filtration functions first, and the functions are added iteratively with this ranking.  
We fixed maximum tree depth $= 3$, learning rates $= 0.035$, subsample ratios $= 0.95$, the number of estimators = 1000, and the regularization parameter lambda $= 45$, where the objective function is rank:pairwise, with log loss as the evaluation metric. The seed is set to be $16$.

\subsection{Time Experiments for TopER vs. PH} \label{sec:time}

To compare the time efficiency and performance of TopER and persistent homology (PH), 
 \begin{wrapfigure}{r}{2.3in}
 \vspace{-0.15in}
\centering
\includegraphics[width=\linewidth]{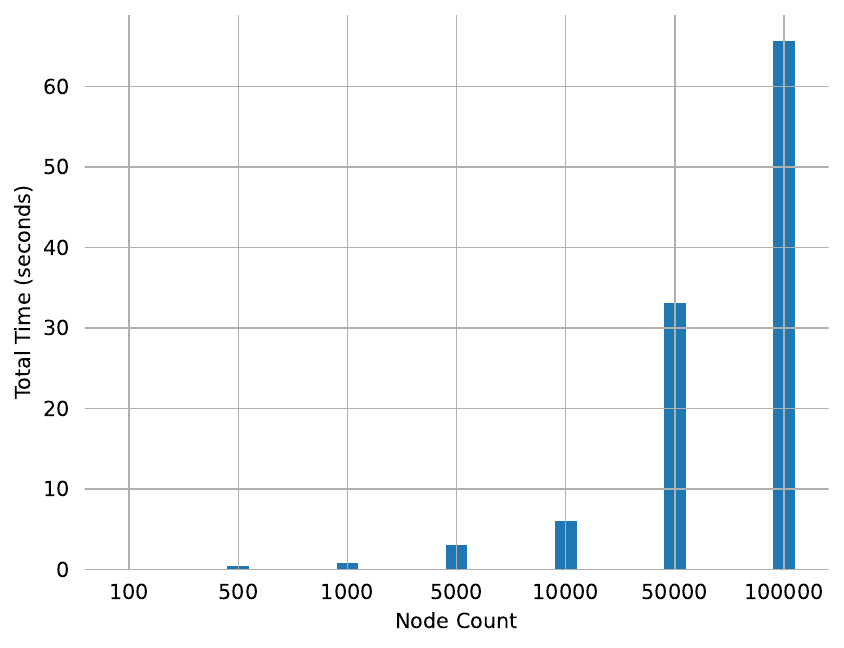}
\caption{\footnotesize {\bf Scalability.} TopER run time for synthetic power law graphs~\citep{holme2002growing} with node degree filtration. The mean node degree is $30$, and 100 filtration steps are used.  \label{fig:scalability}} 
\vspace{-0.2in}
\end{wrapfigure}

We conducted experiments using the same filtration function, the sublevel degree filtration. For PH, we applied Betti vectorization. Our results, summarized below, show that TopER is significantly faster than PH. Although both methods use the same filtration function, a key distinction lies in their embeddings: TopER generates 2D embeddings, whereas PH produces a vector with dimensionality equal to the number of thresholds in the filtration. Despite the considerable difference in dimensionality, TopER’s performance with 2D embeddings remains comparable to that of PH.

 \Cref{fig:scalability} shows that TopER scales efficiently with graph size, maintaining low runtime even with 100 filtration steps and high node degree. It processes graphs with up to 100,000 nodes in just over a minute, demonstrating its suitability for large-scale applications.

 \begin{table}[b]
 \vspace{-.2in}
 \caption{Comparison of TopER-1 (only one filtration function) and PH in terms of time and accuracy across different datasets. \label{tab:time}}
\centering
\resizebox{.8\linewidth}{!}{
\begin{tabular}{lcccccc}
\toprule
 & \multicolumn{2}{c}{\textbf{TopER-1}} & \multicolumn{2}{c}{\textbf{PH}} & \\
   \cmidrule(lr){2-3} \cmidrule(lr){4-5}  
\textbf{Dataset} & \textbf{Time} & \textbf{Accuracy} & \textbf{Time} & \textbf{Accuracy} & \textbf{\# Thresholds} \\ 
\midrule
BZR & 1.14 s & 82.73 ± 2.12 & 5.99 s & 83.70 ± 3.51 & 4 \\ 
IMDB-B & 3.27 s & 73.10 ± 4.18 & 319.95 s & 71.00 ± 4.07 & 65 \\ 
REDDIT-B & 107.65 s & 79.55 ± 2.20 & 9173.37 s & 84.50 ± 2.51 & 501 \\ 
\bottomrule
\end{tabular}}
\end{table}

\subsection{Clustering Performances} \label{sec:cluster_appendix}

In \Cref{tab:clustering2}, we showcase our clustering performance across eight benchmark graph classification datasets using three widely adopted clustering metrics: Silhouette, Calinski-Harabasz, and Davies-Bouldin. These metrics serve as evaluative measures for assessing the efficacy of clustering algorithms in partitioning datasets into meaningful clusters. They gauge the degree of similarity or dissimilarity within and between clusters, offering insights into the quality of clustering outcomes. For precise definitions of Silhouette, Calinski-Harabasz, and Davies-Bouldin metrics, as well as additional details on clustering measures, refer to~\citep{gagolewski2021cluster}.

\begin{table}[t]
  \centering
  \caption{\footnotesize The clustering performances of Spectral Embeddings and TopER with different metrics. Best performances are given in \textcolor{blue}{\bf blue}.   \label{tab:clustering2}}
  \setlength\tabcolsep{4pt}
\resizebox{.8\linewidth}{!}{
  \begin{tabular}{lcccccccc}
  \multicolumn{9}{c}{\textbf{Silhouette Scores} ($\uparrow$)}\\
    \toprule
    \textbf{Method} & \textbf{BZR} & \textbf{COX2} & \textbf{MUTAG} & \textbf{PROT.} & \textbf{IMDB-B} & \textbf{IMDB-M} & \textbf{REDD-B }& \textbf{REDD-5K} \\
    \midrule
       Spec Zoo & 0.050 & 0.049 & \textcolor{blue}{\bf 0.344} & 0.050 & \textcolor{blue}{\bf 0.097} & \textcolor{blue}{\bf -0.024} & 0.108 & -0.121 \\
\midrule
    degree& -0.108 & \textcolor{blue}{\bf 0.414} & 0.258 & 0.048 & 0.030 & -0.032 & 0.049 & -0.169 \\
    popularity & \textcolor{blue}{\bf 0.249} & -0.015 & 0.134 & -0.000 & 0.008 & -0.159 & \textcolor{blue}{\bf 0.196} & -0.173 \\
     closeness & 0.019 & 0.036 & 0.036 & \textcolor{blue}{\bf 0.086} & nan & nan & 0.087 & -0.185 \\
      degree & 0.084 & 0.030 & 0.017 & 0.065 & 0.056 & -0.075 & 0.034 & \textcolor{blue}{\bf -0.067} \\
     
    \bottomrule
  \end{tabular}}
  \resizebox{.8\linewidth}{!}{
  \begin{tabular}{lcccccccc}
    \multicolumn{9}{c}{\textbf{Calinski-Harabasz scores} ($\uparrow$)}\\
    \toprule
    \textbf{Method} & \textbf{BZR} & \textbf{COX2} & \textbf{MUTAG} & \textbf{PROT.} & \textbf{IMDB-B} & \textbf{IMDB-M} & \textbf{REDD-B }& \textbf{REDD-5K} \\
    \midrule
    Spec Zoo& 3.51 & 6.13 & \textcolor{blue}{\bf 120.73} & 38.77 & \textcolor{blue}{\bf 85.24} & \textcolor{blue}{\bf 30.98} & 269.94 & 119.81 \\
    \midrule
    degree  & 0.42 & 1.06 & 11.29 & 130.07 & 60.52 & 3.92 & 97.85 & \textcolor{blue}{\bf 1209.95} \\
    popularity  & 13.85 & \textcolor{blue}{\bf 26.00} & 36.13 & 77.22 & 12.89 & 11.77 & \textcolor{blue}{\bf 446.12} & 619.37 \\
    closeness  & \textcolor{blue}{\bf 42.58} & 1.02 & 40.04 & 73.51 & 10.17 & 0.30 & 188.10 & 689.27 \\
    F.Ricci  & 4.92 & 0.48 & 11.82 & \textcolor{blue}{\bf 151.64} & 11.68 & 1.03 & 92.14 & 454.34 \\
     \bottomrule
  \end{tabular}}
  \resizebox{.8\linewidth}{!}{
  \begin{tabular}{lcccccccc}
    \multicolumn{9}{c}{\textbf{Davies-Bouldin scores} ($\downarrow$)}\\
    \toprule
    \textbf{Method} & \textbf{BZR} & \textbf{COX2} & \textbf{MUTAG} & \textbf{PROT.} & \textbf{IMDB-B} & \textbf{IMDB-M} & \textbf{REDD-B }& \textbf{REDD-5K} \\
    \midrule
        Spec Zoo & 7.25 & 6.07 & 0.95 & 4.55 & 2.78 & 10.73 & 2.20 & 25.74 \\
\midrule
    degree& 9.84 & \textcolor{blue}{\bf 2.29} & \textcolor{blue}{\bf 0.88} & 1.95 & 4.92 & 46.46 & 2.32 & 3.27 \\
    popularity & 4.16 & 37.87 & 1.62 & 2.11 & 25.25 & \textcolor{blue}{\bf 6.87} & \textcolor{blue}{\bf 1.32} & 3.46 \\
    closeness & \textcolor{blue}{\bf 1.93} & 26.44 & 1.41 & 2.25 & 4.99 & 37.51 & 1.95 & \textcolor{blue}{\bf 3.09} \\
    F.Ricci & 4.19 & 7.20 & 1.27 & \textcolor{blue}{\bf 1.54} & \textcolor{blue}{\bf 2.19} & 10.35 & 1.83 & 5.41 \\
    \bottomrule
  \end{tabular}}
\end{table}

\subsection{Number of Thresholds} \label{sec:thresholds}

In our experiments, we utilized a large number of thresholds to capture finer-grained information, as the model is computationally efficient and the additional cost of increasing the number of thresholds is minimal. Furthermore, in~\Cref{tab:threshold_analysis}, we evaluated the model's performance with fewer thresholds and observed that it remains robust and highly effective even in such scenarios.

\begin{table}[h!]
\caption{The accuracy results of TopER with different numbers of thresholds.}
\label{tab:threshold_analysis}
\centering
  \resizebox{.6\linewidth}{!}{

\begin{tabular}{cccc}
\toprule
\textbf{\# Thresholds} & \textbf{PROTEINS} & \textbf{REDDIT-B} & \textbf{REDDIT-5K} \\ 
\midrule
10  & 72.78{\scriptsize $\pm$4.04} & 90.55{\scriptsize $\pm$1.96} & 55.99{\scriptsize $\pm$1.97} \\ 
20  & 74.31{\scriptsize $\pm$3.23} & 91.20{\scriptsize $\pm$1.66} & 55.91{\scriptsize $\pm$2.14} \\ 
50  & 74.76{\scriptsize $\pm$4.55} & 92.05{\scriptsize $\pm$1.96} & 55.39{\scriptsize $\pm$2.10} \\
100 & 73.85{\scriptsize $\pm$3.67} & 92.85{\scriptsize $\pm$1.18} & 55.51{\scriptsize $\pm$2.61} \\
200 & 75.47{\scriptsize $\pm$3.06} & 93.15{\scriptsize $\pm$2.10} & 56.51{\scriptsize $\pm$2.04} \\ 
500 & 74.58{\scriptsize $\pm$3.92}     & 92.70{\scriptsize $\pm$2.38}     & 56.51{\scriptsize $\pm$3.22}  \\ 

\bottomrule
\end{tabular}}

\end{table}

\subsection{Combining Filtration Functions} \label{sec:combining}

To assess the impact of embedding dimensions, we conducted new experiments evaluating the performance of the TopER model by progressively adding each filtration function step by step. This analysis provides insights into how the inclusion of additional filtration functions influences the model's performance. In \Cref{tab:toper-stepwise}, the TopER-n model represents the TopER utilizing n-filtration functions (2n features). 

\begin{table}[t]
\caption{Performance improvements achieved by integrating filtration functions into the TopER model. Here, TopER-n denotes the TopER model with n filtration functions. \label{tab:toper-stepwise}}
\footnotesize
\centering
    \resizebox{.7\linewidth}{!}{
\begin{tabular}{lcccc}
\toprule
\textbf{Dataset} & \textbf{TopER-1} & \textbf{TopER-2} & \textbf{TopER-3} & \textbf{TopER-4} \\
\midrule
{BZR}       & 82.48{\scriptsize $\pm$1.98} & 84.70{\scriptsize $\pm$2.84} & 85.66{\scriptsize $\pm$5.00} & 86.68{\scriptsize $\pm$3.81} \\
{COX2}      & 78.81{\scriptsize $\pm$1.94} & 79.26{\scriptsize $\pm$4.86} & 79.04{\scriptsize $\pm$7.49} & 80.30{\scriptsize $\pm$3.91} \\
{MUTAG}     & 86.14{\scriptsize $\pm$6.38} & 88.33{\scriptsize $\pm$3.88} & 86.75{\scriptsize $\pm$4.78} & 88.30{\scriptsize $\pm$4.63} \\
{PROTEINS}  & 74.03{\scriptsize $\pm$2.71} & 74.67{\scriptsize $\pm$2.73} & 75.21{\scriptsize $\pm$3.39} & 75.65{\scriptsize $\pm$3.87} \\
{IMDB-B}    & 73.00{\scriptsize $\pm$4.40} & 74.20{\scriptsize $\pm$4.26} & 74.50{\scriptsize $\pm$3.50} & 74.70{\scriptsize $\pm$3.95} \\
{IMDB-M}    & 48.73{\scriptsize $\pm$4.33} & 49.80{\scriptsize $\pm$2.94} & 49.73{\scriptsize $\pm$4.18} & 49.87{\scriptsize $\pm$4.00} \\
{REDDIT-B}  & 81.95{\scriptsize $\pm$2.74} & 90.45{\scriptsize $\pm$2.55} & 91.05{\scriptsize $\pm$2.62} & 91.50{\scriptsize $\pm$2.01} \\
{REDDIT-5K} & 50.21{\scriptsize $\pm$1.41} & 54.11{\scriptsize $\pm$2.43} & 56.19{\scriptsize $\pm$2.40} & 56.33{\scriptsize $\pm$2.74} \\
\bottomrule
\end{tabular}}
\end{table}

\subsection{TopER filtrations runtimes and substitute} \label{sec:TopERruntimes}

\paragraph{Filtration Timing Results.}
In ~\Cref{tab:times}, we report the computation times (in seconds) for the \textsc{TopER} filtration functions across various datasets. 
We also include the timings for the Heat Kernel Signature (HKS), which can serve as an efficient substitute for the Ollivier--Ricci curvature in certain scenarios due to its faster computation while preserving relevant structural information about the graphs. 
The table below summarizes the observed computation times for each filtration type across multiple benchmark datasets.

\begin{table}[h!]
\centering
\caption{Computation times (in seconds) of different filtration functions across datasets.}
\label{tab:times}
\resizebox{\linewidth}{!}{\begin{tabular}{lcccccccc}
\hline
\textbf{Filtration} & \textbf{BZR} & \textbf{COX2} & \textbf{MUTAG} & \textbf{PROTEINS} & \textbf{IMDB-B} & \textbf{IMDB-M} & \textbf{REDDIT-B} & \textbf{REDDIT-5K} \\
\hline
Degree Centrality & 0.86 & 1.67 & 0.40 & 13.82 & 13.32 & 13.03 & 115.09 & 447.72 \\
Popularity        & 0.73 & 1.36 & 0.58 & 5.81 & 13.21 & 15.60 & 111.11 & 414.04 \\
Closeness         & 2.29 & 4.40 & 0.77 & 15.97 & 5.85 & 6.63 & 399.2 & 1274.00 \\
Forman Ricci      & 0.80 & 1.56 & 0.58 & 4.63 & 8.29 & 7.38 & 258.32 & 693.18 \\
Ollivier Ricci    & 130.04 & 121.42 & 48.98 & 313.21 & 277.66 & 428.32 & 1291.93 & 6640.95 \\
Degree            & 1.14 & 1.16 & 0.61 & 2.62 & 3.27 & 4.22 & 107.65 & 363.32 \\
Weight            & 2.25 & 2.80 & 0.90 & 12.89 & -- & -- & -- & -- \\
HKS               & 2.07 & 1.94 & 0.56 & 16.90 & 15.54 & 15.72 & 470.01 & 1007.50 \\
\hline
\end{tabular}}
\end{table}

We evaluate the performance of our TopER model against state-of-the-art (SOTA) methods on benchmark graph classification datasets, including BZR, COX2, MUTAG, PROTEINS, IMDB-B, IMDB-M, REDDIT-B, and REDDIT-5K. Table~\ref{tab:results_hks} reports classification accuracy (mean ± standard deviation) across multiple runs. TopER generally achieves competitive results compared to SOTA. Ablation studies show the impact of Ricci curvature and Heat Kernel Signatures (HKS) on model performance. Notably, TopER without Ricci but with HKS recovers most of the performance lost when Ricci is removed, suggesting that HKS can serve as a viable replacement for Ollivier-Ricci curvature in capturing structural information.

\begin{table}[ht]
\centering
\caption{Graph classification accuracy (mean ± std) for different models across benchmark datasets. Highest scores per dataset are \textbf{\textcolor{blue}{bold blue}}, second-highest are \underline{\textcolor{blue}{underlined blue}}.}
\label{tab:results_hks}
\resizebox{\linewidth}{!}{
\begin{tabular}{lcccccccc}
\hline
Model & BZR & COX2 & MUTAG & PROTEINS & IMDB-B & IMDB-M & REDDIT-B & REDDIT-5K \\
\hline
SOTA & 89.00±5.00 & \textbf{\textcolor{blue}{85.53±1.60}} & \textbf{\textcolor{blue}{92.63±2.58}} & \textbf{\textcolor{blue}{77.30±0.89}} & \textbf{\textcolor{blue}{75.08±0.31}} & \textbf{\textcolor{blue}{52.81±0.31}} & 91.03±0.22 & \underline{\textcolor{blue}{56.75±0.18}} \\
TopER & \textbf{\textcolor{blue}{90.13±4.14}} & \underline{\textcolor{blue}{82.01±4.59}} & 90.99±6.64 & 74.58±3.92 & 73.20±3.43 & 50.00±4.02 & \underline{\textcolor{blue}{92.70±2.38}} & 56.51±2.22 \\
TopER w/o O. R. & 87.00±4.30 & 77.96±8.38 & 87.78±7.84 & 74.04±3.86 & 73.50±3.53 & 50.00±5.44 & 91.90±2.63 & 56.37±1.89 \\
TopER w/o O. R.\& w/ HKS & \underline{\textcolor{blue}{89.63±3.65}} & {81.58±3.54} & \underline{\textcolor{blue}{92.08±4.23}} & \underline{\textcolor{blue}{75.20±3.59}} & \underline{\textcolor{blue}{75.00±3.49}} & \underline{\textcolor{blue}{50.67±5.58}} & \textbf{\textcolor{blue}{92.75±2.47}} & \textbf{\textcolor{blue}{57.33±2.02}} \\
\hline
\end{tabular}}
\end{table}

\subsection{Hyperparameters} \label{sec:hyperparameters}

Our proposed MLP algorithm is constructed with a single hidden layer. The output layer's activation function is set to log softmax, and the loss function we used is Negative Log Likelihood Loss. The learning rate is chosen between 0.01 and 0.001. Subsequently, we investigate the impact of the number of neurons in the hidden layer, considering values from the set \{16, 64, 128\}. The optimizer is set to be Adam, and the number of epochs is 500. To prevent large weights and overfitting, we apply L2 regularization coefficients of {1e-3, 1e-4}. The activation function for the hidden layer varies between ReLU, GeLU, and ELU. Lastly, we consider the cases of adding or not a batch normalization layer to the output of the hidden layer and setting dropout values to be 0.0 or 0.5. In~\Cref{tab:hyperparameters}, we provide the details for each dataset. The last column shows the number of TopER features used for each dataset after the feature selection step.

\begin{table}[ht]
  \centering
  \caption{ {Employed hyperparameters for each dataset.} \label{tab:hyperparameters}}
\resizebox{\linewidth}{!}{
\setlength\tabcolsep{4pt}
  \begin{tabular}{lccccccc}
    \toprule
   Dataset & \textbf{Neurons} & \textbf{Dropout} & \textbf{Batch Norm.} & \textbf{Decay} & \textbf{Learning rate} & \textbf{Activation}  & \textbf{TopER Dim.} \\
    \midrule

\textbf{BZR}& 64 & 0.5 & True& 1e-4& 0.001& gelu & 26\\
\textbf{COX2} & 128 & 0 & True & 1e-4 & 0.01 & relu & 26  \\
\textbf{MUTAG} & 16 & 0.5 & False & 1e-3 & 0.01 & gelu  &20\\
\textbf{PROTEINS} & 64 & 0.5 & True & 1e-3 & 0.01 & elu & 26\\
\textbf{IMDB-B} & 128 & 0 & False & 1e-3 & 0.001 & relu & 20\\
\textbf{IMDB-M} & 16 & 0 & False & 1e-3 & 0.01 & elu &20 \\
\textbf{REDDIT-B} & 64 & 0.5 & False & 1e-3 & 0.01 & relu &24 \\
\textbf{REDDIT-5K} & 128 & 0 & False & 1e-3 & 0.01 & elu &14\\
    \bottomrule
  \end{tabular}}
\end{table}

\section{More on TopER} \label{sec:refine}

\subsection{Refining the point set} \label{sec:repetitions} While we have described the main steps of TopER in \Cref{sec:main}, due to the repetitions of the points in $\A=\{(x_i,y_i)\}\subset \R^2$, there are some choices to be made before defining the set $\A$ (i.e., $\X$ and $\Y$) to get the best fitting function $L:\X\to \Y$. The main reason is that the set $\{(x_i,y_i)\}_{i=1}^N$ can contain repetitions of $x$-values ($x_i=x_{i+1}$), repetitions of $y$-values ($y_i=y_{i+1}$) or repetitions of both ($(x_i,y_i)=(x_{i+1},y_{i+1})$) depending on the filtration function, the threshold set $\I$, and the graph $\G$.

For the filtrations induced by {\em node filtration functions}, the number of edges can not change unless the number of nodes changes, i.e., $x_i=x_{i+1}\Rightarrow y_i=y_{i+1}$. Hence, with this elimination, we still allow keeping $y$-values the same while $x$-values are increasing. This means there can be horizontal jumps in $\A_u$. In this paper, to eliminate all horizontal jumps for filtrations with node functions, we eliminate all repetitions of $y$-values from $\A_u$. In particular, we remove all the points with the same $\wh{y}$-value and add a point with a mean of $x$-values. In other words, if $y_i=y_{i+1}=\dots=y_{i+k}=\wh{y}$,  
we define $\wh{x}=\mbox{mean}\{x_i, x_{i+1},\dots, x_{i+k}\}$. Then, we replace (k+1) points $\{(x_i,\wh{y}), (x_{i+1}, \wh{y}),\dots, (x_{i+k},\wh{y})\}$ with one point $(\wh{x},\wh{y})$ in $\A_u$. This process eliminates all repetitions and horizontal jumps in $\A$, and we define our best-fitting line on this refined set. 

\begin{algorithm}[tb]
   \caption{TopER: Topological Evolution Rate}
   \label{alg:toper}
\begin{algorithmic}
   \STATE {\bfseries Input:} Graph $\mathcal{G}$, Filtration function $f:\mathcal{V} \to \mathbb{R}$, Threshold set $\mathcal{I}=\{\e_i\}_{i=0}^n$
   \STATE {\bfseries Output:} TopER vector $\mathcal{T}_f(\mathcal{G}, \mathcal{I})$

   \STATE Initialize lists $\mathcal{X} = []$, $\mathcal{Y} = []$
   \FOR{$i = 1$ {\bfseries to} $n$}
      \STATE $\mathcal{G}_i \gets$ Induced subgraph of $\mathcal{G}$ where $\mathcal{V}_i \subseteq f^{-1}([\e_0, \e_i])$
      \STATE $x_i \gets |\mathcal{V}_i|$
      \STATE $y_i \gets |\mathcal{E}_i|$
      \STATE Append $x_i$ to $\mathcal{X}$
      \STATE Append $y_i$ to $\mathcal{Y}$
   \ENDFOR
   \STATE Fit a line $\mathcal{L}(x) = a + bx$ to pairs $(x_i, y_i)$ from $\mathcal{X}$ and $\mathcal{Y}$ using least squares
   \STATE Extract coefficients $a$ and $b$
   \STATE {\bfseries Return} $(a, b)$ as the TopER vector $\mathcal{T}_f(\mathcal{G}, \mathcal{I})$
\end{algorithmic}
\end{algorithm}

\subsection{TopER  with Alternative Quantities} \label{sec:alternate}

While we use the most general quantities for a graph\textemdash the count of vertices and edges\textemdash in our algorithm, depending on the problem, there might be other induced quantities $(x_i,y_i)$ for a given subgraph $\G_i$ which can give better vectors.  
To keep the line-fitting approach meaningful in our model, as long as the sequences $\{x_i\}$ and $\{y_i\}$ are monotone like our node-edge counts above,
for a given dataset in a domain (e.g., biochemistry, finance), one can use other domain-related quantities induced by substructure $\G_i$ as a $(x_i,y_i)$ pair to obtain a TopER vector.

\begin{figure*}[ht]
    \centering
    \subfigure[\label{fig:BZR-line}]{\includegraphics[width=0.3\linewidth]{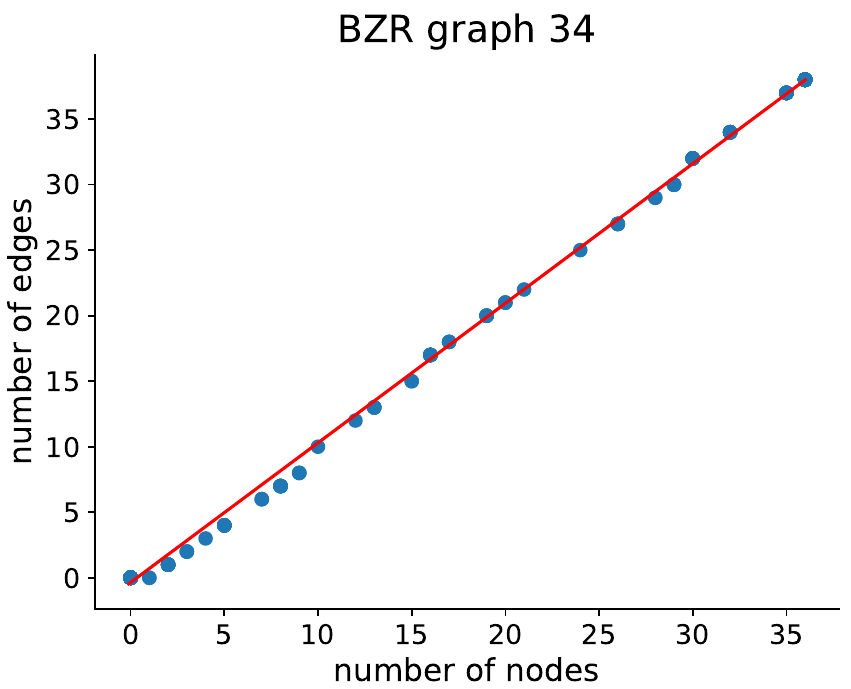}}
    \subfigure[\label{fig:RedditB-line}]{\includegraphics[width=0.3\linewidth]{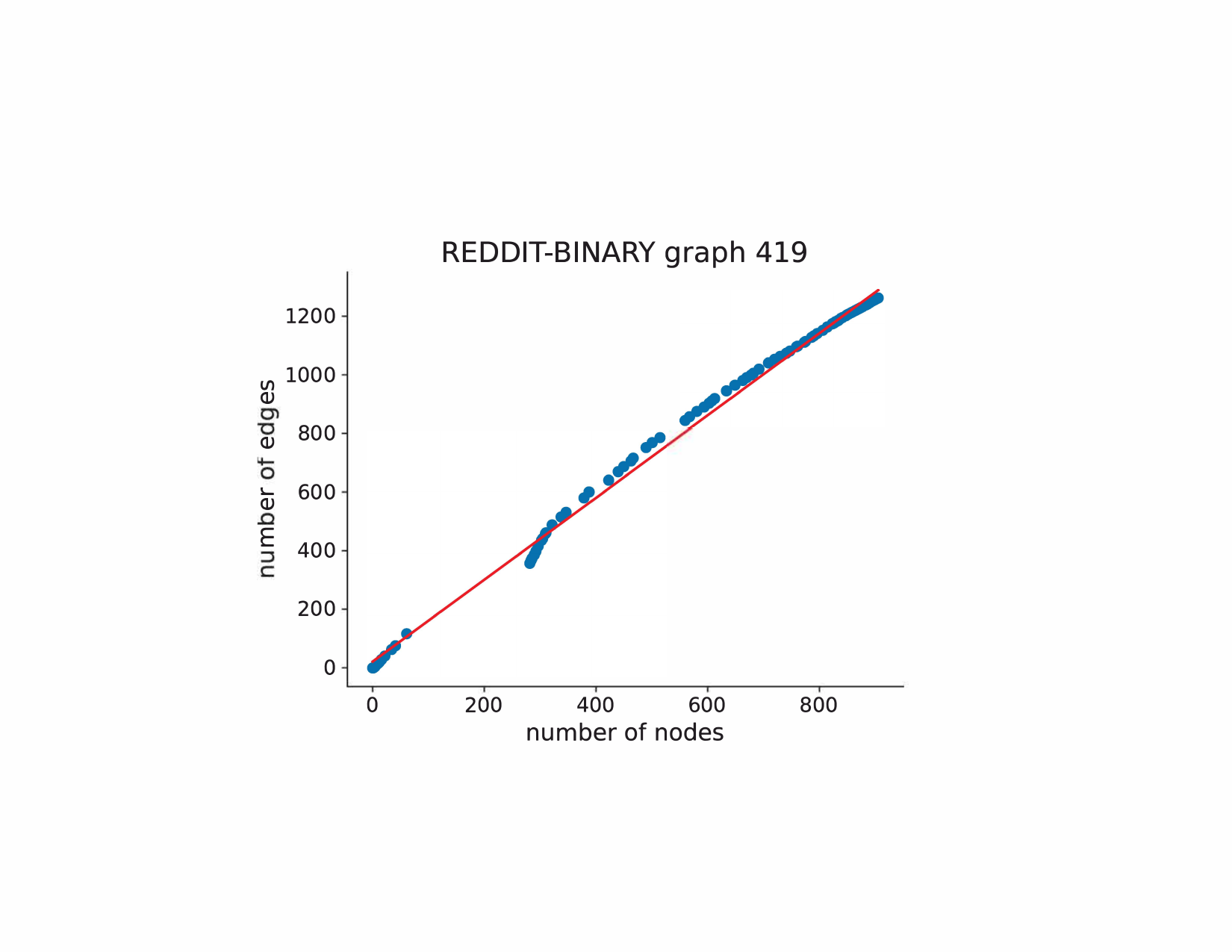}}
    \subfigure[\label{fig:Proteins-line}]{\includegraphics[width=0.3\linewidth]{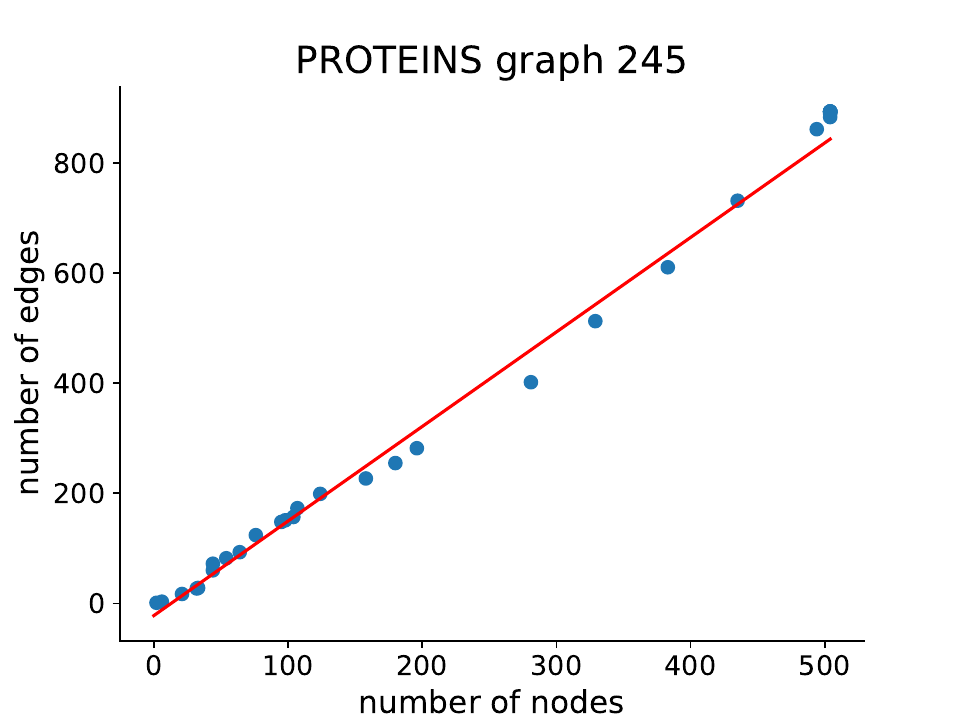}}
    \caption{\footnotesize \textbf{Linear Fit.} TopER summarizes the growth behavior in the graph induced by filtration with a linear fit.}
    \label{fig:linearfit}
\end{figure*}

\subsection{Linear or Higher Order Fitting} \label{sec:linearfit}

In our experiments, we observe that linear fitting captures the growth information for node-edge pair $\{(x_i,y_i)\}$ well (See \Cref{fig:linearfit}), and quadratic fit and linear fit stay very close to each other. However, if one decides to use other quantities as described above and loses the monotonicity of the sequences $\{x_i\}$ and $\{y_i\}$, trying higher order fits (e.g., $y=ax^2+bx+c$) can be more meaningful. In \Cref{tab:coefficients}, we present the average of the coefficients of quadratic terms when we use quadratic fit for the datasets, i.e., if we fit $y=a+bx+cx^2$ polynomial, we observe that the quadratic term $cx^2$ is mostly negligible, and tends to be a linear fit.

 \begin{table}[ht]
 \caption{Average of $x^2$ coefficient across datasets for quadratic fitting. \label{tab:coefficients}}
\footnotesize
\centering
\resizebox{.8\linewidth}{!}{

\begin{tabular}{lcccc}
\toprule
\textbf{Dataset}         & \textbf{BZR} & \textbf{COX2} & \textbf{MUTAG} & \textbf{REDDIT-5k} \\ 
\midrule
\textbf{Average of $x^2$ Coefficient} & $4.71 \times 10^{-5}$ & $6.61 \times 10^{-4}$ & $1.16 \times 10^{-2}$ & $1.78 \times 10^{-5}$ \\ 
\bottomrule
\end{tabular}}

\end{table}

\subsection{Interpreting TopER} \label{sec:modelling}

Our approach involves accurately modeling the evolution of a graph throughout the filtration process.  
One can easily identify clusters for each class and outliers in the other datasets given in \Cref{fig:3datasets} and make inferences about the different clusters and outliers. Furthermore, when the pivot $a_f$ is positive or negative, it can be interpreted as graph density behavior in the filtration sequence (See \Cref{fig:pivottypes}).

\begin{figure}[ht]
    \centering
    \includegraphics[width=.9\linewidth]{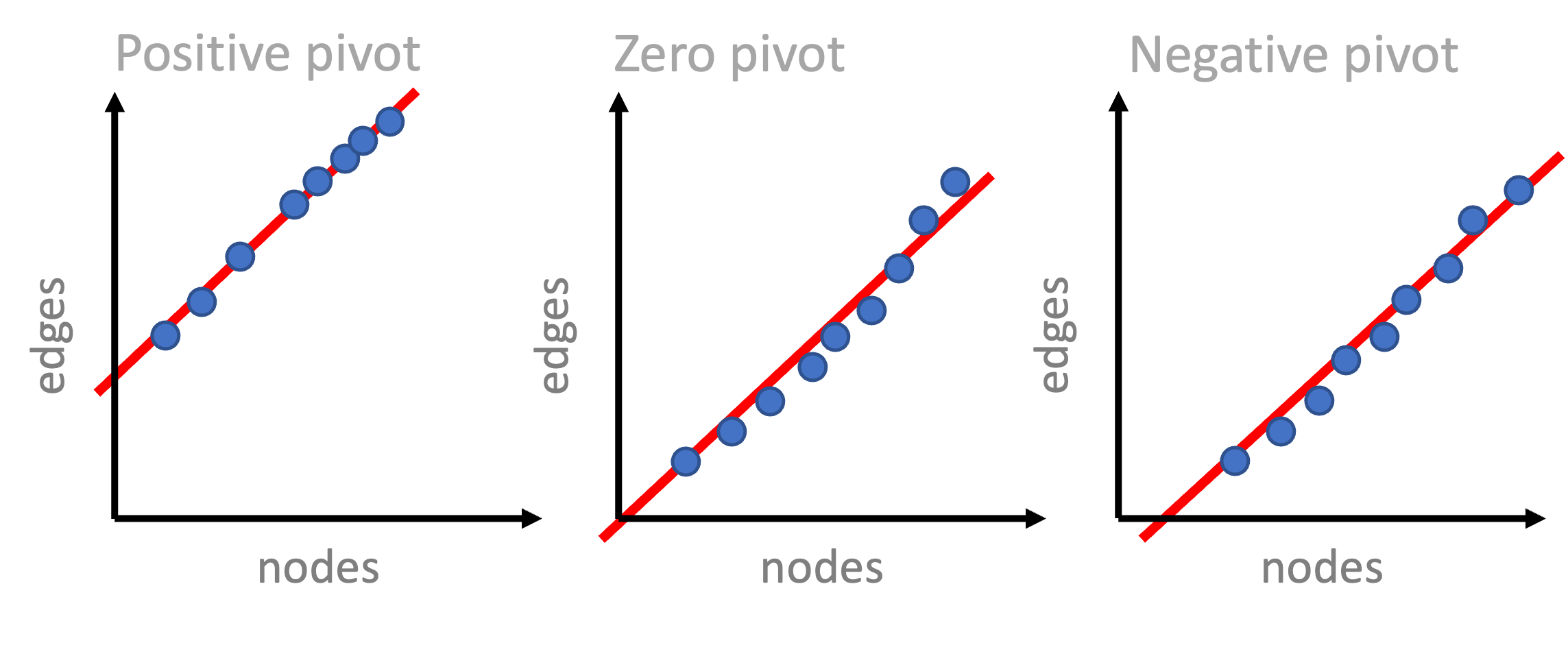}
    \caption{\footnotesize \textbf{Pivot Behavior.} A graph can exhibit three distinct pivot behaviors. Positive pivot graphs display a cluster of vertices that are closely interconnected and appear early in the filtration process. On the other hand, negative pivot graphs feature loosely connected nodes where the edges enter the filtration at a later stage. Graphs with zero pivot are usually quasi-complete graphs. \label{fig:pivottypes}}  
\end{figure}

\section{Proofs of Stability Theorems} \label{sec:theorems}

In this part, we prove the stability results for our TopER.

\textbf{~\Cref{lem:filtration}}. \citep{skraba2020wasserstein} Let $\X$ be a compact metric space, and $f,g:\X\to\R$ be two filtration functions. Then, for any $p\geq 1$, we have 
$\W_p(\PD_k(\X,f),\PD_k(\X,g))\leq \|f-g\|_p$

The next lemma is on the stability of Betti curves by~\citep{dlotko2023euler} [Proposition 1].   

\textbf{~\Cref{lem:betti}}. \citep{dlotko2023euler} Let $\beta_k(\X)$ is the $k^{th}$ Betti function obtained from the persistence module $\PM_k(\X)$.
$$\|\beta_k(\X)-\beta_k(\Y)\|_1\leq 2\W_1(\PD_k(\X),\PD_k(\Y))$$

Now, we are ready to prove our stability result.

\textbf{~\Cref{thm}} Let $\X$ be a compact metric space, and $f,g:\X\to\R$ be two filtration functions. Then, for some $C>0$,
$$\|\TE_f(\X)-\TE_g(\X)\|_1\leq C\cdot \W_1(\PD_k(\X,f),\PD_k(\X,g))$$

\begin{proof} We will utilize the stability theorems from topological data analysis given above. 
 
First, we employ the stability of Betti curves by \Cref{lem:betti}.
\begin{equation} \label{eqn1}
\|\beta_k(\X)-\beta_k(\Y)\|_1\leq 2\W_1(\PD_k(\X),\PD_k(\Y))
\end{equation}

Hence to obtain $\TE_f(\X)=(a_f,b_f)$, we fit least squares line $y=a_f+b_fx$ to the set of $N$ points in $\R^2$, i.e., $\Z_f=\{(\beta^f_0(\e_i),\beta^f_1(\e_i))\}_{i=1}^N$. Similarly, we obtain  $\TE_g(\X)=(a_g,b_g)$ by fitting least squares line to $\Z_g=\{(\beta^g_0(\e_i),\beta^g_1(\e_i))\}_{i=1}^N$.
By \Cref{eqn1}, we have 
\begin{equation} \label{eqn2}
\mathbf{D}_H(\Z_f,\Z_g)\leq 4\W_1(\PD_k(\X),\PD_k(\Y))
\end{equation}
where $\mathbf{D}_H(\Z_f,\Z_g)$ represent Hausdorff distance between the point clouds $\Z_f$ and $\Z_g$ in $\R^2$.

Now, by the stability of least squares fit with respect to Hausdorff distance (\citep{chernov2012does} [Theorem 3.1]), we have 
\begin{equation} \label{eqn3}
\|\TE_f(\X)-\TE_g(\X)\|_1\leq C\cdot\mathbf{D}_H(\Z_f,\Z_g)
\end{equation}

Hence, when we combine \Cref{eqn2,eqn3}, we have    
$$\|\TE_f(\X)-\TE_g(\X)\|_1\leq C\cdot \W_1(\PD_k(\X),\PD_k(\Y))$$

The proof follows.
\end{proof}

By combining the above result with \Cref{lem:filtration}, we obtain the following corollary.

\textbf{~\Cref{cor}} Let $\X$ be a compact metric space, and $f,g:\X\to\R$ be two filtration functions. Then, for some $C>0$,
$$\|\TE_f(\X)-\TE_g(\X)\|_1\leq C\cdot \|f-g\|_1$$

\begin{proof} By \Cref{lem:filtration}, we have 
\begin{equation} \label{eqn4}
\W_1(\PD_k(\X,f),\PD_k(\X,g))\leq \|f-g\|_1
\end{equation}

By \Cref{thm}, we have 
\begin{equation} \label{eqn5}
    \|\TE_f(\X)-\TE_g(\X)\|_1\leq C\cdot \W_1(\PD_k(\X,f),\PD_k(\X,g))
\end{equation}

By combining \Cref{eqn4,eqn5}, we conclude
$$\|\TE_f(\X)-\TE_g(\X)\|_1\leq \wh{C}\cdot \|f-g\|_1$$
The proof follows.    
\end{proof}

\section{Synthetic Experiments} \label{sec:synthetic}
This section describes experiments performed on the Erdos-Renyi synthetic graph. We conducted two experiments by applying TopER on this graph and Principal Component Analysis (PCA). The goal is to compare the performance, runtime, and interpretability of the two models. From the plots shown in Figures \ref{fig:Erdos_TopER}, \ref{fig:Erdos_pca1}, and \ref{fig:Erdos_pca2}, we see that TopER is interpretable compared to when PCA is used.

\begin{figure*}[ht]
    \centering
    \subfigure[\label{fig:closn_sub}]{\includegraphics[width=0.24\linewidth]{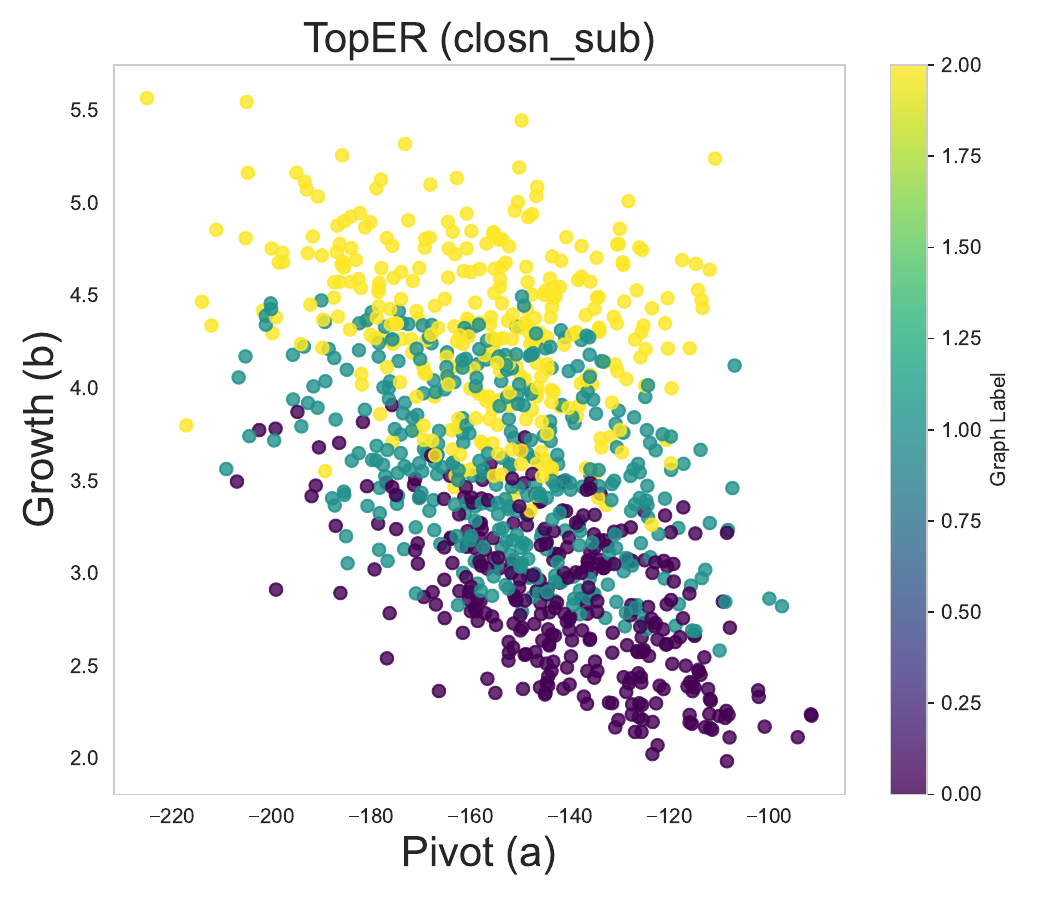}}
    \subfigure[\label{fig:closn_super}]{\includegraphics[width=0.24\linewidth]{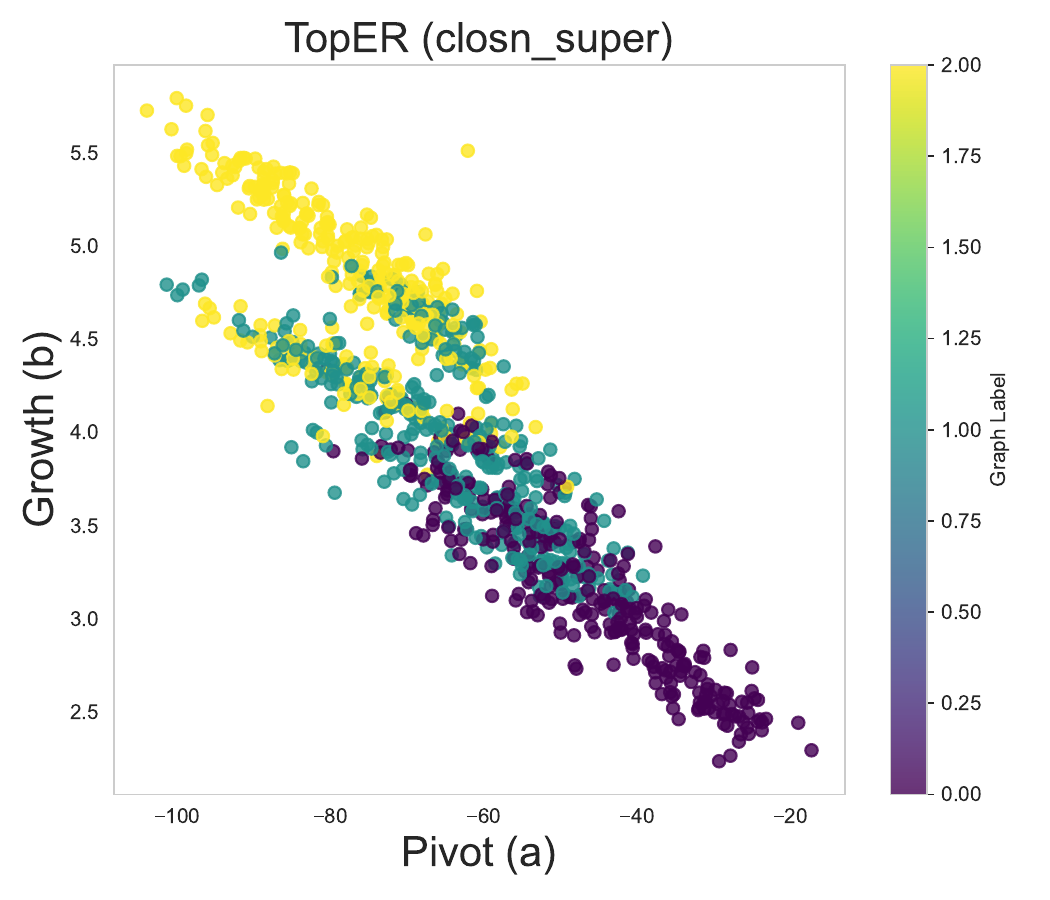}}
    \subfigure[\label{fig:degcent_sub}]{\includegraphics[width=0.24\linewidth]{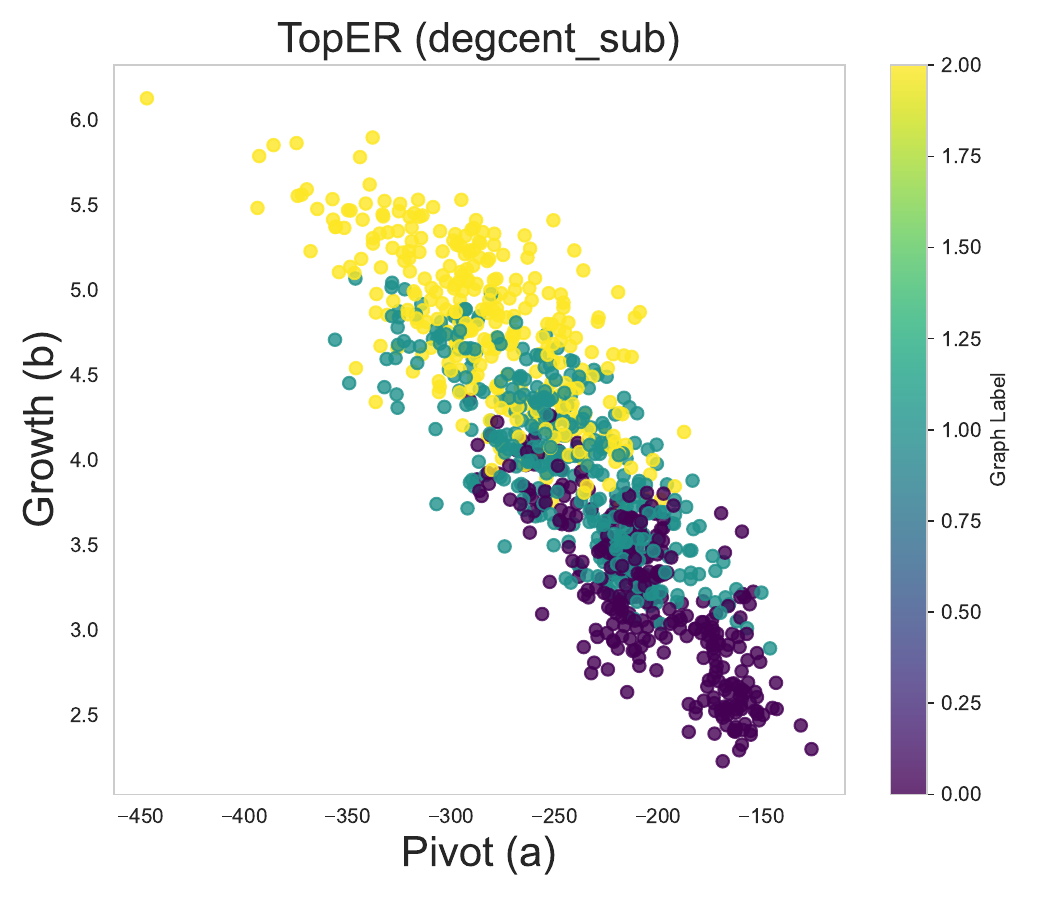}}
    \subfigure[\label{fig:degcent_super}]{\includegraphics[width=0.24\linewidth]{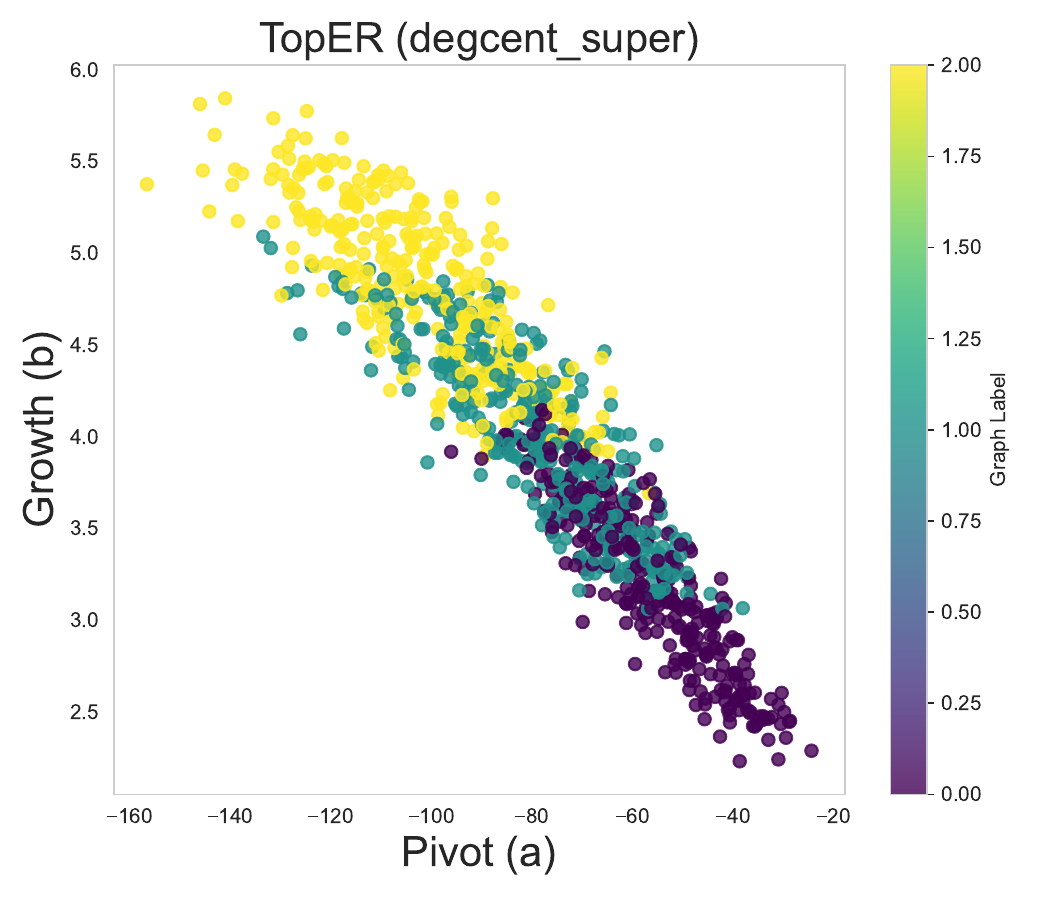}}
    \subfigure[\label{fig:degree_super}]{\includegraphics[width=0.24\linewidth]{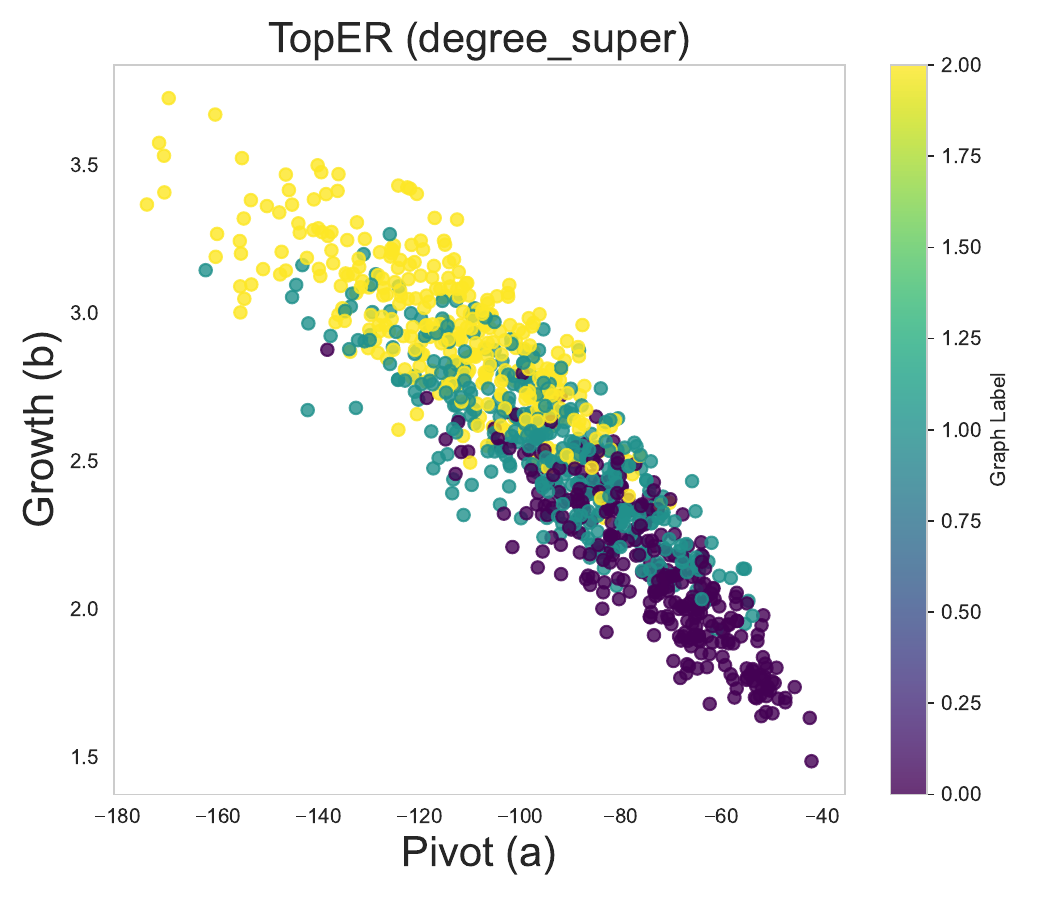}}
    \subfigure[\label{fig:degree_sub}]{\includegraphics[width=0.24\linewidth]{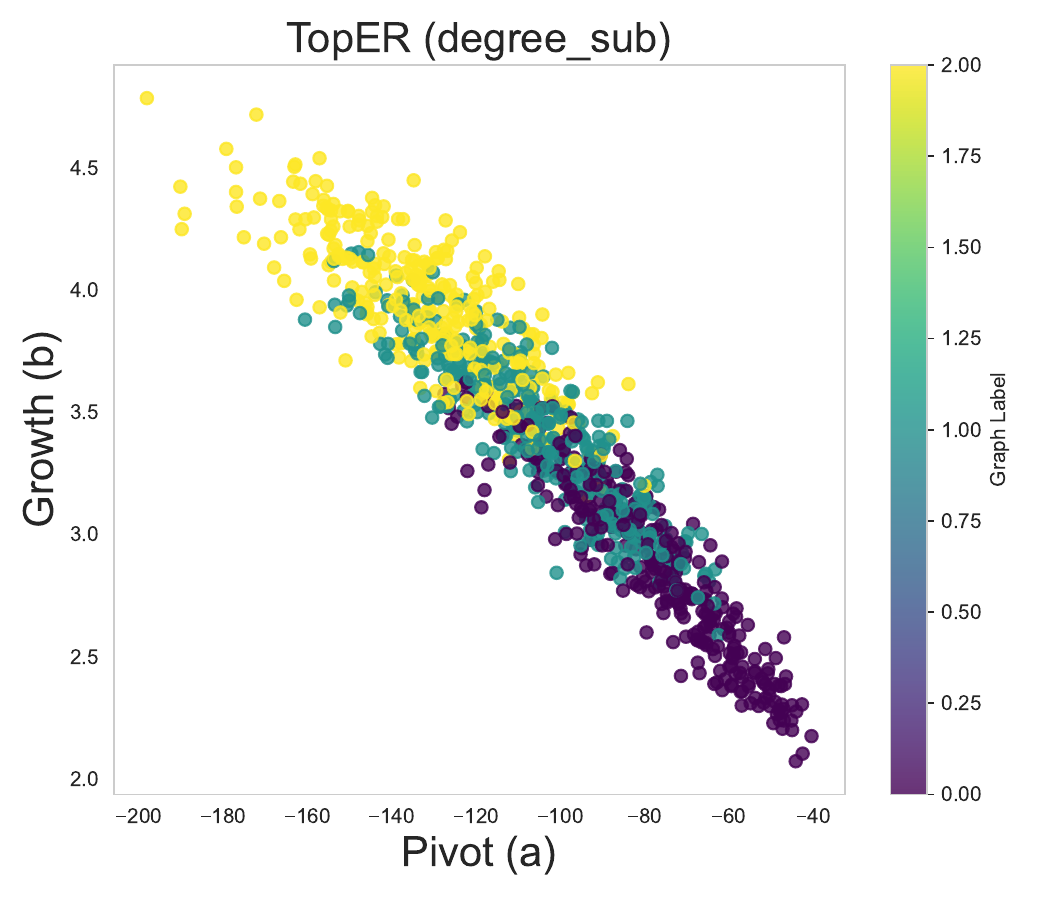}}
    \subfigure[\label{fig:popularity_super}]{\includegraphics[width=0.24\linewidth]{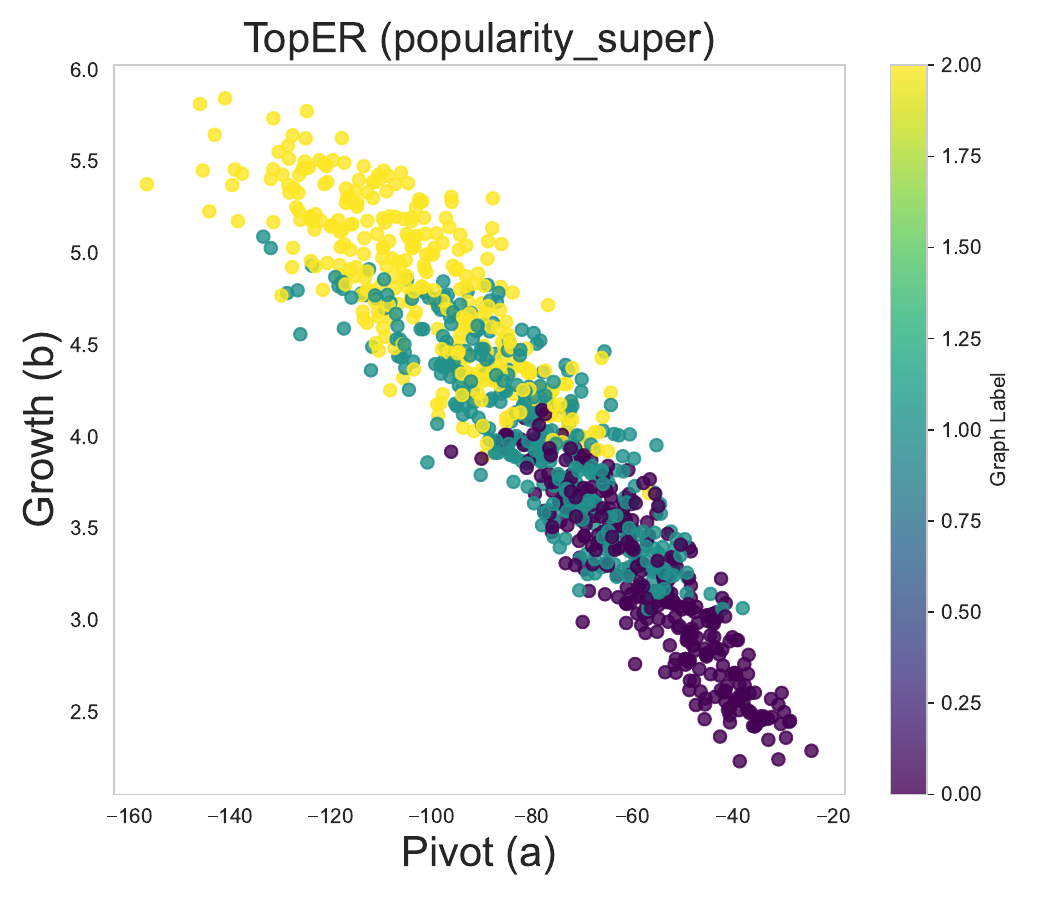}}
    \subfigure[\label{fig:popularity_sub}]{\includegraphics[width=0.24\linewidth]{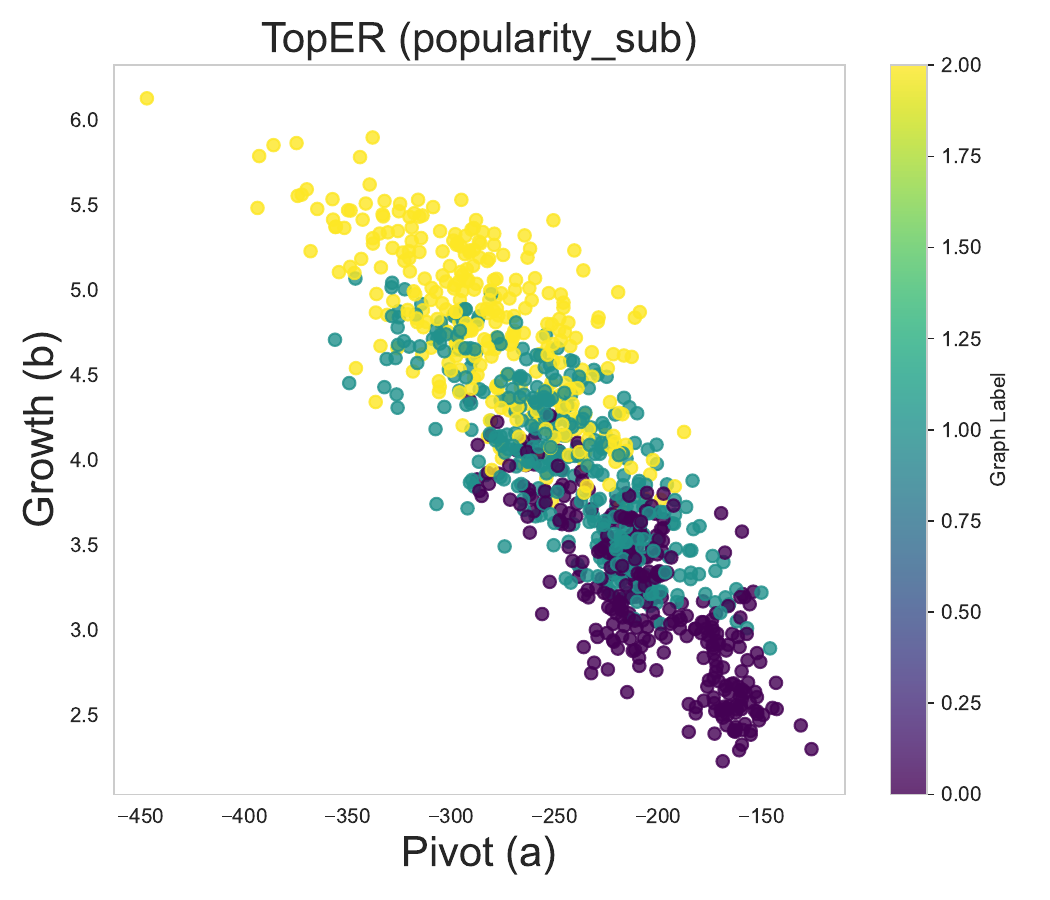}}
    \caption{\footnotesize TopER plots showing pivot vs growth for each function\textemdash degree centrality, closeness, degree, and popularity when threshold=50.}
    \label{fig:Erdos_TopER}
\end{figure*}

\begin{figure*}[ht]
    \centering
    \subfigure[\label{fig:closn_b0_sub}]{\includegraphics[width=0.24\linewidth]{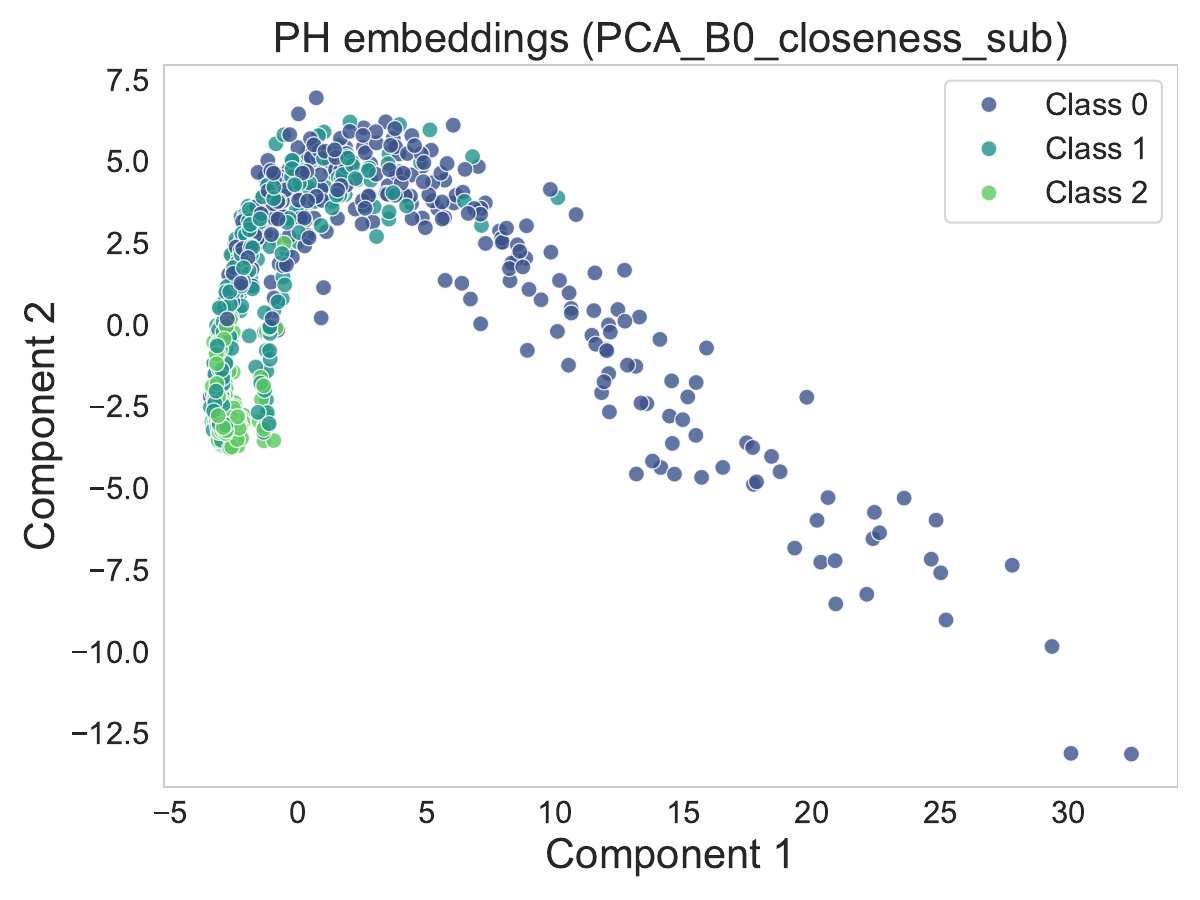}}
    \subfigure[\label{fig:closn_b1_sub}]{\includegraphics[width=0.24\linewidth]{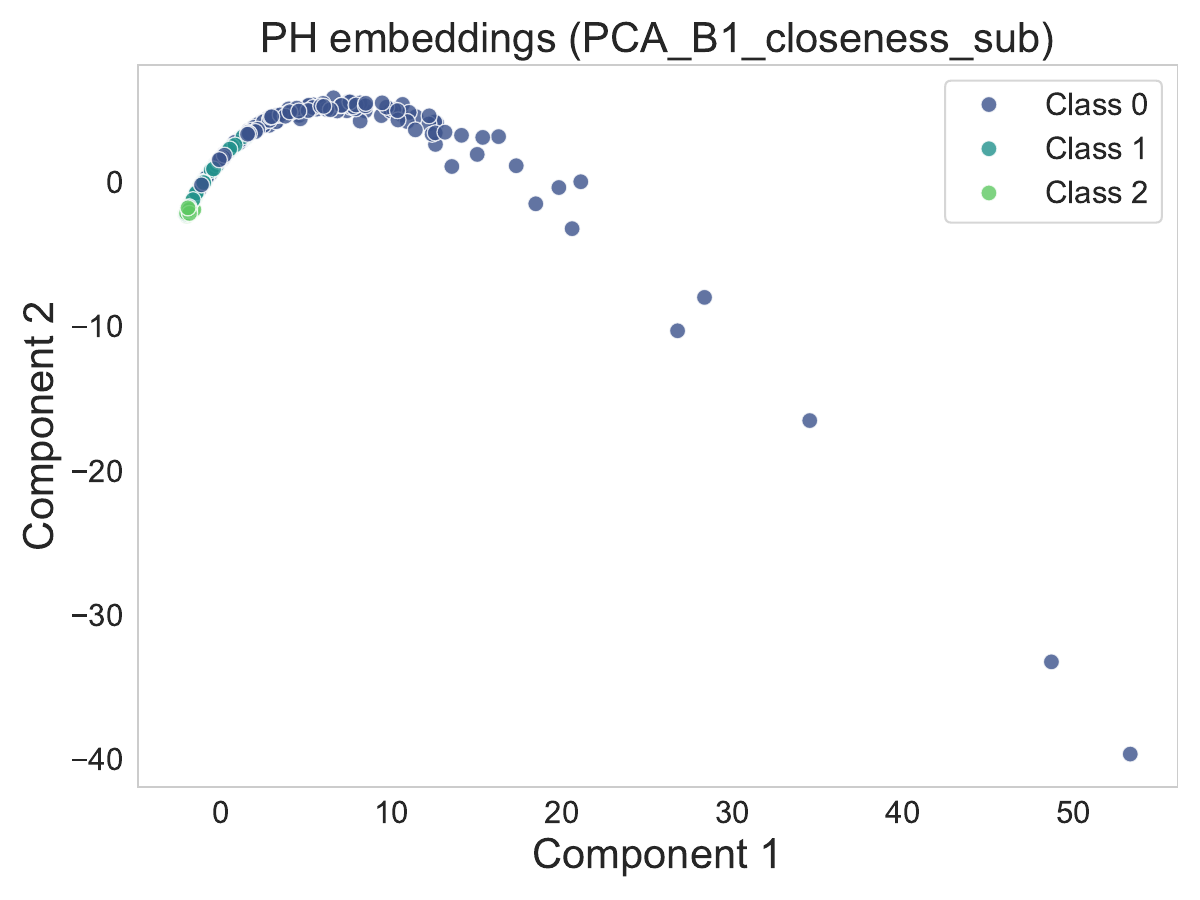}}
    \subfigure[\label{fig:closn_b0_super}]{\includegraphics[width=0.24\linewidth]{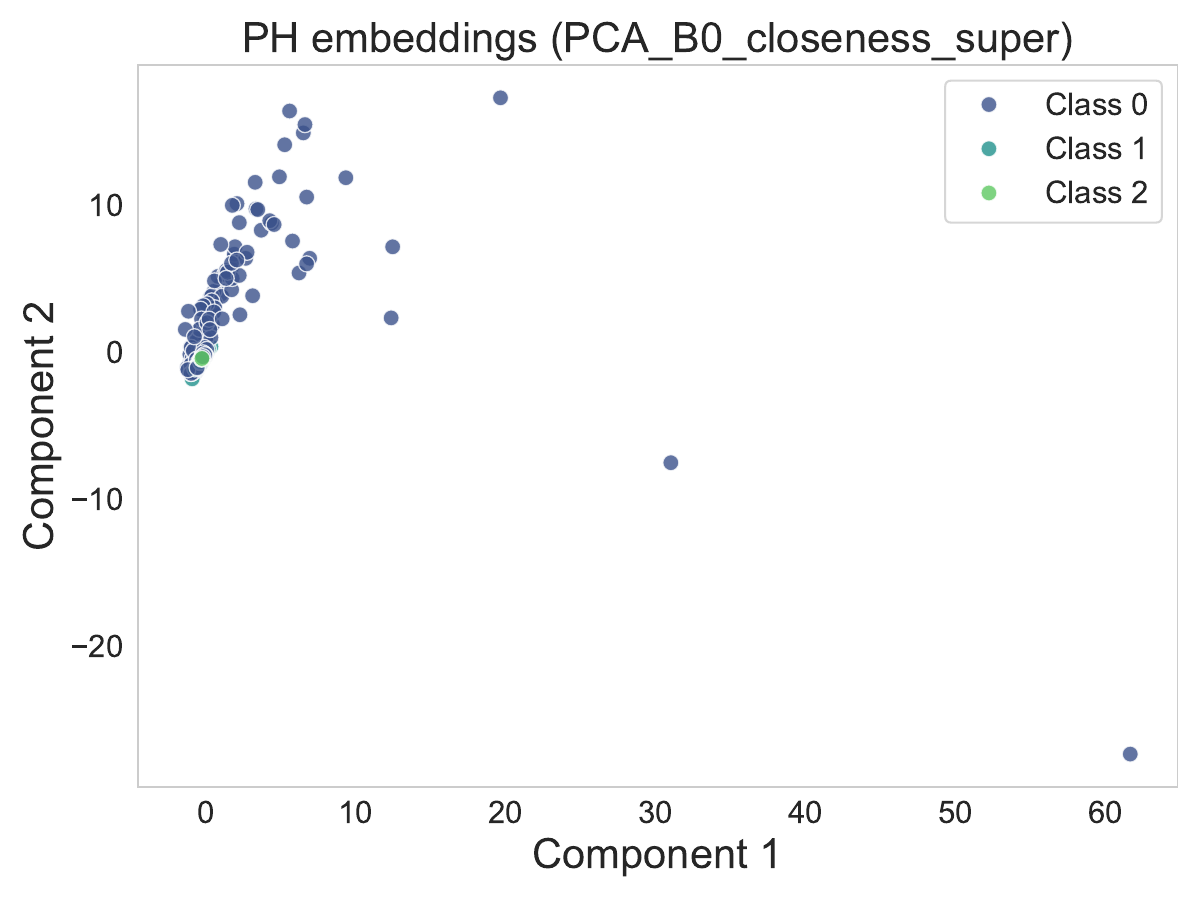}}
    \subfigure[\label{fig:closn_b1_super}]{\includegraphics[width=0.24\linewidth]{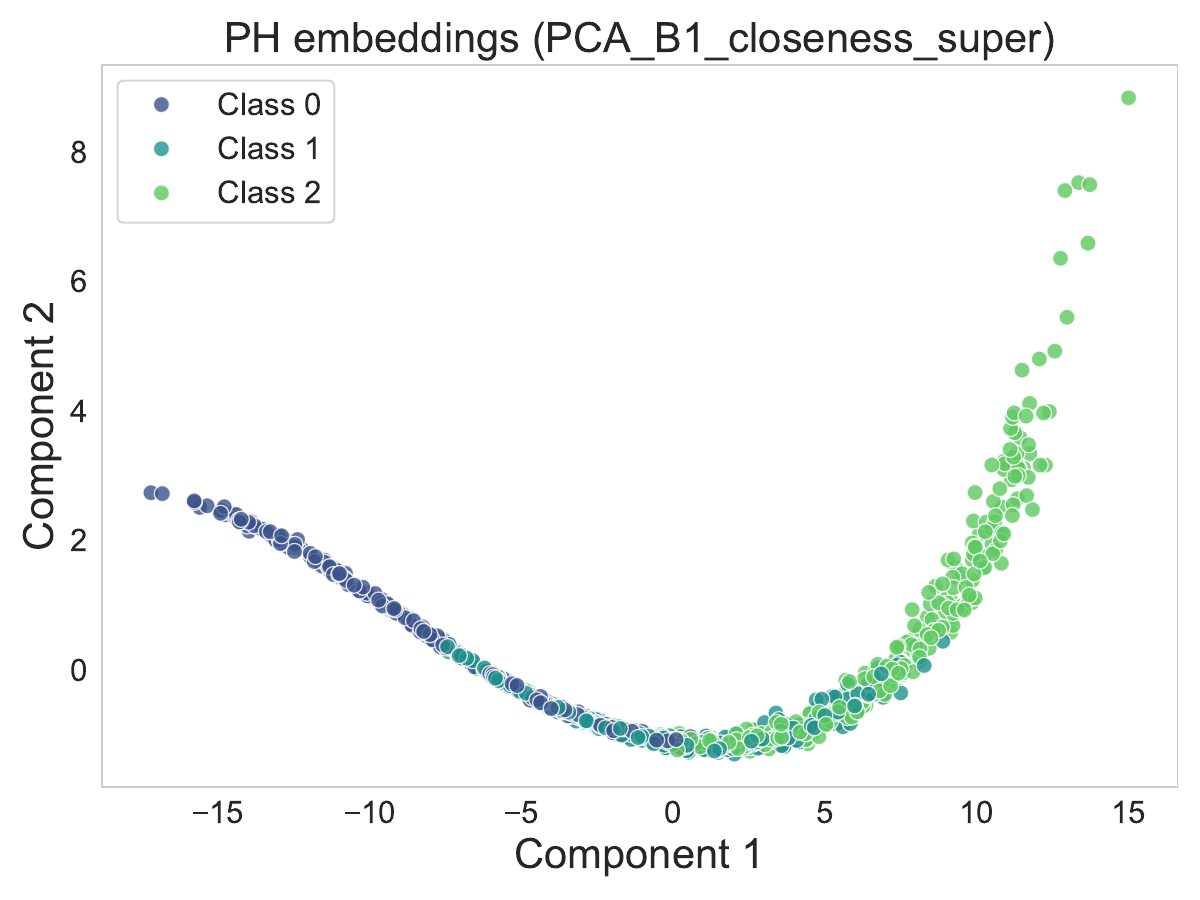}}
    \subfigure[\label{fig:degree_b0_sub}]{\includegraphics[width=0.24\linewidth]{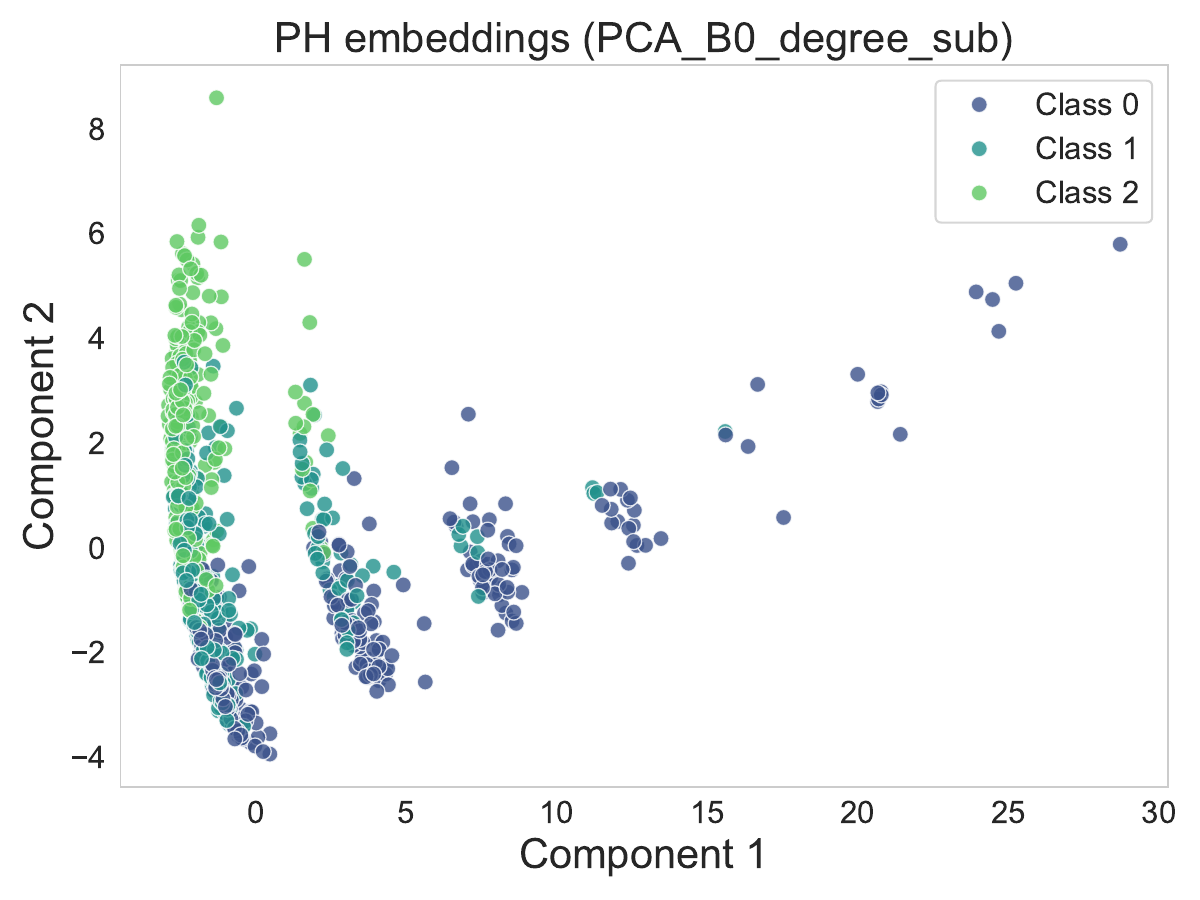}}
    \subfigure[\label{fig:degree_b1_sub}]{\includegraphics[width=0.24\linewidth]{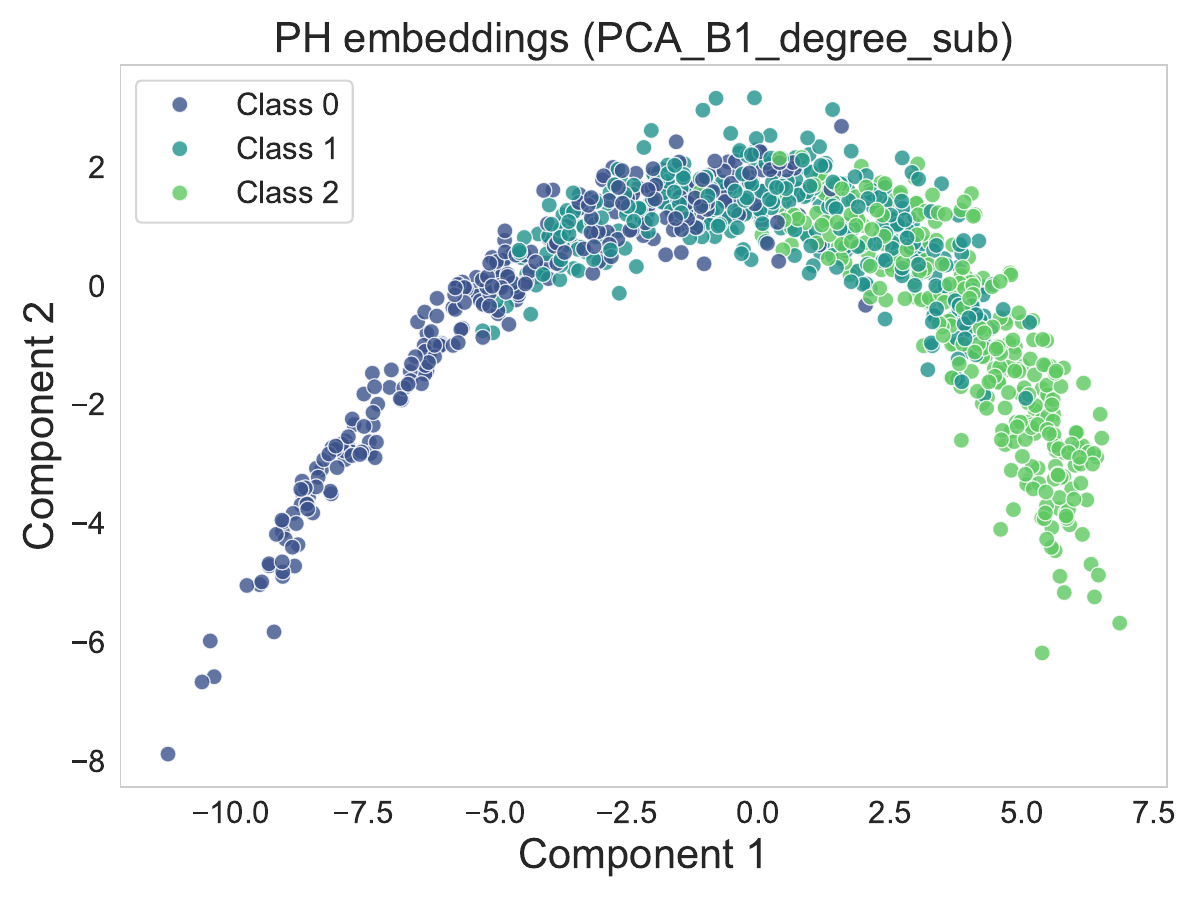}}
    \subfigure[\label{fig:degree_b0_super}]{\includegraphics[width=0.24\linewidth]{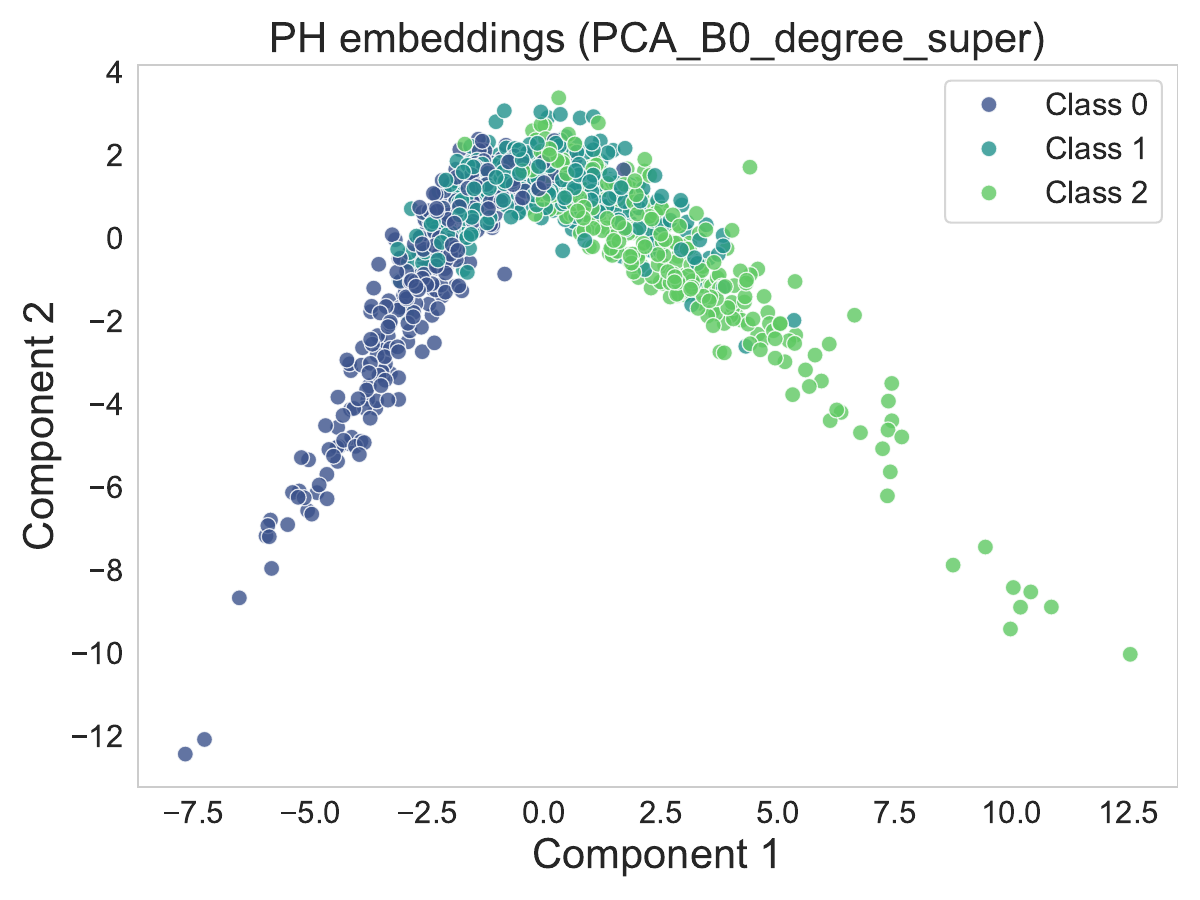}}
    \subfigure[\label{fig:degree_b1_super}]{\includegraphics[width=0.24\linewidth]{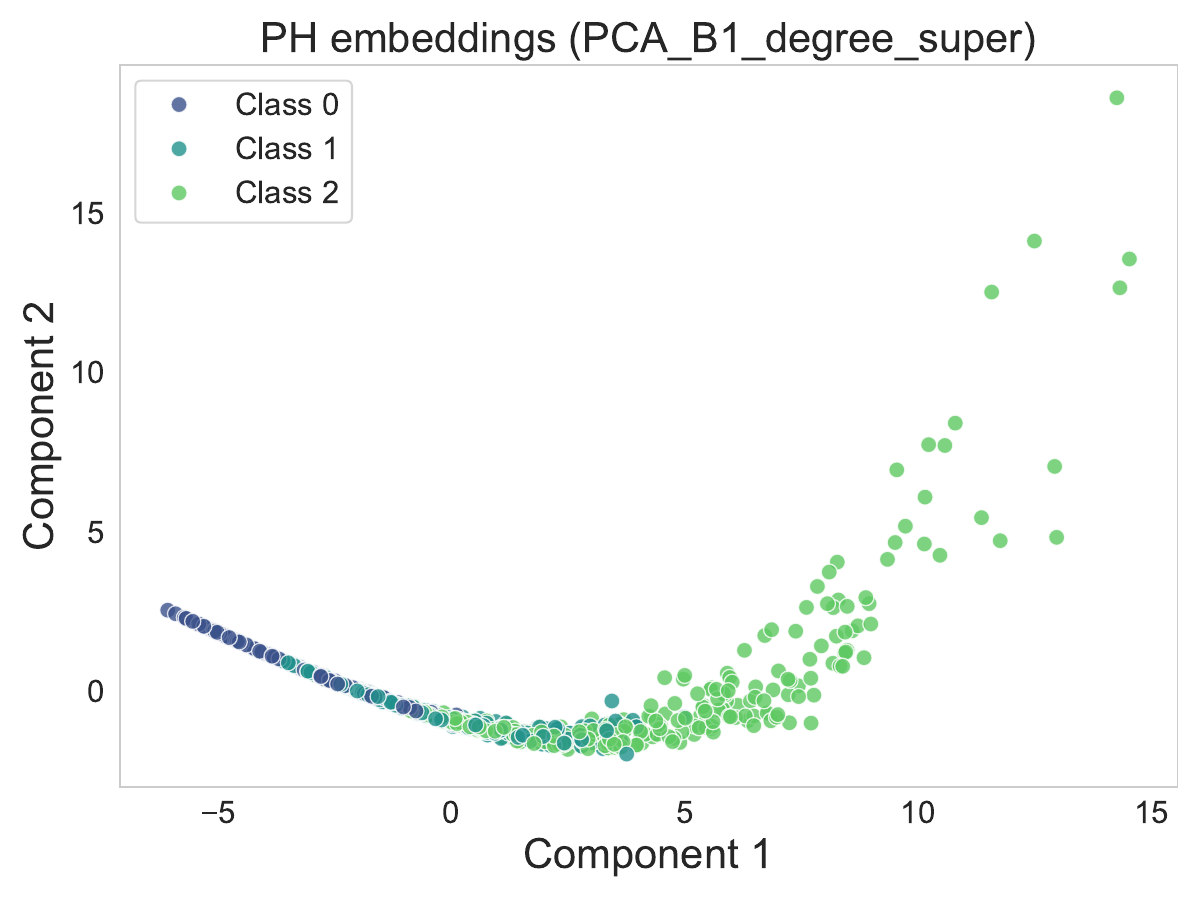}}
    \caption{\footnotesize Persistent Homology PCA plots embeddings (Betti 0 and Betti 1) showing component 1 vs component 2 for each function\textemdash closeness and degree when threshold=50 and the respective filtrations \textemdash sublevel and superlevel.}
    \label{fig:Erdos_pca1}
\end{figure*}

\begin{figure*}[htbp]
    \centering
    \subfigure[\label{fig:deg_cen_b0_super}]{\includegraphics[width=0.24\linewidth]{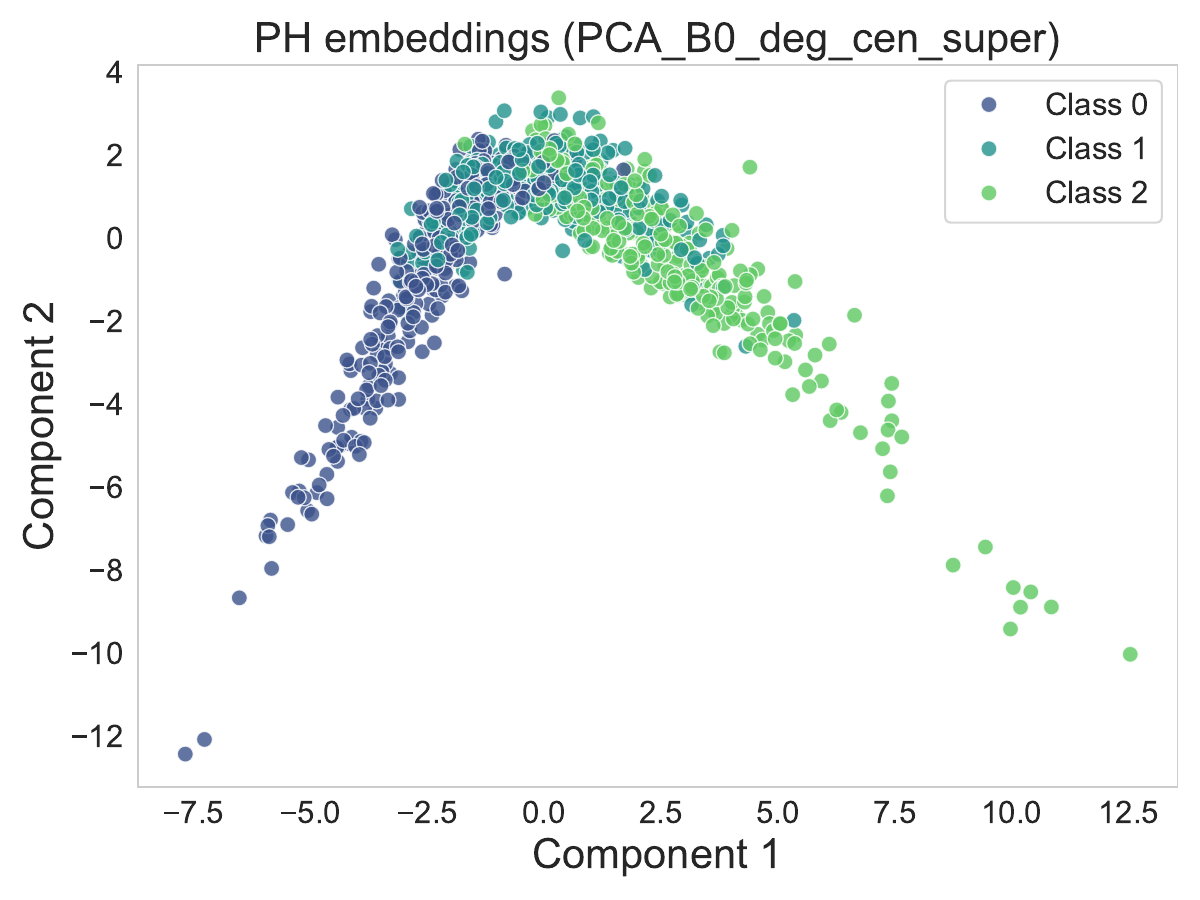}}
    \subfigure[\label{fig:deg_cen_b1_super}]{\includegraphics[width=0.24\linewidth]{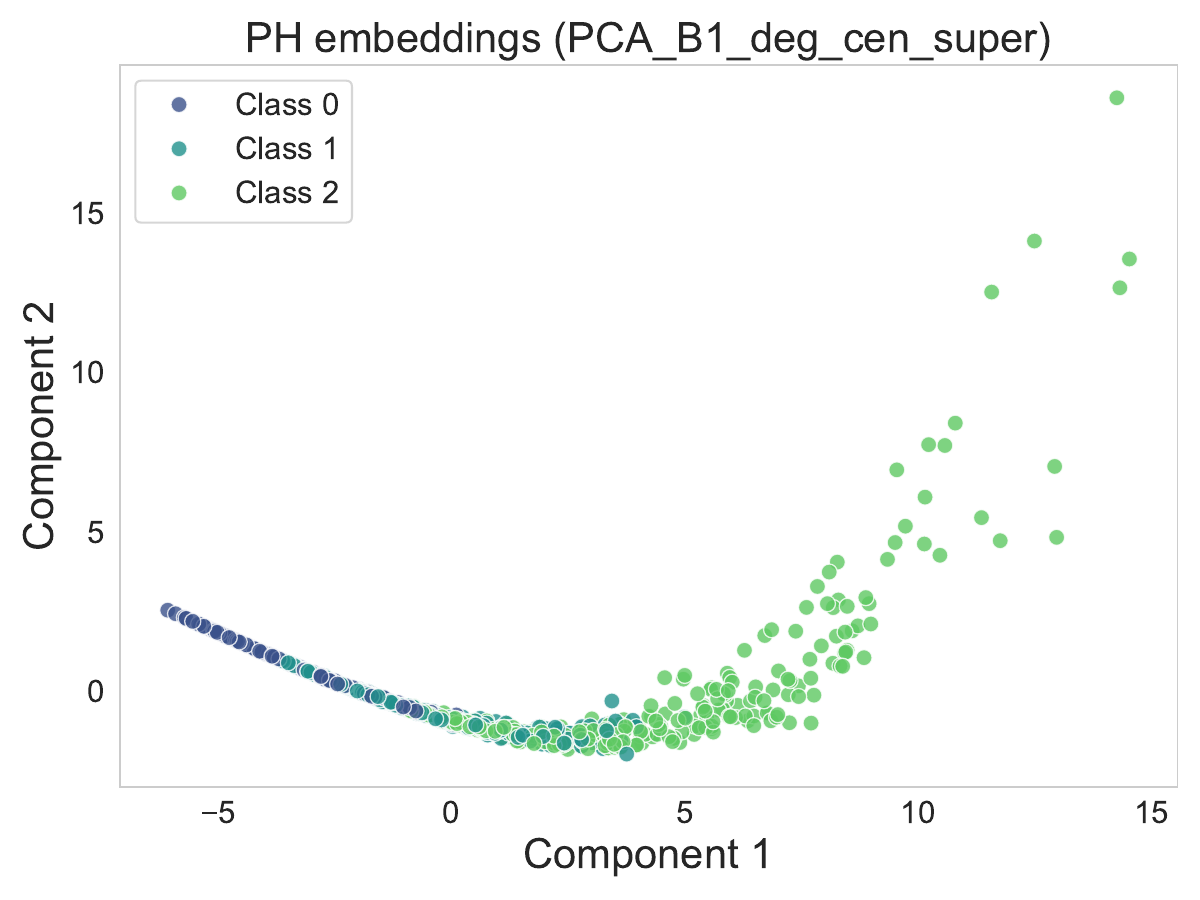}}
    \subfigure[\label{fig:deg_cen_b0_sub}]{\includegraphics[width=0.24\linewidth]{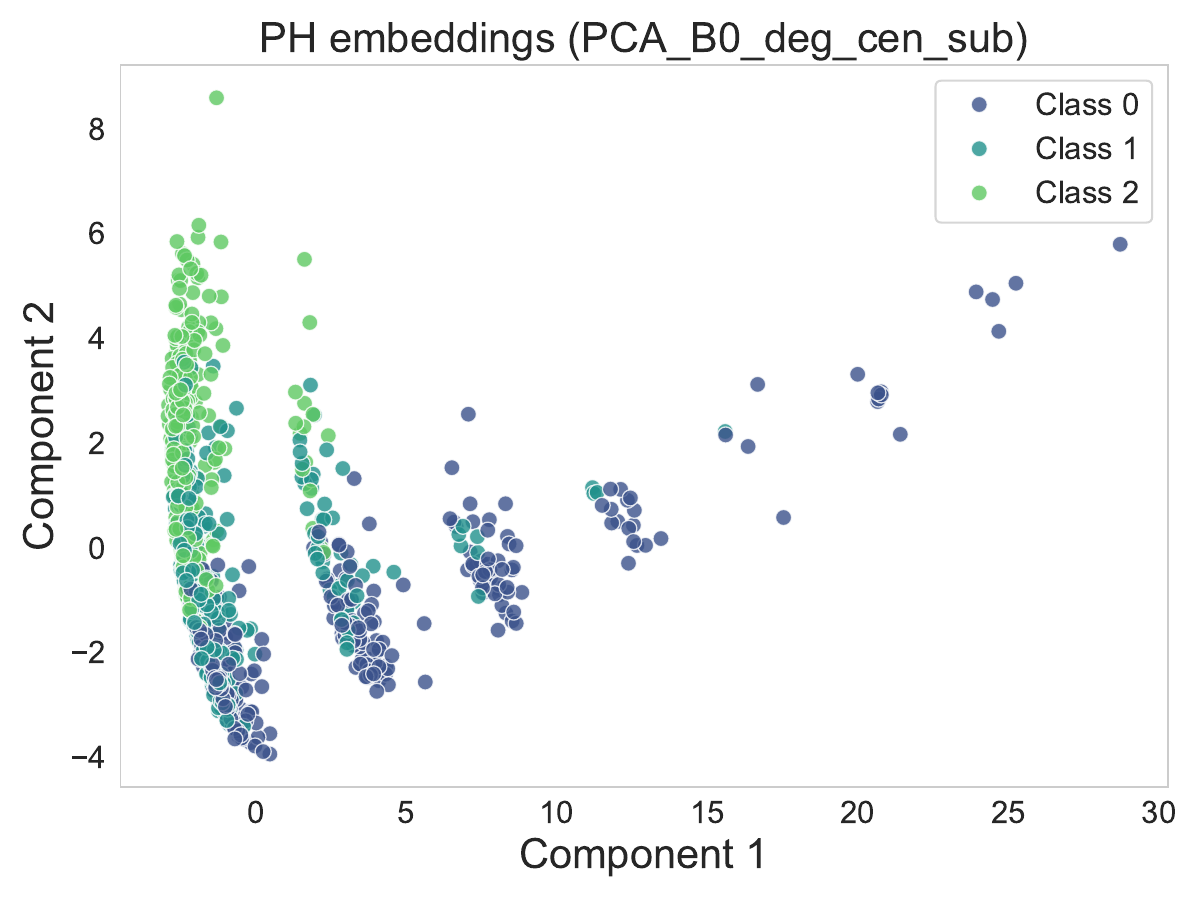}}
    \subfigure[\label{fig:deg_cen_b1_sub}]{\includegraphics[width=0.24\linewidth]{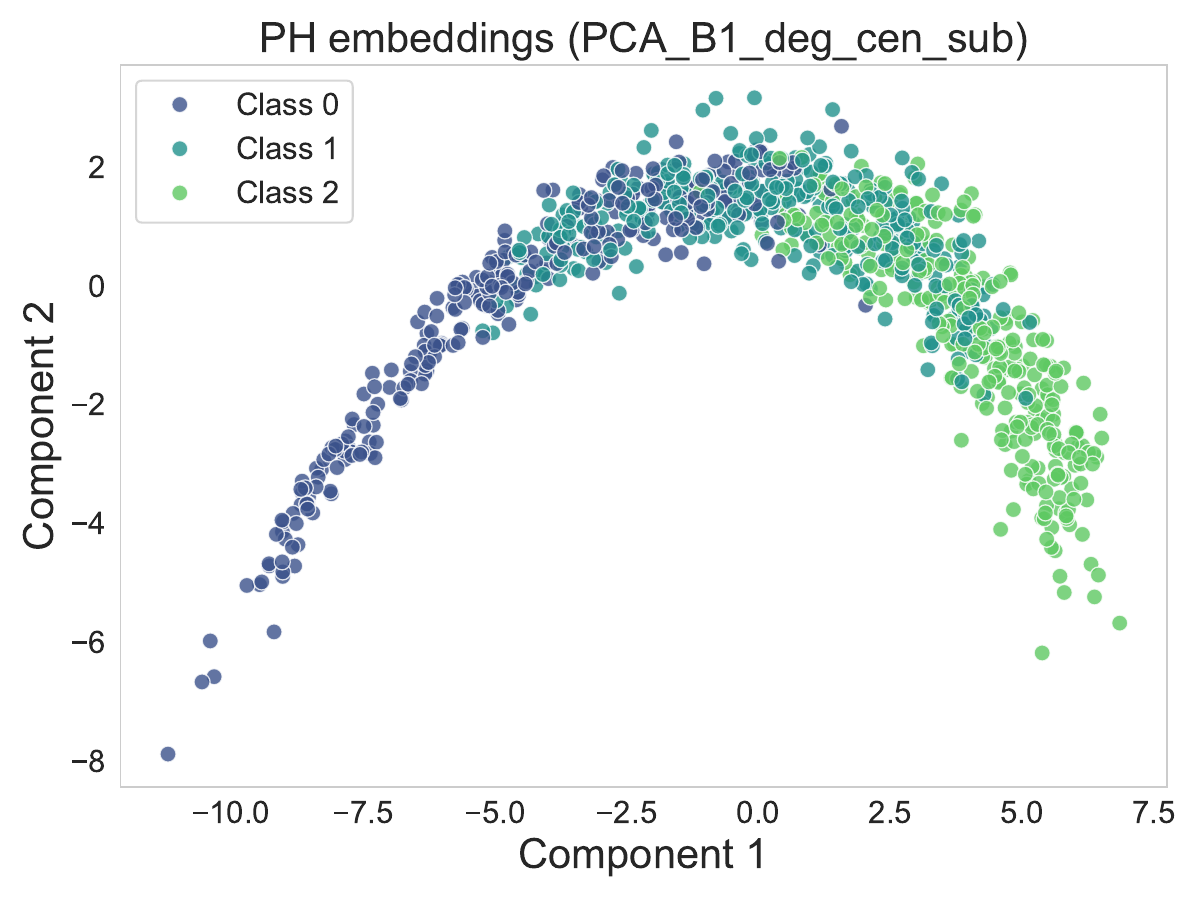}}
    \subfigure[\label{fig:popularity_b0_sub}]{\includegraphics[width=0.24\linewidth]{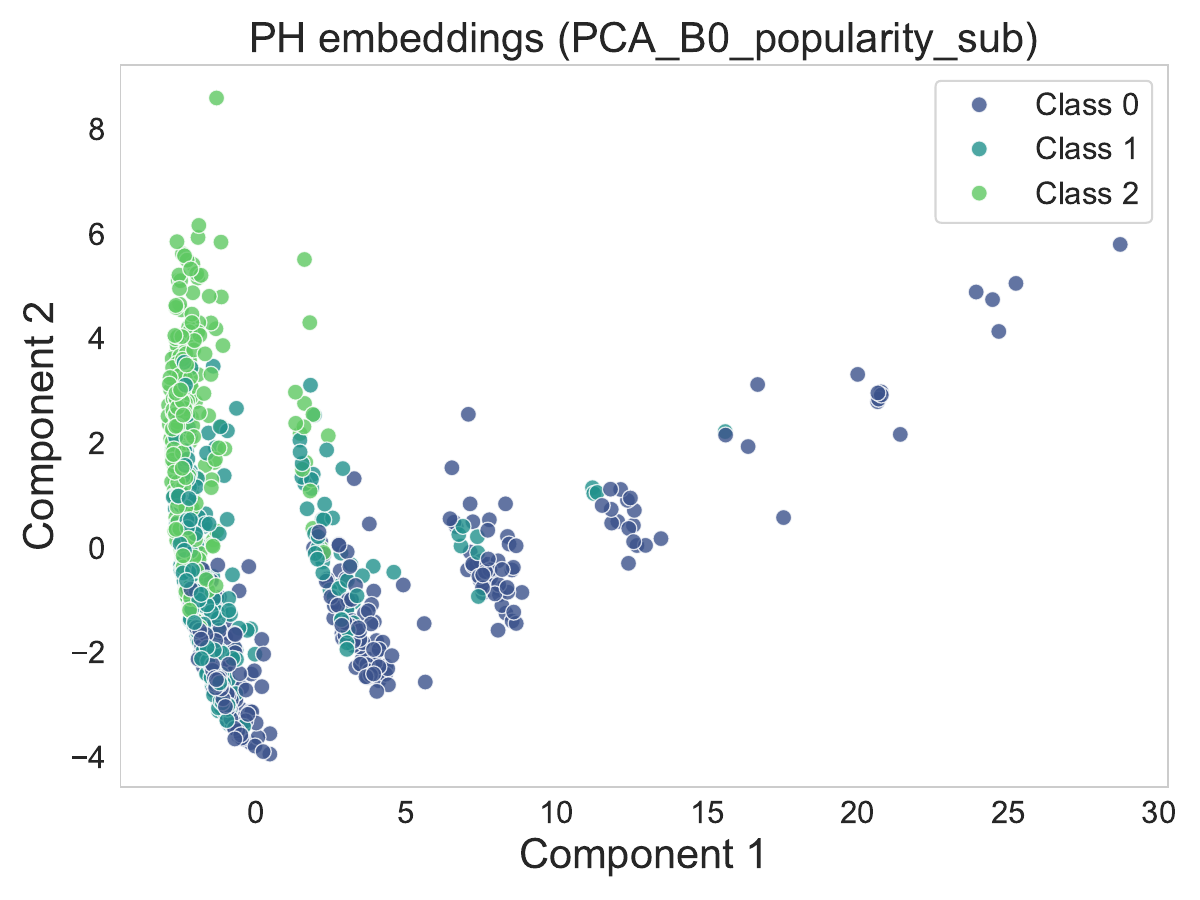}}
    \subfigure[\label{fig:popularity_b1_sub}]{\includegraphics[width=0.24\linewidth]{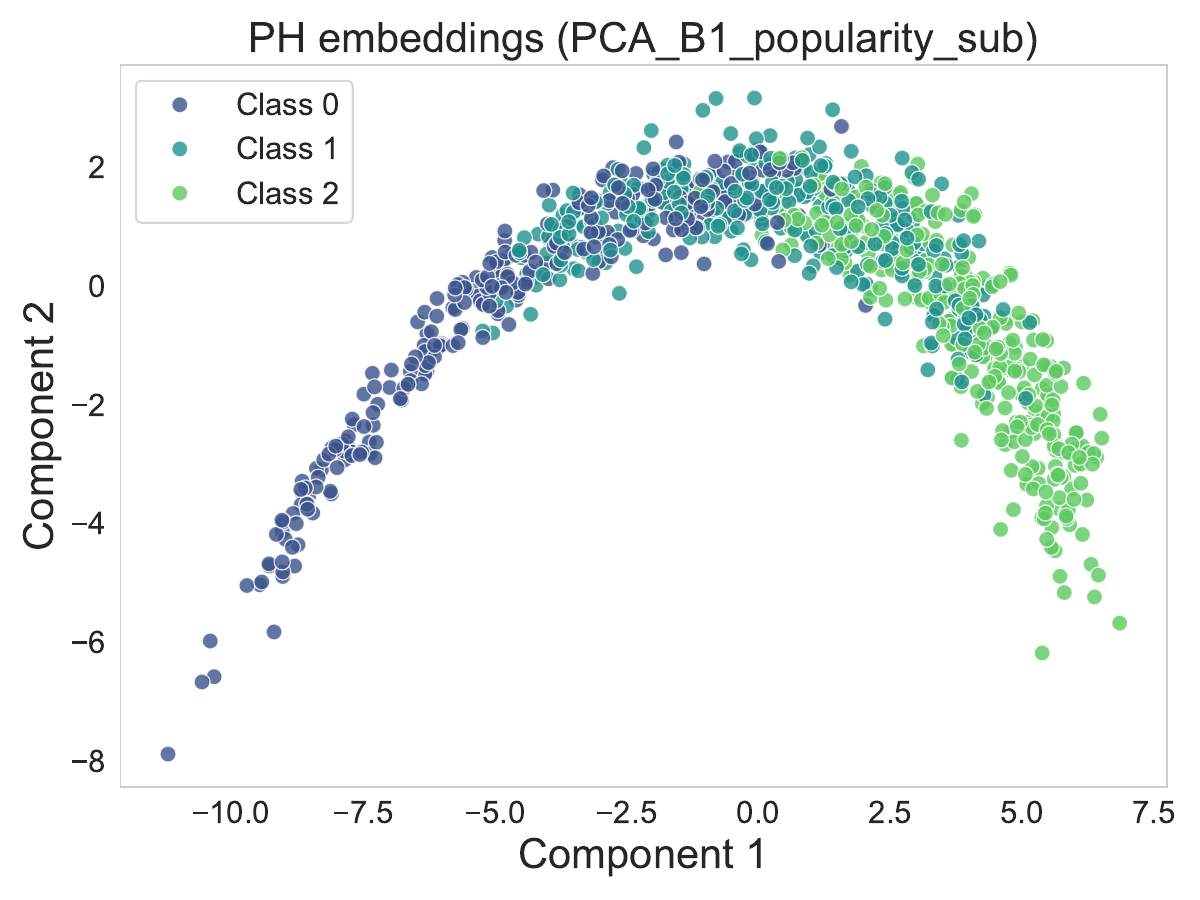}}
    \subfigure[\label{fig:popularity_b0_super}]{\includegraphics[width=0.24\linewidth]{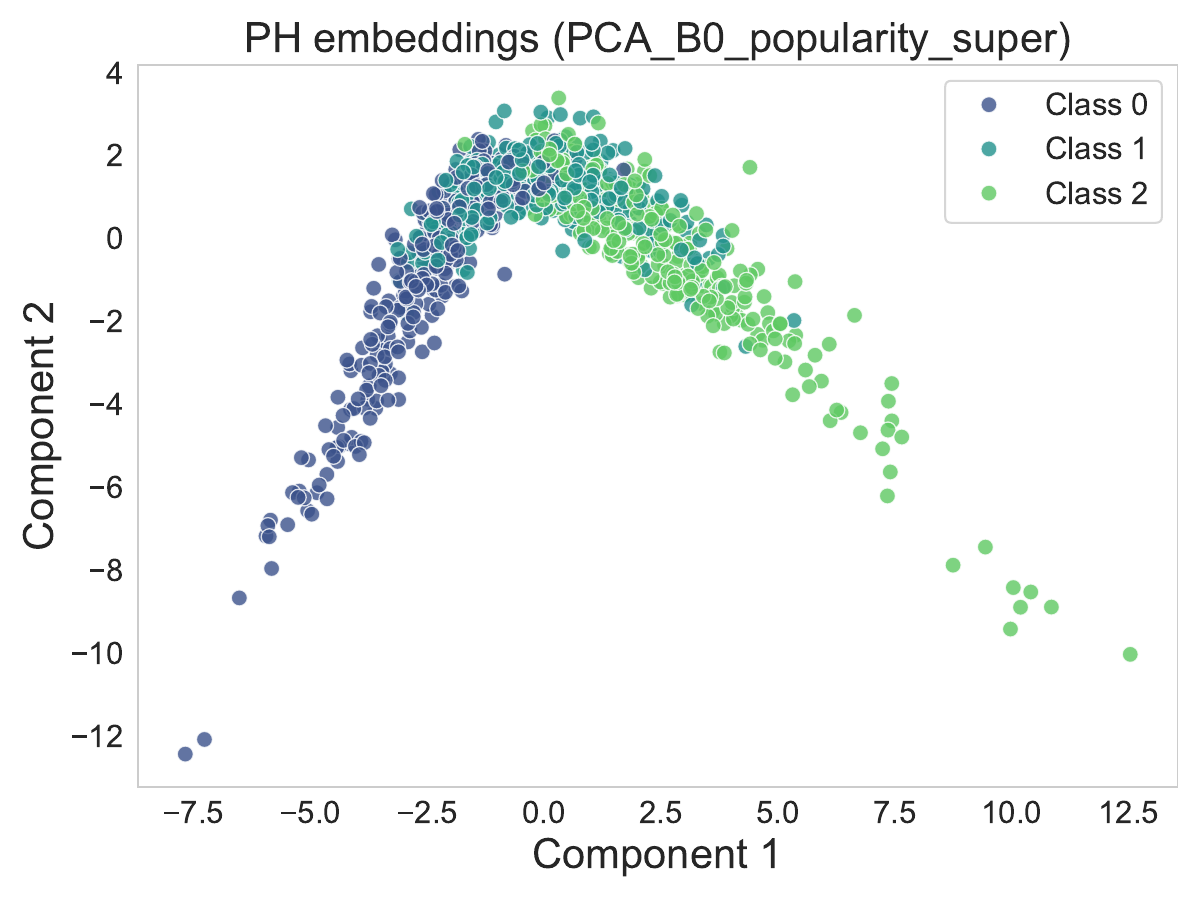}}
    \subfigure[\label{fig:popularity_b1_super}]{\includegraphics[width=0.24\linewidth]{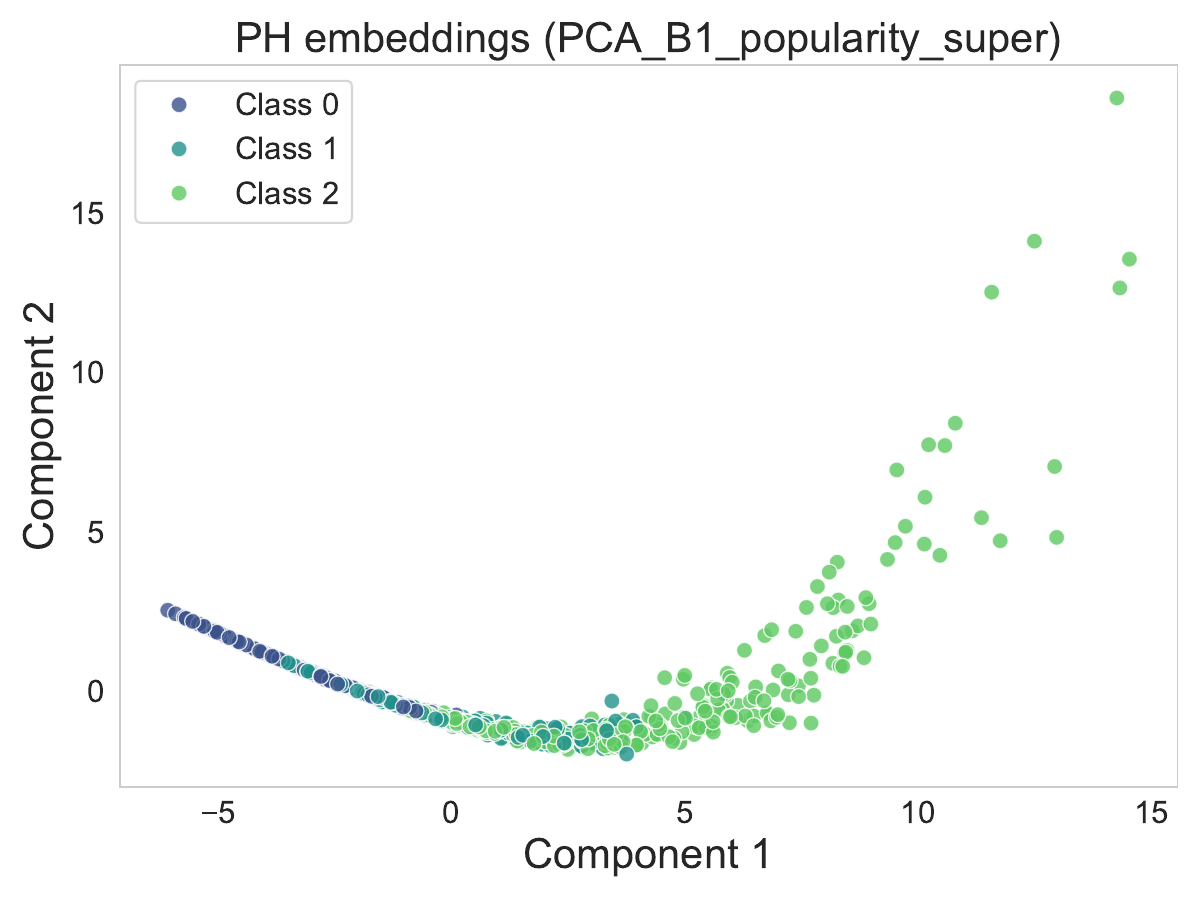}}
    \caption{\footnotesize Persistent Homology PCA plots embeddings (Betti 0 and Betti 1) showing component 1 vs component 2 for each function\textemdash degree centrality and popularity when threshold=50 and the respective filtrations \textemdash sublevel and superlevel.}
    \label{fig:Erdos_pca2}
\end{figure*}

\section{Broader Impact}

This work advances the field of graph representation learning by introducing a topological approach that is both interpretable and scalable. By leveraging the structural insights of persistent homology without incurring its prohibitive computational costs, \textit{TopER} enables more efficient and insightful analysis of complex graph-structured data. This has the potential to benefit a range of scientific and industrial domains where graph data is prevalent, including bioinformatics, social network analysis, and infrastructure monitoring. In particular, the ability to generate low-dimensional and interpretable embeddings could assist researchers in visual analytics, pattern discovery, and model debugging. At the same time, we acknowledge that the use of graph representations—especially in social networks or biological datasets—may carry ethical concerns around data privacy, representational bias, or unintended consequences of automated decision-making. While \textit{TopER} itself is an unsupervised and domain-agnostic method, its application must be governed by domain-specific ethical considerations. To support responsible use, we emphasize interpretability and transparency in our design, and we release our code and visualizations to promote reproducibility and community oversight.

\end{document}